\newcommand\algorithmicprocedure{\textbf{procedure}}
\newcommand{\algorithmicendprocedure}{\algorithmicend\ \algorithmicprocedure}
\newcommand\PROCEDURE[3][default]{%
  \ALC@it
  \algorithmicprocedure\ \textsc{#2}(#3)%
  \ALC@com{#1}%
  \begin{ALC@prc}%
}
\newcommand\ENDPROCEDURE{%
  \end{ALC@prc}%
  \ifthenelse{\boolean{ALC@noend}}{}{%
    \ALC@it\algorithmicendprocedure
  }%
}
\newenvironment{ALC@prc}{\begin{ALC@g}}{\end{ALC@g}}
\theoremstyle{plain}
\newtheorem{theorem}{Theorem}[section]
\newtheorem{lemma}[theorem]{Lemma}
\newtheorem{corollary}[theorem]{Corollary}
\theoremstyle{definition}
\newtheorem{assumption}[theorem]{Assumption}
\theoremstyle{remark}
\newtheorem{remark}[theorem]{Remark}
\icmltitlerunning{Hessian-Free High-Resolution Nesterov Acceleration For Sampling}
\newcommand{\bs}{\boldsymbol}
\newcommand{\qbold}{\bs{q}}
\newcommand{\pbold}{\bs{p}}
\newcommand{\nablax}{\nabla_{\bs{x}}}
\newcommand{\nablap}{\nabla_{\bs{p}}}
\newcommand{\nablaq}{\nabla_{\bs{q}}}
\newcommand{\deltaqp}{\begin{bmatrix}\qbold_t - \tilde{\qbold}_t \\ \pbold_t - \tilde{\pbold}_t \end{bmatrix}}
\newcommand{\phipsi}{\begin{bmatrix} \bs{\phi}_t \\ \bs{\psi}_t \end{bmatrix}}
\newcommand{\tqbold}{\tilde{\qbold}}
\newcommand{\tpbold}{\tilde{\pbold}}
\newcommand{\m}[1]{\begin{bmatrix} #1 \end{bmatrix}}
\newcommand{\polylog}{\tilde{\mathcal{O}}}
\newcommand*\circled[1]{\tikz[baseline=(char.base)]{
            \node[shape=circle,draw,inner sep=2pt] (char) {#1};}}
\DeclarePairedDelimiterX{\innerprod}[2]{\langle}{\rangle}{#1, #2}
\def\argmin{\mathop{\rm argmin}}
\begin{document}

\twocolumn[
\icmltitle{Hessian-Free High-Resolution Nesterov Acceleration For Sampling}



\icmlsetsymbol{equal}{*}

\begin{icmlauthorlist}


\icmlauthor{Ruilin Li}{xxx}
\icmlauthor{Hongyuan Zha}{yyy}
\icmlauthor{Molei Tao}{xxx}
\end{icmlauthorlist}

\icmlaffiliation{xxx}{School of Mathematics, Georgia Institute of Technology}
\icmlaffiliation{yyy}{School of Data Science, The Chinese University of Hong Kong, Shenzhen, Shenzhen Institute of Artificial Intelligence and Robotics for Society}

\icmlcorrespondingauthor{Molei Tao}{mtao@gatech.edu}

\icmlkeywords{MCMC, Nesterov Accelerated Gradient, Langevin-type dynamics, accelerated sampling}

\vskip 0.3in
]



\printAffiliationsAndNotice{}  

\begin{abstract}
Nesterov's Accelerated Gradient (NAG) for optimization has better performance than its continuous time limit (noiseless kinetic Langevin) when a finite step-size is employed \citep{shi2021understanding}. This work explores the sampling counterpart of this phenonemon and proposes a diffusion process, whose discretizations can yield  accelerated gradient-based MCMC methods. More precisely, we reformulate the optimizer of NAG for strongly convex functions (NAG-SC) as a Hessian-Free High-Resolution ODE, change its high-resolution coefficient to a hyperparameter, inject appropriate noise, and discretize the resulting diffusion process. The acceleration effect of the new hyperparameter is quantified and it is not an artificial one created by time-rescaling. Instead, acceleration beyond underdamped Langevin in $W_2$ distance is quantitatively established for log-strongly-concave-and-smooth targets, at both the continuous dynamics level and the discrete algorithm level. Empirical experiments in both log-strongly-concave and multi-modal cases also numerically demonstrate this acceleration.
\end{abstract}

\section{Introduction}
Optimization is a major machinery that drives the theory and practice of machine learning in recent years. Since the seminal work of \citet{nesterov1983method}, acceleration has played a key role in gradient-based optimization methods. A notable example is Nesterov's Accelerated Gradient (NAG), which is an instance of a more general family of ``momentum methods". NAG consists of multiple methods, including NAG-C and NAG-SC, respectively for convex and strongly convex functions. Both provably converge faster than vanilla gradient descent (GD) in their corresponding setups \citep{nesterov1983method, nesterov2013introductory}. Newer perspectives of acceleration continue to be revealed, e.g., \cite{su2014differential, wibisono2016variational, wilson2016lyapunov, hu2017dissipativity, attouch2018fast,shi2021understanding}, many based on the interplay between continuous and discrete times. This work aims at turning NAG-SC into a sampler based on this interplay.

In fact, approaches for sampling statistical distributions, such as gradient-based Markov Chain Monte Carlo (MCMC) methods, are also of great importance in machine learning, for example due to their links to statistical inference and abilities to represent uncertainties lacking in optimization-based methods. Although not entirely the same thing, optimization and sampling are closely related: besides seeing a large class of sampling dynamics as optimization dynamics with additional noise, viewing sampling as optimization in probability space also led to important discoveries (e.g., \citealp{jordan1998variational, liu2016stein, dalalyan2017further, wibisono2018sampling, zhang2018policy, frogner2018approximate, chizat2018global, chen2018unified, ma2019there, erdogdu2020convergence}). In fact, an unadjusted Euler-Maruyama discretization of overdamped Langevin dynamics (abbreviated as OLD here) is commonly considered as the analog of GD in sampling (although many other discretizations are also possible), and often referred to as Unadjusted Langevin Algorithm (ULA) \citep{roberts1996exponential} and/or Langevin Monte Carlo (LMC). The convergence properties of the continuous dynamics of OLD, as well as asymptotic and non-asymptotic analyses of its discretizations have been extensively studied (e.g., \citealp{roberts1996exponential, villani2008optimal, pavliotis2014stochastic,  dalalyan2017theoretical, durmus2016sampling, dalalyan2017further, durmus2019analysis, durmus2019high, vempala2019rapid, cheng2018convergence, dwivedi2019log, ma2019sampling, chewi2020optimal, erdogdu2020convergence}).

Meanwhile, the notion of acceleration is less quantified in sampling compared to that in optimization, although attention has been rapidly building up. Along this direction, one line is based on diffusion processes such as underdamped Langevin dynamics (ULD). For example, the convergence and nonasymptotics of discretized ULD have been studied by \citet{cheng2017underdamped, dalalyan2018sampling, ma2019there}, and were demonstrated provably faster than discretized OLD in suitable setups. These are not only great progresses but also forming perspectives complementary to the extensive studies of the convergence of continuous ULD in the mathematical community (e.g, \citealp{mattingly2002ergodicity, cao2019explicit, dolbeault2009hypocoercivity, dolbeault2015hypocoercivity, villani2009hypocoercivity, eckmann2003spectral, baudoin2013bakry, eberle2019couplings}). Another important line of research is related to accelerating particle-based approaches for optimization in probability spaces \citep{liu2019understanding, pmlr-v97-taghvaei19a, wang2019accelerated}, although we note there is no clear boundary between these two lines (e.g., \citealp{leimkuhler2018ensemble}). Additional interesting ideas also include \cite{chen2018accelerating,deng2020non,ding2021random,li2022mirror,liang2022proximal}. In general, it has been known that adding an irreversible part to the reversible dynamics of OLD\footnote{For irreversible-acceleration \emph{not} from OLD, see e.g., \cite{bierkens2019zig,bouchard2018bouncy}.} accelerates its convergence (e.g., \citealp{hwang2005accelerating,lelievre2013optimal,ohzeki2015langevin,rey2015irreversible,duncan2016variance}), and this work can be viewed to be under this umbrella. Note, though, the discretization of an accelerated continuous process is also important, and it will also be discussed.

Specifically, we propose a class of accelerated gradient-based MCMC algorithms termed HFHR. It is motivated by a simple question: how to appropriately inject noise to NAG algorithm \textbf{in discrete time}, so that it is turned into an algorithm for momentum-accelerated sampling? Note we don't add noise to the learning-rate$\to 0$ limit of NAG (this has been studied in \citealp{ma2019there}), because a finite-step-size discretization of this limiting ODE may not converge as fast as NAG with the same learning rate. However, we will still use continuous dynamics as intermediate steps.

More precisely, our first step is to combine existing tools to prepare a non-asymptotic formulation for the later steps. The goal is to better account for NAG's behavior when a finite (not infinitesimal) learning rate is used. As pointed out in \citet{shi2021understanding}, a low-resolution limiting ODE \citep{su2014differential}, albeit being a milestone leading to important research (e.g, \citealp{wibisono2016variational}), does not fully capture the acceleration enabled by NAG --- for example, it can't distinguish between NAG and another momentum method of heavy ball \citep{polyak1964some}. A reason is, the low-resolution ODE describes the $h\to 0$ limit of NAG, but in practice NAG uses a finite (nonzero) $h$. High-resolution ODE was thus proposed to include additional $\mathcal{O}(h)$ terms to account for the finite $h$ effect \citep{shi2021understanding}. 
The original form of high-resolution ODE involves Hessian of the objective function, which is computationally expensive to evaluate and store for high-dimensional problems, but this is a small obstacle that can be overcome (see e.g., \citealp{alvarez2002second, attouch2020first}), and we'll be able to derive a High-Resolution and Hessian-Free limiting ODE for NAG.

Then we replace the high-resolution term's coefficient in the HFHR ODE by a hyperparameter $\alpha \ge 0$, and then add noise to the resulting ODE in a specific way, which turns it into an SDE suitable for the sampling purpose. This SDE will be termed as HFHR dynamics.

To obtain an actual algorithm, the HFHR SDE is then discretized. We will see, both theoretically and empirically, that nonzero $\alpha$ can lead to accelerated convergence of the sampling algorithm; this acceleration is \textbf{not} an artificial consequence of time-rescaling, which would not give acceleration after discretization with an appropriate step size. For demonstrating this, we will be primarily  working with just a 1st-order discretization, which uses 1 (full-)gradient evaluation per iteration and thus suits particularly well low-to-medium-accuracy downstream applications; comparisons will be mainly against other methods that use 1 gradient per step as well. However, since high-order discretizations can improve statistical accuracy and even the speed of convergence (see e.g., \citealp{chen2015convergence,li2019stochastic,shen2019randomized}), 
we will also provide a high-order discretization in Appendix \ref{sec:RMA_HFHR}, which again exhibits acceleration and suits high-accuracy applications.

Our presentation is as follows: After detailing the construction of HFHR, we will analyze its convergence, at both the continuous level (HFHR dynamics) and the discrete level (HFHR algorithm). For precise theoretical results, we will consider the
setup of log-strongly-concave target distributions, which are commonly considered in the literature \citep{kim2016global, bubeck2018sampling, dalalyan2017theoretical, dalalyan2018sampling, dwivedi2019log, shen2019randomized}. 
The additional acceleration of HFHR when compared to ULD in continuous time will be demonstrated explicitly in Thm.\ref{thm:exp_convergence_coupling}. For our discretized HFHR algorithm, a non-asymptotic error bound will be obtained (Thm.\ref{thm:wasserstein}), which confirms that the additional acceleration in continuous time carries through to the discrete territory. Finally, numerical experiments are provided, verifying the validity and tightness of our theoretical results, and empirically showing HFHR remains advantageous for the nonconvex and high-dim. problems, e.g., Bayesian Neural Networks.

The main contribution of this article is the idea of turning NAG-SC optimizer into a sampler, and the introduction of a new dynamics that is neither overdamped or underdamped Langevin. Theoretical analyses (e.g., Thm.\ref{thm:wasserstein}, Cor.\ref{corollary:iteration complexity} \& Rmk.\ref{remark:iteration_complexity}) and numerical experiments (Sec.\ref{sec:experiment}) are provided for quantifying the effectiveness of this idea.

\section{Background: Langevin Dynamics}
Consider sampling from Gibbs measure $\mu$ whose density is $d\mu = \frac{1}{\int e^{- f(\bs{y})}d\bs{y}} e^{-f(\bs{x})} d\bs{x}$,
where $f: \mathbb{R}^d \mapsto \mathbb{R}$ will be called the potential function. Two diffusion processes popular for sampling (and modeling important physical processes too) are named after Langevin. One is overdamped Langevin dynamics (OLD), and the other is kinetic Langevin dynamics (abbreviated as ULD to comply with a convention of calling it underdamped Langevin). They are respectively given by
\begin{align*}
    &\mbox{(OLD)} \quad d\qbold_t = -\nabla f(\qbold_t)dt + \sqrt{2}d\bs{W}_t 
    \\
    &\mbox{(ULD)} \quad
    \begin{cases}
        d\qbold_t = \pbold_t dt\\
        d\pbold_t = -\gamma \pbold_t dt - \nabla f(\qbold_t) dt + \sqrt{2\gamma} d\bs{B}_t  
    \end{cases}         
\end{align*}
where $\qbold_t, \pbold_t \in \mathbb{R}^d$, $\bs{W}_t, \bs{B}_t$ are i.i.d. Wiener processes in $\mathbb{R}^d$, and $\gamma > 0$ is a friction coefficient. Under mild conditions (e.g., \citealp{pavliotis2014stochastic}), OLD converges to $\mu$ and ULD converges to $ d\pi(\qbold, \pbold) = d\mu(\qbold) \nu(\pbold) d\pbold, \, \text{where } \nu(\pbold) = (2\pi)^{-\frac{d}{2}} e^{ -\nicefrac{\|\pbold\|^2}{2}}$
, so its $\qbold$ marginal follows $\mu$.

OLD and ULD are closely related. In fact, OLD is the $\gamma\to\infty$ overdamping limit of ULD after time dilation (e.g., \citealp{pavliotis2014stochastic}). However, OLD is a reversible Markov process but ULD is irreversible, and thus both their equilibrium and non-equilibrium statistical mechanics are different, although closely related too. We will only focus on the convergence to statistical equilibrium (see e.g., \citealp{souza2019metastable} for non-equilibrium aspects).

Many celebrated approaches exist for establishing the exponential convergence (a.k.a. geometric ergodicity) of OLD, including the seminal work of \cite{roberts1996exponential}
, the ones using spectral gap (e.g., \citealp[Lemma 1]{dalalyan2017theoretical}), synchronous coupling (\citealp[p33-35]{villani2008optimal}; \citealp[Proposition 1]{durmus2019high}), functional inequalities such as Poincar\'e's inequality \citep[Theorem  4.4]{pavliotis2014stochastic} and log Sobolev inequality \citep[Theorem  1]{vempala2019rapid}. There are also fruitful results for ULD, including the ones leveraging Lyapunov function \citep[Theorem  3.2]{mattingly2002ergodicity}, hypocoercivity \citep{villani2009hypocoercivity,dolbeault2009hypocoercivity, dolbeault2015hypocoercivity, roussel2018spectral}, coupling (\citealp[Theorem 5]{cheng2017underdamped}; \citealp[Theorem 1]{dalalyan2018sampling}; \citealp[Theorem 2.3]{eberle2019couplings}), LSI \citep[Section 3.1]{ma2019there}, modified Poincar\'e's inequality \citep[Theorem  1]{cao2019explicit}, and spectral analysis \citep{kozlov1989effective, eckmann2003spectral}.


The study of asymptotic convergence of discretized OLD dates back to at least the 1990s \citep{meyn1994computable, roberts1996exponential}. The non-asymptotic analysis of LMC discretization of OLD 
can be found in \cite{dalalyan2017theoretical} and it shows the discretization achieves $\epsilon$ error, in TV distance, in $\polylog(\nicefrac{d}{\epsilon^2})$ steps. Subsequent results include $\polylog(\nicefrac{d}{\epsilon^2})$ in $W_2$ \citep{durmus2016sampling}, $\polylog(\nicefrac{d}{\epsilon})$ in KL \citep{cheng2018convergence}, $\polylog(\nicefrac{d}{\epsilon})$ in $W_2$ under additional 3rd-order regularity \citep{durmus2019high}, and $\polylog(\nicefrac{\sqrt{d}}{\epsilon})$ in $W_2$ under additional 3rd-order regularity \citep{li2021sqrt}.
For discretized ULD, one has  $\polylog(\nicefrac{\sqrt{d}}{\epsilon})$ iteration complexity in $W_2$ \citep{cheng2017underdamped, dalalyan2018sampling} and   $\polylog(\nicefrac{\sqrt{d}}{\sqrt{\epsilon}})$ in KL \citep{ma2019there}. 
ULD is still generally conceived to be advantageous over OLD and sometimes understood as its momentum-accelerated version.



\section{Notations and Conditions}
\label{sec:notation}

We will use 2-Wasserstein distance to quantify convergence, i.e. $W_2(\mu_1, \mu_2) = \left(\inf_{\pi \in \Pi(\mu_1, \mu_2)} \mathbb{E}_{(\bs{X}, \bs{Y}) \sim \pi} \norm{\bs{X} - \bs{Y}}^2 \right)^\frac{1}{2}$ where $\Pi(\mu_1, \mu_2)$ is the set of all couplings of $\mu_1$ and $\mu_2$.

Assume WLOG that  $ \bs{0} \in \argmin_{\bs{x} \in \mathbb{R}^d} f(\bs{x})$.
The following condition will also be frequently used.
\begin{assumption}\label{asp:standard} (\textbf{Standard Strong-Convexity and Smoothness Condition})
    A function $f \in \mathcal{C}^1(\mathbb{R}^d,\mathbb{R})$ is $m$-stronly-convex and $L$-smooth, if there exist constants $m, L > 0$ such that $\forall  \bs{x}, \bs{y} \in \mathbb{R}^d$, we have 
    \begin{align*}
        & \|\nabla f(\bs{y}) - \nabla f(\bs{x})\| \le L\|\bs{y} - \bs{x}\| \qquad \mbox{ and } \\
        & f(\bs{y})  \geq f(\bs{x}) + \innerprod{\nabla f(\bs{x})}{\bs{y} - \bs{x}} + \frac{m}{2} \|\bs{y} - \bs{x}\|^2
    \end{align*}
\end{assumption}
For $f\in\mathcal{C}^2$, this is equivalent to $mI \preceq \nabla^2 f \preceq LI$. 

The condition number of $f$ is defined as $\kappa := \nicefrac{L}{m}$.

\section{The Construction of HFHR dynamics}
HFHR is obtained by formulating NAG-SC as a Hessian free high-resolution ODE, lifting the high-resolution term's coefficient as a free parameter, and adding appropriate noises.


More precisely, let's start with NAG-SC algorithm:
\begin{align}
    \bs{x}_{k+1} &= \bs{y}_k - s \nabla f(\bs{y}_k) \label{eq_NAG_SC_1} \\
    \bs{y}_{k+1} &= \bs{x}_{k+1} + c(\bs{x}_{k+1} - \bs{x}_k) \label{eq_NAG_SC_2}
\end{align}
where $s$ is the learning rate (also known as step size), and $c = \frac{1 - \sqrt{ms}}{1 + \sqrt{ms}}$ is a constant based on $s$ and the strong convexity coefficient $m$ of $f$; the method also works for non-strongly-convex $f$ though.

A high-resolution ODE description of Eq.\eqref{eq_NAG_SC_1} \& \eqref{eq_NAG_SC_2} is obtained in \citet[Section 2]{shi2021understanding}
\begin{equation}\label{eq:high_resolution_hessian}
    \ddot{\bs{y}} + \sqrt{s}\left(\frac{2(1-c)}{s(1+c)} + \nabla^2 f(\bs{y})\right)\dot{\bs{y}} + \frac{2}{1+c}\nabla f(\bs{y}) = \bs{0}, 
\end{equation}
which can better account for the effect of non-infinitesimal $s$ than the $s\to 0$ limit (note $c$ depends on $s$). However, in this original form, Hessian of $f$ is involved, which is expensive to compute and store for high-dimensional problems.

To obtain a Hessian-free high-resolution ODE description of Eq.\eqref{eq_NAG_SC_1} \& \eqref{eq_NAG_SC_2},  we first turn the iteration into a `mechanical' version by introducing position $\qbold_k = \bs{y}_k$ and momentum $\bs{p}_k=\nicefrac{(\bs{y}_k-\bs{x}_k)}{h}$. Replacing $\bs{x}_{k+1}$ in \eqref{eq_NAG_SC_1} and the first $\bs{x}_{k+1}$ in \eqref{eq_NAG_SC_2} by $\qbold_{k+1}$ and $\bs{p}_{k+1}$, the second $\bs{x}_{k+1}$ in \eqref{eq_NAG_SC_2} by $\bs{q}_k - s\nabla f(\bs{q}_k)$, and the $\bs{x}_k$ in \eqref{eq_NAG_SC_2} by $\bs{q}_k$ and $\bs{p}_k$, we obtain
\[ 
    \begin{cases}
        \bs{p}_{k+1} = c \bs{p}_k - c \frac{s}{h} \nabla f(\bs{q}_k) \\
        \bs{q}_{k+1} = \bs{q}_k + h \bs{p}_{k+1} - s \nabla f(\bs{q}_k)
    \end{cases} 
\]
Now, choose $\gamma$, $\alpha$ and $h$ as
$ 
    h = \sqrt{cs},\gamma = \frac{1-c}{h}, \alpha = \frac{s}{h}
$.
We see that $\gamma>0$, $\alpha >0$, and NAG-SC exactly rewrites as
\begin{equation}\label{eq_NAG_SC_rewritten}
    \begin{cases}
        \pbold_{k+1} = \pbold_k -h \gamma \pbold_k - h \nabla f(\qbold_k) \\
        \qbold_{k+1} = \qbold_k + h \pbold_{k+1} - h \alpha \nabla f(\qbold_k)
    \end{cases}.
\end{equation}
Note the technique for bypassing the Hessian without introducing any approximation is already well studied in the literature (e.g., \citealp{alvarez2002second, attouch2020first}).

So far, both $h$ and $\alpha$ are actually determined by the hyperparameter $s$ of NAG-SC. However, if we now consider $\alpha$ as an independent variable (i.e., `lift' it) and let $h\to 0$, we see \eqref{eq_NAG_SC_rewritten} is a 1st-order discretization (with step size $h$) of
\begin{equation}
\begin{cases}
    \dot{\qbold} = \pbold - \alpha \nabla f(\qbold) \\
    \dot{\pbold} = -\gamma \pbold - \nabla f(\qbold)
\end{cases}.
\label{eq_HFHRconti}
\end{equation}
Note $\alpha$, if inherited from NAG-SC, should be $\alpha=\sqrt{s/c}=\mathcal{O}(h)$, which, in a low-resolution ODE, will be discarded, and this eventually leads to ULD rather than HFHR. However, we now allow it to be a free parameter and will see that $\alpha\neq \mathcal{O}(h)$ can be advantageous.

Before quantifying these advantages, we finish the construction by appropriately injecting Gaussian noises to \eqref{eq_HFHRconti}. This is just like how OLD can be obtained by adding noise to gradient flow. The right amount and structure of noise turn the ODE into a Markov process that can serve the purpose of sampling, and the detailed form of our noise is given by:
\begin{equation}\label{eq:HFHR}
    \begin{cases}
        d\qbold_t = (\pbold_t - \alpha \nabla f(\qbold_t))dt + \sqrt{2\alpha} d\bs{W}_t \\
        d\pbold_t = (-\gamma \pbold_t - \nabla f(\qbold_t))dt + \sqrt{2\gamma} d\bs{B}_t
    \end{cases}.
\end{equation}
Here $\alpha \ge 0 , \gamma > 0$ are constant parameters, and $\bs{W}_t, \bs{B}_t$ are independent standard Brownian motions in $\mathbb{R}^d$. This irreversible process will be named as \textbf{Hessian-Free High-Resolution}(\textbf{HFHR}) dynamics. We write it as HFHR$(\alpha,\gamma)$ to emphasize the dependence on $\alpha$ and $\gamma$ when needed.

Substitution into Fokker-Planck PDE shows HFHR dynamics is unbiased (proof in Appendix \ref{sec:proofOfInvariantDistribution}):
\begin{theorem}\label{thm:invariant_distribution}
    $\pi$ is the invariant distribution of HFHR described in Eq.\eqref{eq:HFHR}, just like ULD.
\end{theorem}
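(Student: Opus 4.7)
The plan is to verify that $\pi$ is a stationary solution of the Fokker--Planck equation associated to the HFHR SDE. Writing $H(\qbold,\pbold) = f(\qbold) + \tfrac{1}{2}\|\pbold\|^2$ so that $\pi \propto e^{-H}$, the Fokker--Planck operator $\mathcal{L}^*$ applied to any density $\rho$ reads
\begin{equation*}
    \mathcal{L}^*\rho = -\nablaq \cdot \big[(\pbold - \alpha\nabla f)\rho\big] + \alpha\,\Delta_\qbold \rho + \nablap \cdot \big[(\gamma\pbold + \nabla f)\rho\big] + \gamma\,\Delta_\pbold \rho,
\end{equation*}
and I need to show $\mathcal{L}^*\pi = 0$. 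The key observation is that $\nablaq \pi = -\nabla f(\qbold)\,\pi$ and $\nablap \pi = -\pbold\,\pi$, so all derivatives of $\pi$ reduce to algebraic expressions times $\pi$ itself.

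Next I would split $\mathcal{L}^*$ into the ``$\qbold$-block'' (terms involving $\alpha$ and the $\pbold$-transport $-\pbold \cdot \nablaq$) and the ``$\pbold$-block'' (terms involving $\gamma$ and the $\nabla f$-transport in $\pbold$), and evaluate each block on $\pi$. A direct computation gives
\begin{equation*}
    -\nablaq\cdot[(\pbold - \alpha\nabla f)\pi] + \alpha\Delta_\qbold \pi = \big(\pbold \cdot \nabla f\big)\pi,
\end{equation*}
because the $\alpha$-dependent pieces $\alpha\Delta f - \alpha\|\nabla f\|^2$ cancel exactly against $-\alpha\Delta f + \alpha\|\nabla f\|^2$ coming from $\alpha\Delta_\qbold \pi$. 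Similarly,
\begin{equation*}
    \nablap\cdot[(\gamma\pbold + \nabla f)\pi] + \gamma\Delta_\pbold \pi = -\big(\nabla f \cdot \pbold\big)\pi,
\end{equation*}
with the $\gamma$-terms $\gamma d - \gamma\|\pbold\|^2$ cancelling $-\gamma d + \gamma\|\pbold\|^2$. Summing the two blocks yields $0$.

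Conceptually, this cancellation is not a coincidence: as the decomposition in \eqref{eq:decomposition} makes clear, HFHR is the superposition of three Markov generators, namely the Hamiltonian flow $\dot{\qbold} = \pbold$, $\dot{\pbold} = -\nabla f$; the Ornstein--Uhlenbeck dissipation on $\pbold$ with stationary density $\nu$; and a rescaled overdamped Langevin dynamics on $\qbold$ with stationary density $\mu$. Each component individually leaves $\pi = \mu \otimes \nu$ invariant (Hamiltonian flows preserve the Gibbs measure $e^{-H}$; the OU block only acts in $\pbold$ and preserves $\nu$ while leaving $\mu$ untouched; the OLD block only acts in $\qbold$ and preserves $\mu$ while leaving $\nu$ untouched), so any nonnegative linear combination of their generators shares $\pi$ as an invariant measure. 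I would present the direct Fokker--Planck verification as the formal argument and mention the three-way splitting as the intuition. There is no real obstacle here; the only care needed is bookkeeping the two cancellations (the $\alpha$ pair and the $\gamma$ pair) and noting that the mixed transport contributions $\pbold\cdot\nabla f$ arising from the $\qbold$-block and the $\pbold$-block have opposite signs and therefore cancel, which is the standard Hamiltonian skew-symmetry underlying all Langevin-type samplers.
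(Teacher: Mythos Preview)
Your proposal is correct and follows essentially the same approach as the paper: both verify directly that $\mathcal{L}^*\pi = 0$ via the Fokker--Planck equation, using $\nablaq \pi = -\nabla f\,\pi$ and $\nablap \pi = -\pbold\,\pi$. The only cosmetic difference is that the paper groups terms by generator (Hamiltonian transport, $\alpha$-OLD, $\gamma$-OU) and shows each vanishes on $\pi$ separately, whereas you group by variable and then invoke the same three-way splitting as intuition; the computations are identical.
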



 \begin{remark}
 Although the right hand side of \eqref{eq:HFHR} can be formally viewed as the sum of OLD and ULD's right hand sides, HFHR dynamics can be very different from both OLD and ULD. In fact, it is generally true that a differential equation, whose right hand side is the sum of the right hand sides of two other differential equations, can behave very differently from either of the two; this is studied under the subject of `operator splitting' (e.g., \citealp{Trotter:59}).
\end{remark}



\vskip -0.1cm
\section{Theoretical Analysis of the HFHR Dynamics and Algorithm} 
\vskip -0.1cm
\subsection{HFHR Dynamics in Continuous Time}\label{sec:theoryConti}

We now quantify the exponential convergence of HFHR dynamics and its additional acceleration over ULD, when the target measure has a strongly-convex and smooth potential.
\begin{theorem}\label{thm:exp_convergence_coupling}
Assume Conditions A\ref{asp:standard} holds and further assume $\gamma^2 > L + m$ and $\alpha \le \frac{\gamma^2 - L - m}{m\gamma}$. Denote the law of $\qbold_t$ by $\mu_t$. Then there exists $\kappa^\prime > 0$ depending only on $\alpha$ and $\gamma$, such that
\[
    W_2(\mu_t, \mu) \le \kappa^\prime e^{-(\frac{m}{\gamma} + m\alpha)t} W_2(\mu_0, \mu).
\]
Detailed expression of $\kappa^\prime$ can be found in Appendix \ref{app:notation}.
\end{theorem}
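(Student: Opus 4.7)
The natural approach is synchronous coupling combined with a quadratic Lyapunov function in the path-difference, exactly in the spirit of Cheng--Chatterji--Bartlett--Jordan and Dalalyan--Riou-Durand for ULD, but adapted so that the extra $-\alpha\nabla f(\qbold)$ drift can be converted into an additional $m\alpha$ rate via strong convexity. Concretely, run two copies $(\qbold_t,\pbold_t)$ and $(\tqbold_t,\tpbold_t)$ of HFHR driven by the \emph{same} Brownian motions $\bs{W}_t,\bs{B}_t$, with $(\qbold_0,\pbold_0)\sim\rho_0$, $(\tqbold_0,\tpbold_0)\sim\pi$, and the initial pair coupled optimally so that $\mathbb{E}\|\qbold_0-\tqbold_0\|^2=W_2^2(\mu_0,\mu)$. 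Write $\bs{\phi}_t=\qbold_t-\tqbold_t$, $\bs{\psi}_t=\pbold_t-\tpbold_t$, and $\bs{g}_t=\nabla f(\qbold_t)-\nabla f(\tqbold_t)$. The noises cancel and the difference satisfies the ODE $\dot{\bs{\phi}}=\bs{\psi}-\alpha\bs{g}$, $\dot{\bs{\psi}}=-\gamma\bs{\psi}-\bs{g}$.

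Next I would introduce a Lyapunov functional of the form
\[
V(\bs{\phi},\bs{\psi})=a\|\bs{\phi}\|^2+2b\langle\bs{\phi},\bs{\psi}\rangle+c\|\bs{\psi}\|^2,
\]
with positive constants $a,b,c$ chosen so that (i) $V$ is equivalent to $\|\bs{\phi}\|^2+\|\bs{\psi}\|^2$, i.e.\ $ac>b^2$ and the quadratic form is positive-definite, and (ii) $a\ge$ some positive constant times $c$ so that $V\gtrsim \|\bs{\phi}\|^2$. Differentiating along the coupled flow,
\[
\dot V=2a\langle\bs{\phi},\bs{\psi}\rangle-2a\alpha\langle\bs{\phi},\bs{g}\rangle+2b\|\bs{\psi}\|^2-2b\alpha\langle\bs{g},\bs{\psi}\rangle-2b\gamma\langle\bs{\phi},\bs{\psi}\rangle-2b\langle\bs{\phi},\bs{g}\rangle-2c\gamma\|\bs{\psi}\|^2-2c\langle\bs{\psi},\bs{g}\rangle.
\]
Strong convexity gives $\langle\bs{\phi},\bs{g}\rangle\ge m\|\bs{\phi}\|^2$; cocoercivity (consequence of Conditions A\ref{asp:smooth}, A\ref{asp:strong-convex}) gives $\langle\bs{\phi},\bs{g}\rangle\ge\tfrac{1}{L+m}(\|\bs{g}\|^2+Lm\|\bs{\phi}\|^2)$; and $L$-smoothness controls the mixed terms $\langle\bs{g},\bs{\psi}\rangle$ by Young's inequality. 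My plan is to choose $b=1$ (or proportional to $1/\gamma$) and $a,c$ tuned so that the dominant cross-terms match the structure $-2(\tfrac{m}{\gamma}+m\alpha)V$: the $-2b\langle\bs{\phi},\bs{g}\rangle$ piece contributes $\tfrac{m}{\gamma}$ (as in ULD), while $-2a\alpha\langle\bs{\phi},\bs{g}\rangle\le -2am\alpha\|\bs{\phi}\|^2$ delivers the additional $m\alpha$ factor. The remaining terms involving $\|\bs{g}\|^2$, $\|\bs{\psi}\|^2$ and cross-terms with $\bs{g}$ must be absorbed as a non-positive Schur complement, and this is precisely where the two constraints $\gamma^2>L+m$ and $\alpha\le(\gamma^2-L-m)/(m\gamma)$ enter.

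Once I have $\dot V\le -2(\tfrac{m}{\gamma}+m\alpha)V$, Gr\"onwall gives $V(\bs{\phi}_t,\bs{\psi}_t)\le e^{-2(m/\gamma+m\alpha)t}V(\bs{\phi}_0,\bs{\psi}_0)$. Using the equivalence of norms induced by the chosen $a,b,c$, there exist constants $c_-,c_+>0$ (functions of $\alpha,\gamma,L,m$) with $c_-\|\bs{\phi}\|^2\le V\le c_+(\|\bs{\phi}\|^2+\|\bs{\psi}\|^2)$, hence
\[
\mathbb{E}\|\bs{\phi}_t\|^2\le\tfrac{c_+}{c_-}e^{-2(m/\gamma+m\alpha)t}\,\mathbb{E}(\|\bs{\phi}_0\|^2+\|\bs{\psi}_0\|^2).
\]
Choosing $\bs{\psi}_0=\bs{0}$ (which is allowed because only the marginal of $\qbold_0$ is prescribed; the momentum at time $0$ can be coupled freely, and the stationary momentum marginal can be absorbed into the definition of $\kappa'$ as done in Dalalyan--Riou-Durand) and taking the infimum over couplings of the $\qbold$-marginals yields $W_2(\mu_t,\mu)\le\kappa'e^{-(m/\gamma+m\alpha)t}W_2(\mu_0,\mu)$ with $\kappa'=\sqrt{c_+/c_-}$, matching the form claimed.

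The main obstacle I anticipate is the algebraic tuning of $(a,b,c)$: the three requirements---positive-definiteness of $V$, absorption of the $\|\bs{g}\|^2$ and $\|\bs{\psi}\|^2$ remainders into a Schur-complement-type non-positive matrix, and obtaining the clean rate $m/\gamma+m\alpha$---leave little slack, and the bound on $\alpha$ in the hypothesis almost certainly comes out as the exact feasibility threshold of this tuning. Handling the fact that we want a decay on the $q$-marginal in $W_2$ (not on the joint), while the Lyapunov lives on the joint, is a minor but standard concern resolved by the equivalence-of-norms step and is the source of the multiplicative factor $\kappa'$.
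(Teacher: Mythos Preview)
Your proposal is correct and follows the same synchronous-coupling-plus-quadratic-Lyapunov strategy as the paper. The execution differs in one useful way worth noting: rather than tuning abstract coefficients $(a,b,c)$ and controlling the mixed $\langle\bs{g},\bs{\psi}\rangle$ terms via cocoercivity and Young's inequality, the paper fixes the specific linear change of variables $P=\begin{pmatrix}\gamma I & I\\ 0 & \sqrt{1+\alpha\gamma}\,I\end{pmatrix}$ (so $a=\gamma^2$, $b=\gamma$, $c=2+\alpha\gamma$) and writes $\nabla f(\qbold_t)-\nabla f(\tqbold_t)=H_t\,\bs{\phi}_t$ with $H_t=\int_0^1\nabla^2 f(\tqbold_t+s(\qbold_t-\tqbold_t))\,ds$, whose eigenvalues lie in $[m,L]$. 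In the $P$-coordinates the symmetrized drift becomes exactly block-diagonal, $\tfrac{1}{\gamma}\operatorname{diag}\bigl((1+\alpha\gamma)H_t,\ \gamma^2 I-H_t\bigr)$, so $\dot{\mathcal{L}}\le -2\min\{m/\gamma+m\alpha,\ (\gamma^2-L)/\gamma\}\,\mathcal{L}$ drops out with no residual Schur complement to absorb; the hypotheses on $\gamma$ and $\alpha$ simply select which branch of the minimum is active. This sidesteps precisely the ``algebraic tuning'' obstacle you flagged. One small correction: with this $P$, the constant $\kappa'$ is its condition number and depends only on $\alpha$ and $\gamma$, as the theorem states, not on $L$ or $m$.
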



Thm. \ref{thm:exp_convergence_coupling} state that HFHR dynamics converges to the target distribution exponentially fast in log-strongly-concave-and-smooth setups. There is an additional acceleration created by $\alpha$ (the HFHR correction) in the exponent.

As a sanity check, note for ULD (i.e. HFHR($\alpha=0$,$\gamma$)), \citet[Theorem \ 1]{dalalyan2018sampling} obtained exponential convergence result in 2-Wasserstein distance with rate $\frac{\sqrt{m}}{\sqrt{\kappa} + \sqrt{\kappa - 1}}$ using a simple and elegant coupling approach, and showed this rate is optimal as it is achieved by the bivariate function $f(x,y) = \frac{m}{2}x^2 + \frac{L}{2}y^2$. In this case, Thm \ref{thm:exp_convergence_coupling} gives an (asymptotically) equivalent rate $\frac{\sqrt{m}}{2\sqrt{\kappa}}$, and thus our result passes the check. Also in this sense, we're not making a shaky claim of advantage by comparing bounds (as they may not be tight); instead, bounds that are being compared here can actually be attained (see Rmk.\ref{rmk:discretizationGaussianConditionNumImproved} for an analogue after discretization).


Now, given that both $\gamma$ and $\alpha$ are hyperparameters that affect the convergence rate and they are dependent due to the constraints, we illustrate the acceleration enabled by $\alpha$ more precisely by considering a low bound of it: set $\gamma = 2\sqrt{L}$ and push $\alpha$ to the upper bound specified in Thm. \ref{thm:exp_convergence_coupling}; then we obtain an $\mathcal{O}(\sqrt{L})$ rate in the log-strongly-concave setup.
Compared with the rate in \citep{dalalyan2018sampling}, this is a speed-up of order $\kappa$.

\subsection{HFHR Algorithm in Discrete Time}\label{sec:discretization}
To obtain an implementable method, we now discretize the time of HFHR dynamics. As our main goal is to show the acceleration enabled by $\alpha$ won't disappear after discretization (unlike a fake acceleration due to time rescaling), we'll just analyze a 1st-order discretization (but a high-accuracy discretization adapted from RMA \citep{shen2019randomized} will also be provided and compared with RMA, in Appendix \ref{sec:RMA_HFHR}).

For simplicity, we work with constant step size $h$. Inspired by Strang splitting for differential equations \citep{strang1968construction, mclachlan2002splitting}, consider a symmetric composition for update: $\bs{x}_{k+1} := \phi^{\frac{h}{2}} \circ \psi^h \circ \phi^{\frac{h}{2}} (\bs{x}_{k})$
where $\bs{x}_{k} = \begin{bmatrix} \bs{q}_{kh} \\ \bs{p}_{kh} \end{bmatrix}$, $\phi$ and $\psi$ correspond to solution flows of split SDEs, respectively given by
\begin{align*}
    &\phi: 
    \begin{cases}
        d\bs{q} = \bs{p} dt \\
        d\bs{p} = -\gamma \bs{p} dt + \sqrt{2\gamma} d\bs{B}
    \end{cases},
    \\
    &\psi:
    \begin{cases}
        d\bs{q} = -\alpha \nabla f(\bs{q}) dt + \sqrt{2\alpha} d\bs{W} \\
        d\bs{p} = -\nabla f(\bs{q}) dt
    \end{cases},
\end{align*}
and $\phi^t(\bs{x}_0)$ and  $\psi^t(\bs{x}_0)$ mean $x$'s value after evolving $\phi$ and $\psi$ for $t$ time with initial condition  $\bs{x}_0$.

Note that $\phi$ flow can be solved explicitly since the second equation is an Ornstein-Unlenbeck process and integrating the second equation followed by integrating the first one gives us an explicit solution
\begin{equation}\label{eq:phi}
    \begin{cases} 
        \bs{q}_t  = \bs{q}_0 + \frac{1 - e^{-\gamma t}}{\gamma} \bs{p}_0 + \sqrt{2\gamma} \int_0^t \frac{1 - e^{-\gamma(t - s)}}{\gamma} d\bs{B}(s),\\
        \bs{p}_t = e^{-\gamma t}\bs{p}_0 + \sqrt{2\gamma} \int_0^t e^{-\gamma (t - s)} d\bs{B}(s).  
    \end{cases} 
\end{equation}
For an implementation of the stochastic integral part in Equation \ref{eq:phi}, denoting $\bs{X} = \sqrt{2\gamma} \int_0^t \frac{1 - e^{-\gamma(t - s)}}{\gamma} d\bs{B}(s)$ and $\bs{Y} = \sqrt{2\gamma} \int_0^t e^{-\gamma (t - s)} d\bs{B}(s)$, and the covariance matrix of $(\bs{X}, \bs{Y})$ is 
$
    \mbox{Cov}(\bs{X}, \bs{Y}) = 
    {\small \begin{bmatrix}  \frac{\gamma h + 4e^{-\gamma \frac{h}{2} } - e^{-\gamma h} -3 }{\gamma^2} I_d & \frac{(1 - e^{-\gamma \frac{h}{2}})^2}{\gamma} I_d \\  \frac{(1 - e^{-\gamma \frac{h}{2}})^2}{\gamma} I_d & (1 - e^{-\gamma h}) I_d  \end{bmatrix}}.
$ As mean and covariance fully determine a Gaussian distribution, $\begin{bmatrix} \bs{X} \\ \bs{Y}\end{bmatrix} = M \bs{\xi}$ where $M$ is the Cholesky decomposition of $\mbox{Cov}(\bs{X}, \bs{Y})$, $\bs{\xi}$ is a $2d$ standard Gaussian random vector, i.i.d. at each step, and $\phi^t$ can thus be exactly simulated.

However, $\psi$ flow is generally not explicitly solvable unless $f$ is a quadratic function in $\bs{q}$. We simply choose to approximate $\psi^h(\bs{x}_0)$ with one-step Euler-Maruyama integration
$
    \psi^h(\bs{x}_0) \approx \widetilde{\psi}^h(\bs{x}_0) \text{ given by } \begin{cases}
    \bs{q}_h = \bs{q}_0 -\alpha \nabla f(\bs{q}_0) h + \sqrt{2\alpha h} \bs{\eta} \\
    \bs{p}_h = \bs{p}_0 -\nabla f(\bs{q}_0) h
    \end{cases}
$
where $\bs{\eta}$ is a standard $d$-dimensional Gaussian random vector, again i.i.d. each time $\tilde{\psi}$ is called.

Altogether, one step of an implementable Strang's splitting of HFHR is hence $ \phi^\frac{h}{2} \circ \widetilde{\psi}^h \circ \phi^\frac{h}{2}$
and we call this numerical scheme the HFHR algorithm, summarized in Alg.\ref{alg:HFHR}.

\begin{algorithm}[h]
\caption{A 1st-order HFHR Algorithm}\label{alg:HFHR}
\begin{algorithmic}[1]
\STATE \textbf{Input}: potential function $f$ and its gradient $\nabla f$, damping coefficients $\alpha$ and $\gamma$, step size $h$, initial condition $(\qbold_0, \pbold_0)$
\PROCEDURE{1st-order HFHR}{$f, \nabla f, \alpha, \gamma, h, \qbold_0, \pbold_0$}
    \STATE $k=0$ and initialize $\begin{bmatrix} \bs{q}_0 \\ \bs{p}_0 \end{bmatrix}$
    \WHILE{not converge}
        \STATE Generate independent standard Gaussian random vectors $\bs{\eta}_{k+1}\in \mathbb{R}^d, \bs{\xi}^1_{k+1}, \bs{\xi}^2_{k+1} \in \mathbb{R}^{2d}$
        \STATE Run $\phi^\frac{h}{2}$ :  $ \begin{bmatrix} \bs{q}_1 \\ \bs{p}_1 \end{bmatrix} = \begin{bmatrix} \bs{q}_{kh} + \frac{1 - e^{-\gamma \frac{h}{2}}}{\gamma} \bs{p}_{kh}  \\ e^{-\gamma \frac{h}{2}}\bs{p}_{kh} \end{bmatrix} + M \bs{\xi}^1_{k+1} $
        \STATE Run $\widetilde{\psi}^h$ : $ \begin{bmatrix} \bs{q}_2 \\ \bs{p}_2 \end{bmatrix} = \begin{bmatrix} \bs{q}_1 - \alpha \nabla f(\bs{q}_1) h + \sqrt{2\alpha h} \bs{\eta}_{k+1}  \\ \bs{p}_1 - \nabla f(\bs{q}_1) h \end{bmatrix} $
        \STATE Run $\phi^\frac{h}{2}$ :  $ \begin{bmatrix} \bs{q}_3 \\ \bs{p}_3 \end{bmatrix} = \begin{bmatrix} \bs{q}_2 + \frac{1 - e^{-\gamma \frac{h}{2}}}{\gamma} \bs{p}_2  \\ e^{-\gamma \frac{h}{2}}\bs{p}_2 \end{bmatrix} + M \bs{\xi}^2_{k+1} $
        \STATE $ \begin{bmatrix} \bs{q}_{(k+1)h} \\ \bs{p}_{(k+1)h} \end{bmatrix} \gets \begin{bmatrix} \bs{q}_3 \\ \bs{p}_3 \end{bmatrix} $
        \STATE $k \gets k + 1$
    \ENDWHILE
\ENDPROCEDURE
\end{algorithmic}
\end{algorithm}

As $\psi$ in Strang splitting is replaced by a 1st-order approximation $\tilde{\psi}$, the method is of order 1, however with good constant. This is rigorously established by the following theorem (interested readers are referred to Appendix \ref{sec_localError1}-\ref{sec_localError3} and \cite{li2021sqrt} for more technical details):


 
\begin{theorem}\label{thm:wasserstein}
Under Assumption \ref{asp:standard}, we further assume $\gamma - \frac{L+m}{\gamma} \ge m\alpha$ and $\nabla \Delta f$ satisfies a third-order growth condition, i.e., $\norm{\nabla \Delta f(\bs{q})} \le G\sqrt{1 + \norm{\bs{q}}^2 }, \forall \bs{q} \in \mathbb{R}^d$ for some $G > 0$. 
If $(\bs{q}_0, \bs{p}_0) \sim \pi_0$, then there exists $h_0, C > 0$ such that when $0 < h < h_0$, we have
\begin{equation}\label{eq:discretization_wasserstein}
     W_2(\mu_k, \mu) \le \kappa^\prime e^{-(\frac{m}{\gamma} + m\alpha) kh }W_2(\pi_0, \pi) + Ch
\end{equation}
where $\kappa^\prime$ is a constant depending only on $L, m, \gamma, \alpha$ (details in Appendix \ref{app:notation}), $\mu_k$ is the law of the $q$ marginal of the $k$-th iterate in Alg.\ref{alg:HFHR}, and $\mu$ is the $q$ marginal of the invariant distribution $\pi$. In particular, $C = \mathcal{O}(\sqrt{d})$ and there exists $b > 0$, independent of $\alpha$ and is of order $\mathcal{O}(\sqrt{d})$, s.t.
\begin{equation}
    C \le \frac{b}{m}(\alpha^2 - \frac{\alpha}{\gamma} + \frac{1}{\gamma^2}).
    \label{eq:CdependenceOnAlpha}
\end{equation}
\end{theorem}
\begin{remark}
The linear growth (at infinity) condition on $\nabla \Delta f$ is actually not as restrictive as it appears. For example, for monomial potentials, i.e.,  $f(x) = x^p, p\in \mathbb{Z}_+$, our linear growth condition is met when $p \le 4$, whereas a standard condition \citep[Theorem 3.1]{pavliotis2014stochastic} for the existence of SDE solutions holds only when $p \le 2$. In addition, our condition is related to the Hessian Lipschitz condition commonly used in the literature (e.g., \citealp{durmus2019high,ma2019there}). Smoothness and Hessian Lipschitzness imply the growth condition. Meanwhile, examples that satisfy linear growth condition but are not Hessian Lipschitz exist, e.g., $f(x)=x^4$, and thus linear growth condition is not necessarily stronger than Hessian Lipschitzness.
\end{remark}

Inspecting the role of $\alpha$ in Equation \eqref{eq:discretization_wasserstein}, we see it clearly increases the rate of exponential decay, but at the same time it can also increase the discretization error (see \eqref{eq:CdependenceOnAlpha}; assuming $h$ is fixed). However, as the following Cor.\ref{corollary:iteration complexity} and its remark will show, the net effect of having a positive $\alpha > 0$, at least for some $\alpha^\star$, is reduced iteration complexity.

\begin{corollary}\label{corollary:iteration complexity}
Consider the same assumption as in Thm. \ref{thm:wasserstein}. If $(\bs{q}_0, \bs{p}_0) \sim \pi_0$, then there exists $h_0, C > 0$ (same as that in Theorem \ref{thm:wasserstein}; recall $C =\mathcal{O}(\sqrt{d})$) such that for any target error tolerance $\epsilon > 0$, if we choose $h = h^\star \triangleq \min\{ h_0, \frac{\epsilon}{2C}\}$, then for $\epsilon < 2C h_0$,  after 
\begin{equation}\label{eq:iteration_complexity}
    k^\star =  2\frac{C}{\frac{m}{\gamma} + m\alpha} \frac{1}{\epsilon} \log \frac{2 \kappa^\prime W_2(\pi_0, \pi) }{\epsilon} = \tilde{\mathcal{O}}\left(\frac{\sqrt{d}}{\epsilon}\right).
\end{equation}
steps, we have $W_2(\mu_k, \mu) \le \epsilon$. 
\end{corollary}

\begin{remark}\label{remark:iteration_complexity}
Recall from Thm.\ref{thm:wasserstein} that $C \le \frac{b}{m}(\alpha^2 - \frac{\alpha}{\gamma} + \frac{1}{\gamma^2})$, so if we consider the minimizer $\alpha^\star$ of an upper bound of $\frac{C}{\frac{m}{\gamma} + m\alpha}$, 
$
    \alpha^\star = \argmin_{\alpha \ge 0} \frac{b}{m^2}\frac{\alpha^2 - \frac{\alpha}{\gamma} + \frac{1}{\gamma^2}}{ \frac{1}{\gamma} + \alpha} = \frac{\sqrt{3} - 1}{\gamma}
$. This suggests that by choosing an optimal $\alpha > 0$, one could effectively reduce iteration complexity. Note, however, that this $\alpha^\star$ may not be the true optimal one as bounds may not be tight. If they were, $k^\star_{\alpha^\star}=(2\sqrt{3}-3)k^\star_{\alpha=0}\approx 0.46k^\star_{\alpha=0}$; i.e., steps needed by ULD (discretized by Alg.\ref{alg:HFHR} with $\alpha=0$) can be halved by HFHR (discretized by Alg.\ref{alg:HFHR}).

\end{remark}

Rmk.\ref{remark:iteration_complexity} 
shows HFHR algorithm can lead to a similar bound on iteration complexity as ULD algorithm but with an improved constant, and thus having $\alpha \neq 0$ is advantageous. It also shows that the acceleration of HFHR carries through from continuous to discrete time. The same conclusion has been consistently observed in numerical experiments too.

\begin{remark}
Readers interested in more explicit condition number dependence are referred to Appendix \ref{sec_onAlpha}, where we show, for 2D Gaussian target with condition number $\kappa\gg 1$, the convergences of Euler discretizations of ULD under optimal parameters and HFHR under suboptimal parameters are, respectively, $(1-1/\kappa+o(1/\kappa))^n$ and $(1-2/\kappa+o(1/\kappa))^n$, where $n$ is the number of iterations. The latter (HFHR) is faster despite of suboptimal parameters. Also, like discussed in Sec.\ref{sec:theoryConti}, this result is also based on not comparing bounds but exact estimates, and thus trustworthy.
\label{rmk:discretizationGaussianConditionNumImproved}
\end{remark}

\vskip -0.1cm
\section{Numerical Experiments}\label{sec:experiment}
\vskip -0.1cm

We now empirically validate the acceleration enabled by $\alpha\neq 0$ by comparing HFHR algorithm and the popular KLMC discretization of ULD  \citep{dalalyan2018sampling}. For fairness, discretizations of the same order and number of gradient evaluations are compared. Appendix \ref{sec:RMA_HFHR} has an additional comparison based on RMA.

\subsection{A First Impression via Simple Target Distributions}\label{subsec:simple}


\begin{table}[!h]
\small
\caption{Test potentials. We use the shorthand notation $G^{d}_{m, \kappa}(\bs{x}) = \frac{m}{2}(\kappa x_d^2 + \sum_{i=1}^{d-1} x_i^2)$. `S', `C' and `N' mean strongly convex, convex, and non-convex, respectively.} \label{tab:test_function}
    \centering
    \begin{tabular}{c|c|c|c}
        \hline
        S & S & S & S \\
        $f_1=x^2/2$ & $f_2=G^2_{0.1,10}$ & $f_3=G^2_{10,10}$ & $f_4=G^{100}_{1,100}$
        \\
        \hline
    \end{tabular}
    \begin{tabular}{c|c}
        C & N (perturbed) \\
        $f_5=x^4/4$ & $f_6=(5x^2 + \sin(10x))/10$ \\
    \end{tabular}
    \begin{tabular}{c|c}
        \hline
        N (bimodal) & N (Rosenbrock) \\
        $f_7=5(x^4 - 2x^2)$ & $f_8=((x-1)^2 + 10(y-x^2)^2)/2$\\
        \hline
    \end{tabular}
\end{table}

\begin{figure}[!h]
\centering
    \begin{subfigure}{0.23\textwidth}
		\centering
		\includegraphics[width=\textwidth]{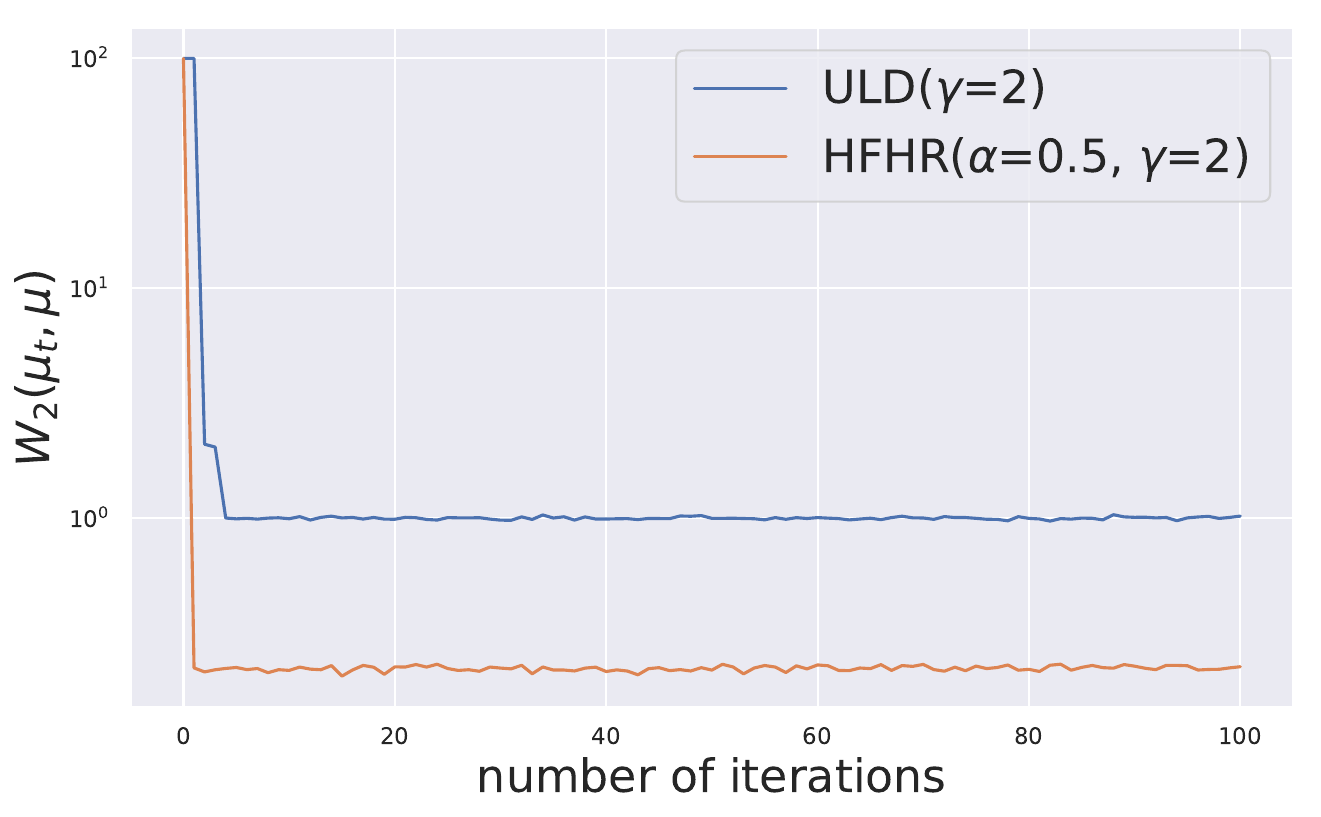}
		\caption{1D Gaussian} \label{fig:gaussian_1d}	
	\end{subfigure}
    \begin{subfigure}{0.23\textwidth}
		\centering
		\includegraphics[width=\textwidth]{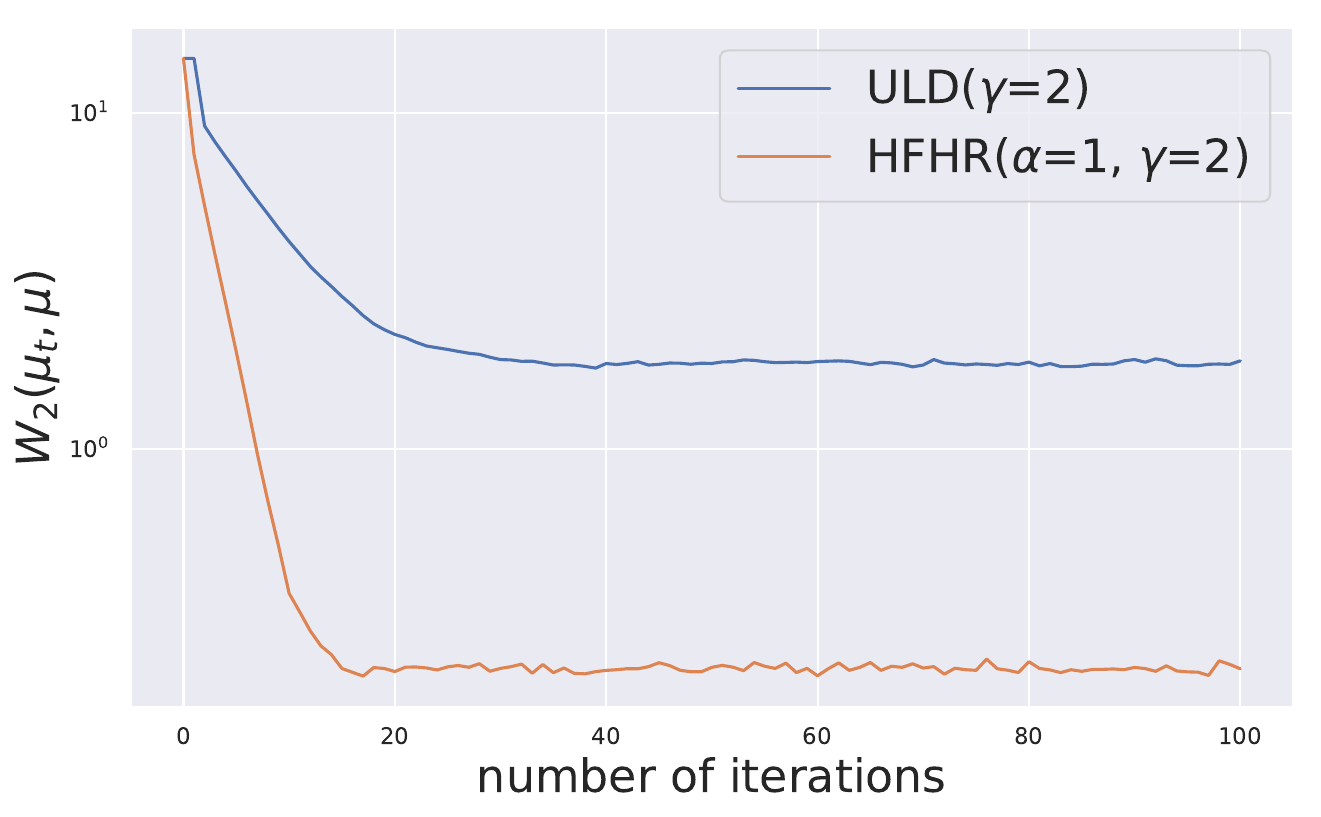}
		\caption{2D Gaussian($m=0.1$)} \label{fig:gaussian_2d_small_m}
	\end{subfigure}
    \begin{subfigure}{0.23\textwidth}
		\centering
		\includegraphics[width=\textwidth]{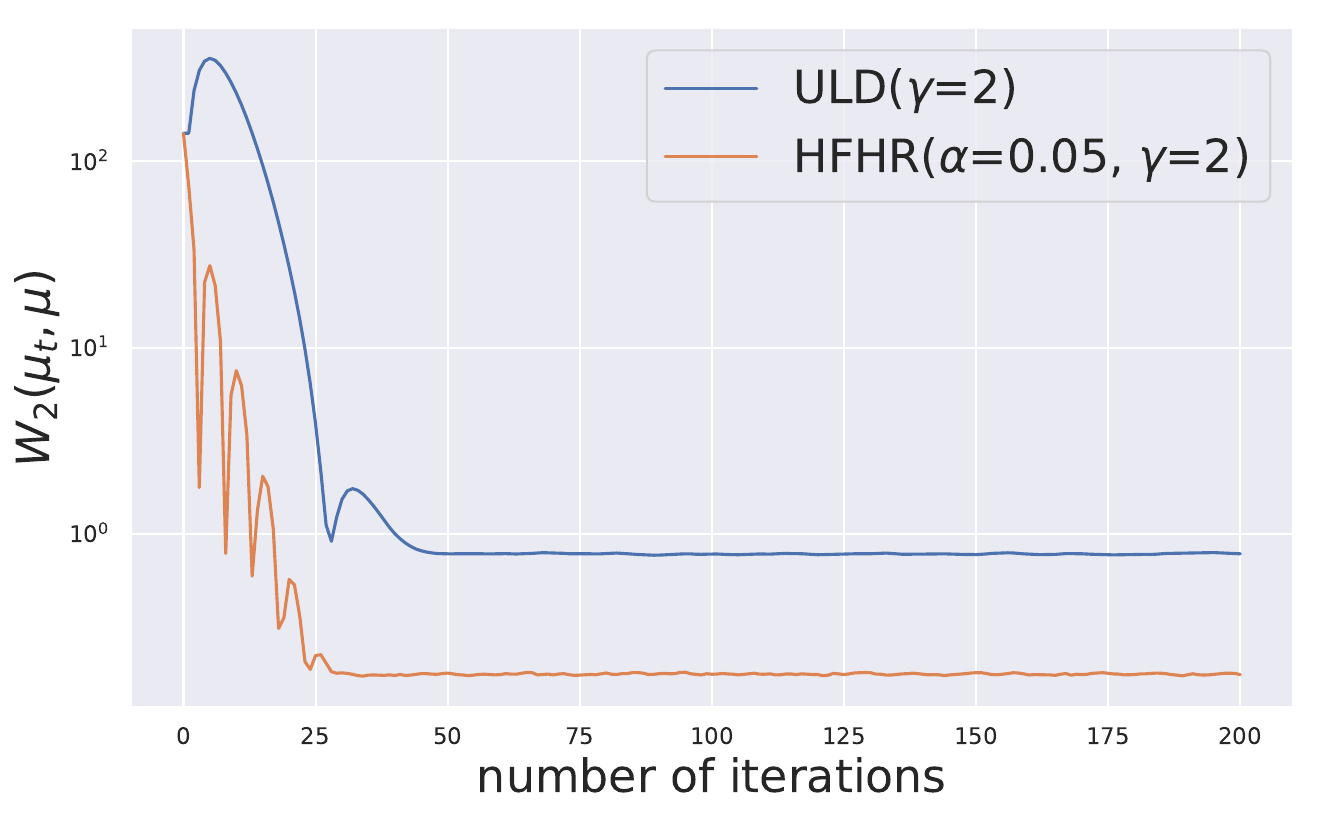}
		\caption{2D Gaussian($m=10$)} \label{fig:gaussian_2d_large_m}
	\end{subfigure}
    \begin{subfigure}{0.23\textwidth}
		\centering
		\includegraphics[width=\textwidth]{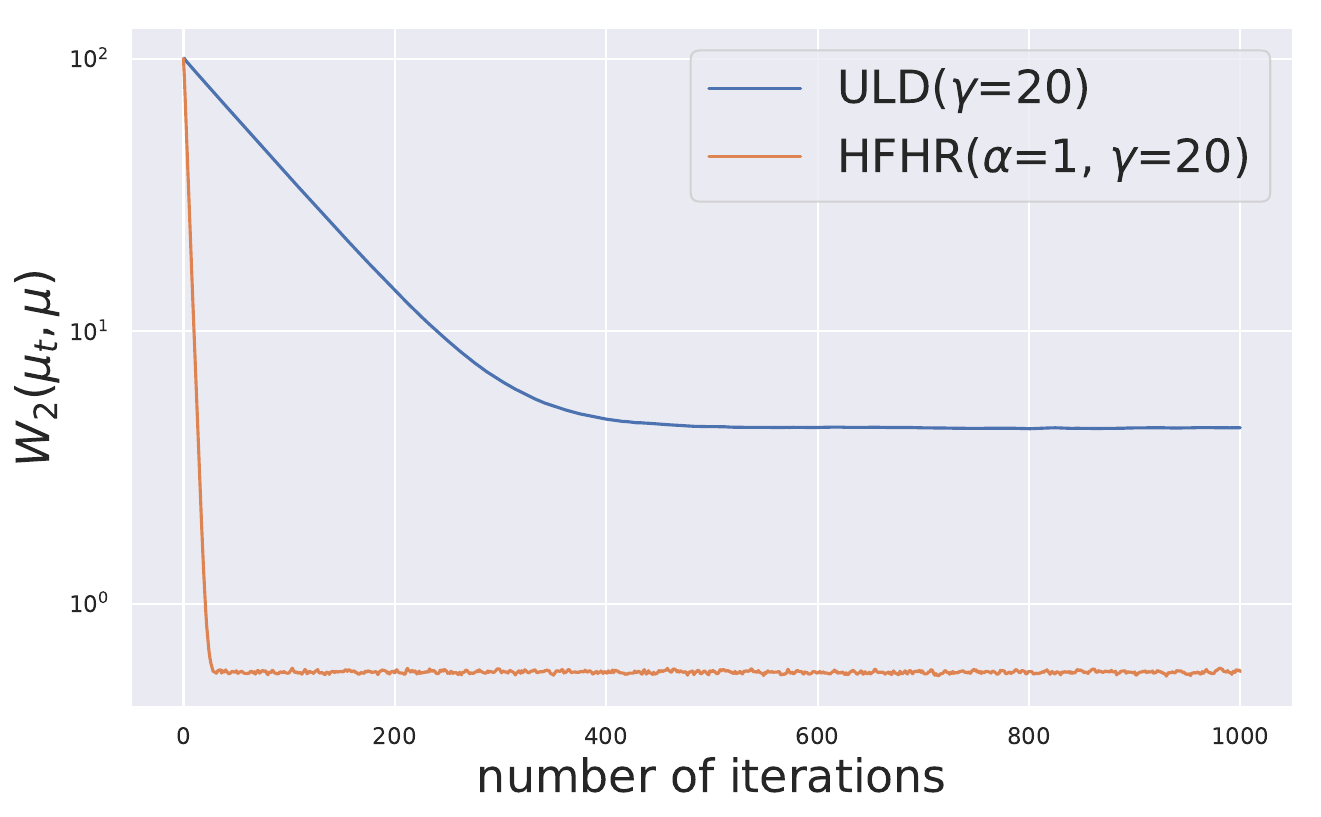}
		\caption{100D Gaussian} \label{fig:gaussian_100d}
	\end{subfigure}
	
    \begin{subfigure}{0.23\textwidth}
		\centering
		\includegraphics[width=\textwidth]{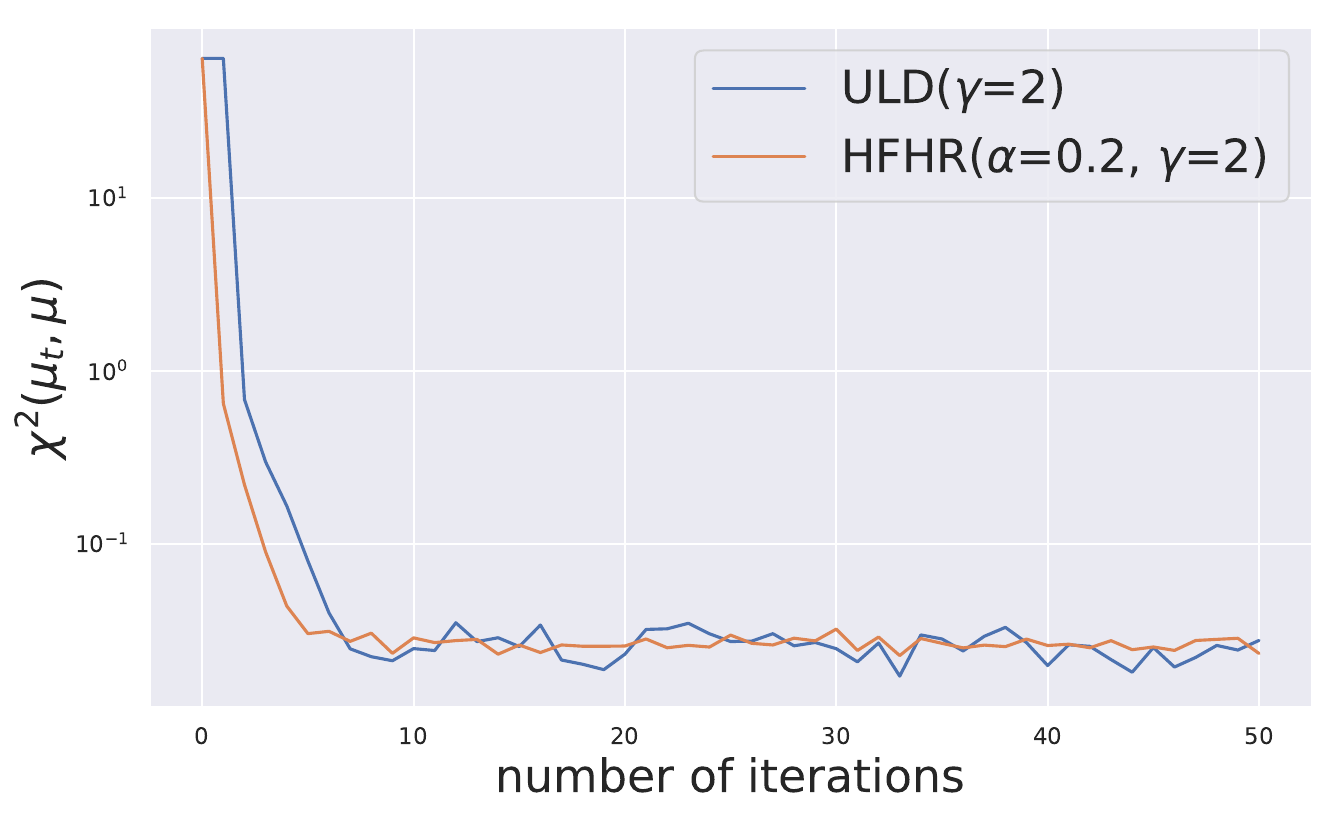}
		\caption{Convex} \label{fig:convex_1d}
	\end{subfigure}
    \begin{subfigure}{0.23\textwidth}
		\centering
		\includegraphics[width=\textwidth]{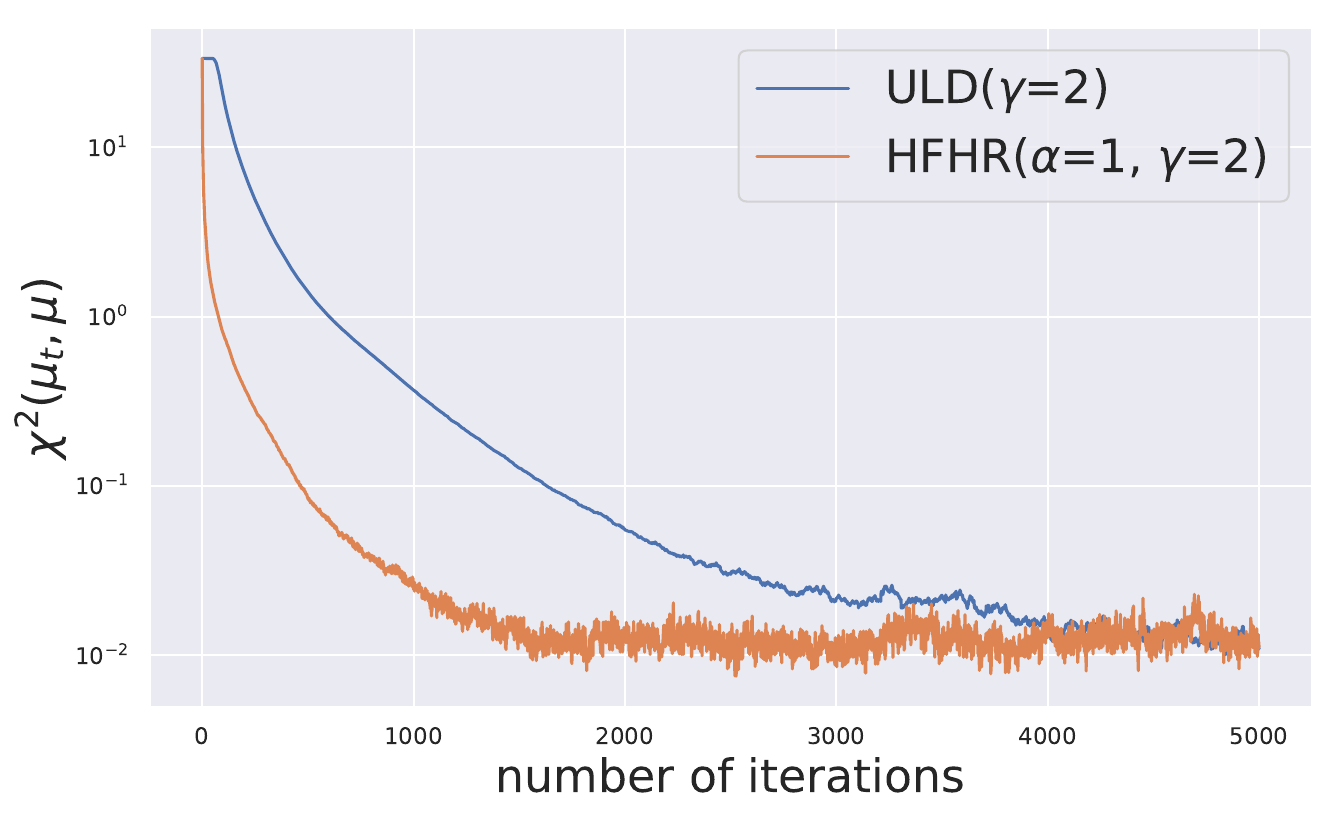}
		\caption{Perturbed} \label{fig:perturbed_1d}
	\end{subfigure}
    \begin{subfigure}{0.23\textwidth}
		\centering
		\includegraphics[width=\textwidth]{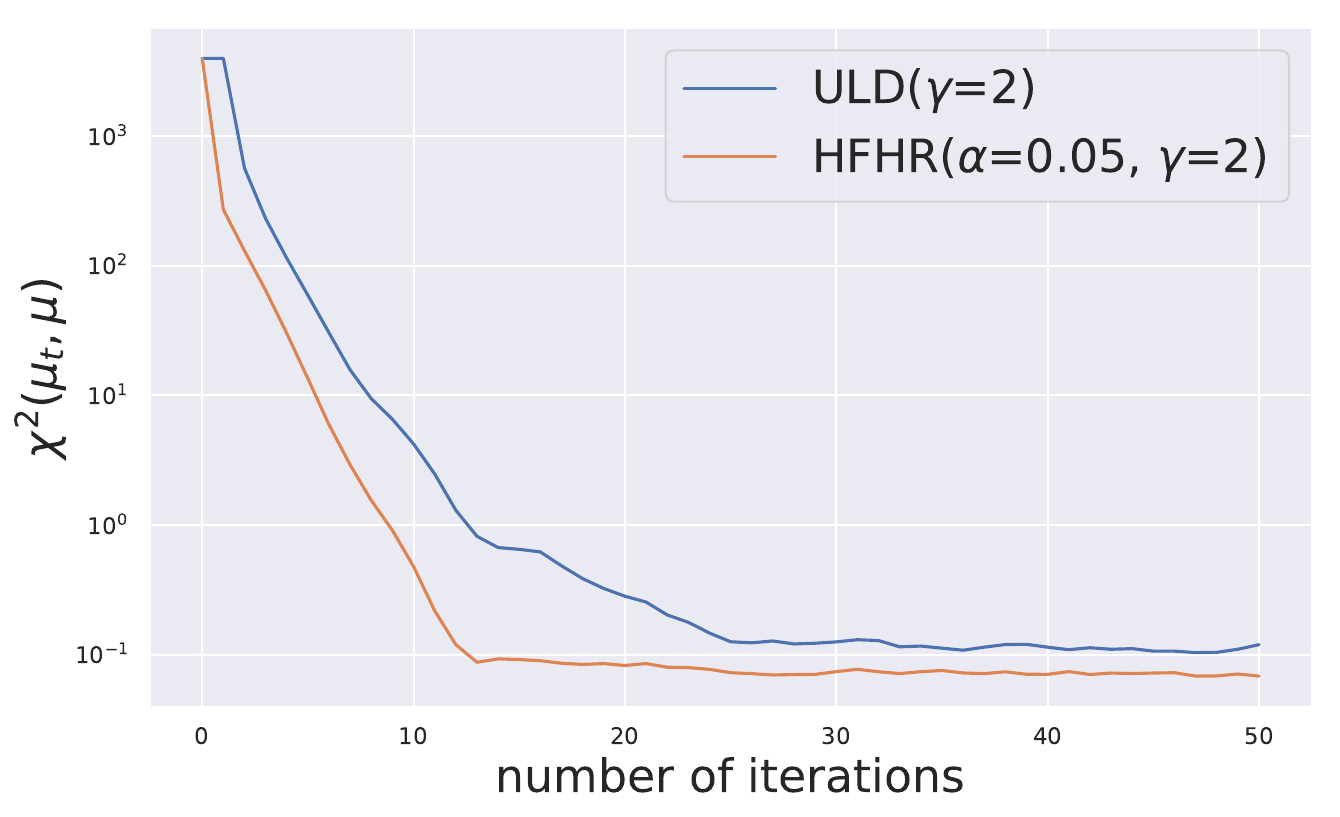}
		\caption{Bi-modal} \label{fig:bimodal_1d}
	\end{subfigure}
    \begin{subfigure}{0.23\textwidth}
		\centering
		\includegraphics[width=\textwidth]{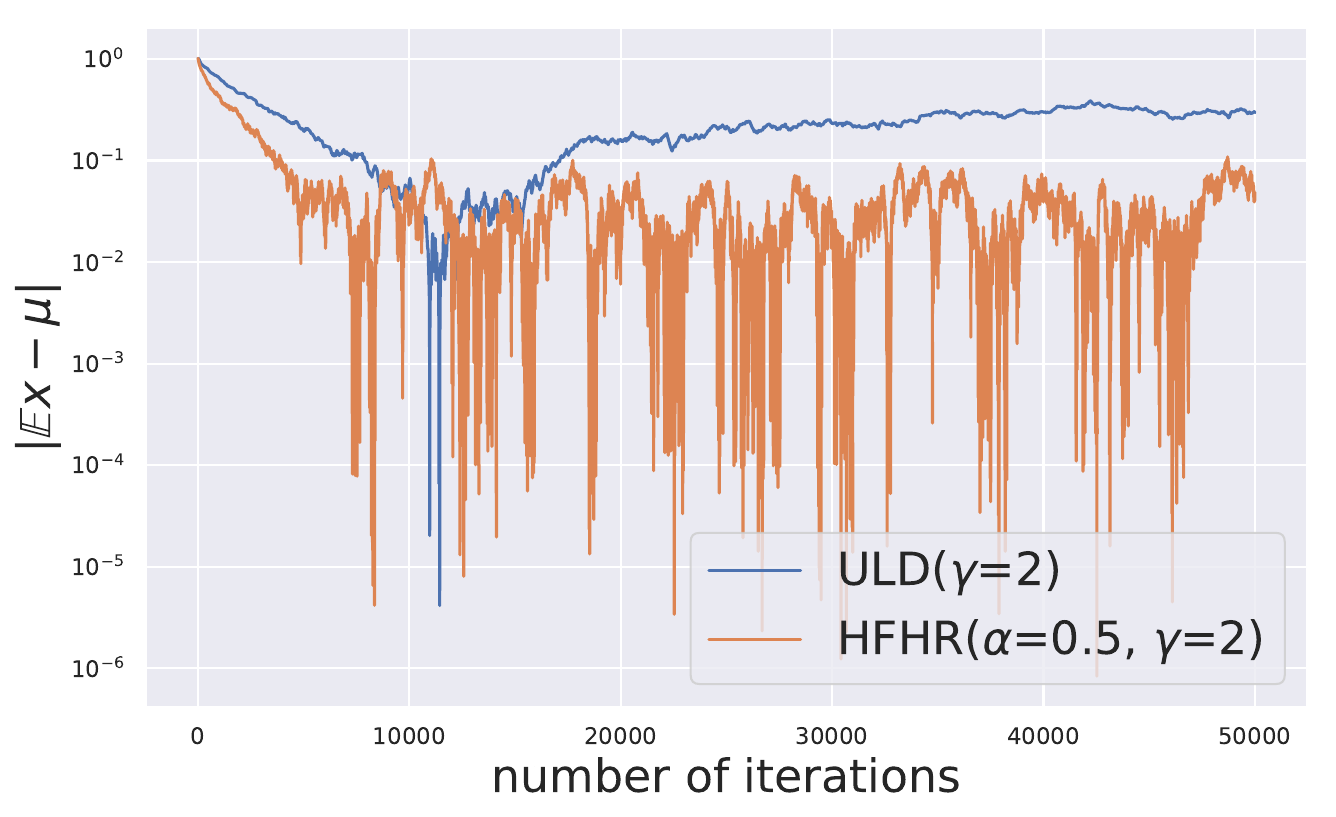}
		\caption{2D Rosenbrock} \label{fig:rosenbrock_2d}
	\end{subfigure}
	
    \caption{(a) $f_1\, (h=2)$. (b) $f_2\, (h=2)$. (c) $f_3\, (h=2.5)$. (d)$f_4\, (h=0.2)$. (e) $f_5\, (h=0.5)$. (f) $f_6\, (h=0.001)$. (g) $f_7\, (h=0.1)$. (h) $f_8\, (h=0.005)$.  $y$-axes are in log scale. 
    }\label{fig:simple}
\end{figure}

We first test 8 target distributions with simple, yet representative potential functions, summarized in Table \ref{tab:test_function}. For Gaussian targets, smoothness coefficient $L$ is available, hence we take $\gamma=2\sqrt{L}$ as suggested in \citet{dalalyan2018sampling}. To be consistent with Thm.\ref{thm:wasserstein}, closeness is measured in $W_2$ which has closed-form expression between Gaussians. For non-Gaussians, we empirically set $\gamma=2$ and measure sample quality by $\chi^2$ divergence with densities empiricially approximated by histograms. For the special case of $f_8(x,y)$, note approximating its density using a uniform-mesh-based histogram is either inaccurate or requiring the mesh to be very fine due to high nonconvexity, and we thus report the error in the $x$ component $|\mathbb{E}x - \mu|$ instead, where $\mu$ is the true mean of $x$-component.
Each algorithm uses 10,000 independent realizations for
empirical estimations.

Results are in Fig.\ref{fig:simple}. The improvement by HFHR correction can be clearly seen, although note that we did not optimize over $\alpha,\gamma$ values but simply chose the same $\gamma$ across ULD and HFHR and an arbitrary $\alpha$ additionally for HFHR. Step size $h$ however is tuned so that it is near the \textit{stability limit} of ULD algorithm, and then HFHR uses the same $h$. In the next section we'll optimize over all possible parameters so that ULD at its best performance can be compared with.

\subsection{A Nonlinear Case Study: Consistency with Theory}\label{subsec:verify_algorithm}

This section numerically verifies, more systematically, that $\alpha \neq 0$ (i.e. HFHR correction) accelerates the convergence, and optimal $\alpha$ exists (see Rmk.\ref{remark:iteration_complexity}), for which the acceleration is rather significant. In addition, how HFHR algorithm scales with the dimension is also of importance in a machine learning context, and thus the $\mathcal{O}(\sqrt{d})$ dependence given by Thm.\ref{thm:wasserstein} (in $C$, which is also inherited by Cor.\ref{corollary:iteration complexity} in the iteration complexity) will also be confirmed. 

For the purpose of checking dimension dependence, we will \emph{not} use Gaussian targets, because otherwise HFHR will decouple across different (orthogonal) dimensions, in which case an $\mathcal{O}(\sqrt{d})$ dependence is trivially true as a consequence of using $W_2$ for quantifying statistical accuracy.
Instead, we consider the potential in \citet{li2021sqrt} which is not additive across dimensions, namely
$
    f(\bs{x}) = \log \left( e^{x_1} + \cdots + e^{x_d}\right) + \frac{1}{2}\norm{\bs{x}}^2.
$
This is still a strongly convex function satisfying the assumption in Thm.\ref{thm:wasserstein}. The corresponding target is not Gaussian, we no longer have a closed form expression for $W_2$ distance, and it is computationally expensive to approximate this distance by samples. Therefore, we follow \citet{li2021sqrt} and use the error of mean instead as a surrogate because 
$
    \displaystyle
    \norm{\mathbb{E}_{\mu_k} \bs{q} - \mathbb{E}_{\mu} \bs{q} } \le W_2(\mu_k, \mu)
$
and hence the bound in Eq.\eqref{eq:discretization_wasserstein} also applies to the error in mean, and so does the iteration complexity bound in Eq.\eqref{eq:iteration_complexity}.

Fig.\ref{fig::verify_iteration_complexity} compares HFHR (Alg.\ref{alg:HFHR}) with ULD (KLMC) in terms of iteration complexity. To show that the acceleration of HFHR is not an artifact of time rescaling (which would disappear after discretization as the stability limit changes accordingly), we optimize over $h$ (by pushing both ULD and HFHR to their respective largest $h$ values that still allow monotonic convergence at a large scale), as well as $\gamma$ values, and compare the resulting best mixing times.

\begin{figure}[!h]
      	\centering
    	\includegraphics[width=0.3\textwidth]{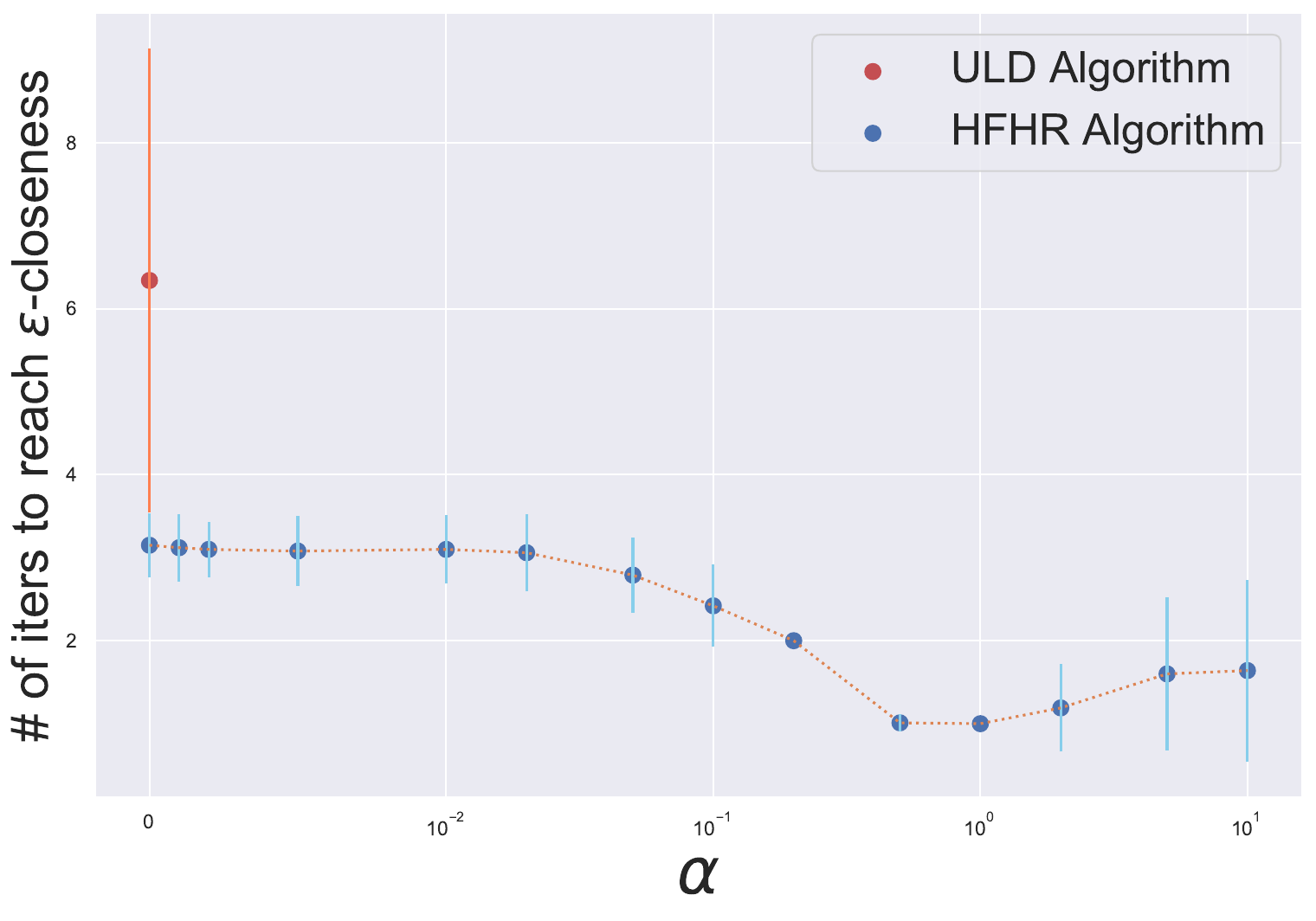}
    	\caption{Improvement of Algorithm \ref{alg:HFHR} over ULD algorithm in iteration complexity. (vertical bar = 1 standard deviation)} \label{fig::verify_iteration_complexity}
\end{figure}

More specifically, we choose the initial measure to be Dirac at $(100 \times \bs{1}_d, \bs{0}_d)$, where $\bs{1}_d, \bs{0}_d$ are $d$-dim. vectors filled with $1$ and $0$ respectively. $d=10$. We pick threshold $\epsilon=0.1$, and for each $\alpha \in \{$0, 0.001, 0.002, 0.005, 0.01, 0.02, 0.05, 0.1, 0.2, 0.5, 1, 2, 5, 10, 20, 50, 100$\}$, we try all combinations of $(\gamma, h) \in \{$0.1, 0.2, 0.5, 1, 2, 5, 10, 20, 50, 100$\} \times \left\{0.1 \times [50] \right\}$ for Algorithm \ref{alg:HFHR} (we also run ULD algorithm when $\alpha=0$), and empirically find the best combination that requires the fewest iterations to meet $\norm{\mathbb{E}_{\mu_k} \bs{q} - \mathbb{E}_{\mu} \bs{q} }  \le \epsilon$. We find that $h=5$ already surpasses the stability limit of ULD algorithm, hence the range of step size covers the largest step size that are practically usable for ULD algorithm. 100,000 independent realizations are used (evenly spread to 100 different randomization seeds).

When $\alpha > 0$, HFHR algorithm consistently outperforms ULD algorithm under optimized parameters (note it also does so when $\alpha=0$ because Alg.\ref{alg:HFHR} uses a efficiency-wise comparable but more accurate discretization than ULD algorithm). In particular, when $\alpha=0.5$ and $1$, 
which are empirically best values found for this experiment, HFHR achieves the specified $\epsilon$-closeness nearly 6$\times$ times faster than ULD, and its decreased mixing time (compared to $\alpha=0$ for the same algorithm) 
is consistent with the $\approx 0.46$ factor in Rmk.\ref{remark:iteration_complexity}).
These corroborate that the $\alpha\neq 0$ HFHR correction effect is genuine, and the resulting acceleration can be significant.

\begin{figure}[!h]
    		\centering
    		\includegraphics[width=0.3\textwidth]{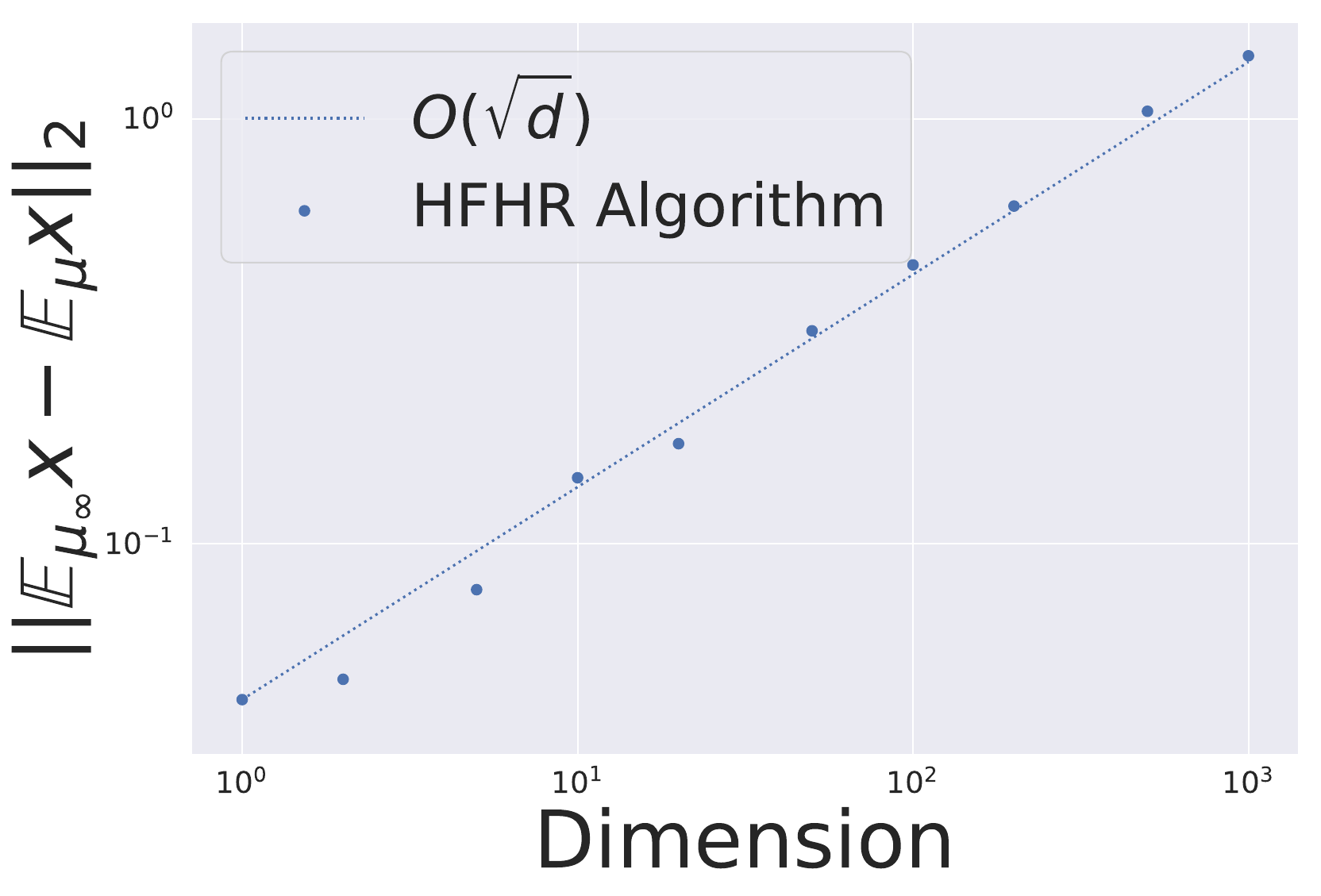}
    		\caption{$\sqrt{d}$ dependence of sampling error of Alg.\ref{alg:HFHR}} \label{fig::verify_corollary_14_d}
\end{figure}

Regarding dimension dependence, Thm.\ref{thm:wasserstein} states the HFHR sampling error is upper bounded by its discretization error, which is linear in $\sqrt{d}$. This is consistent with empirical observation in Fig.\ref{fig::verify_corollary_14_d}, where we experiment with $d \in \left\{1, 2, 5, 10, 20, 50, 100, 200, 500, 1000\right\}$. For each $d$, we fix $\gamma=2, \alpha=1, h=0.1$, choose a large enough $T=10$, run 1,000 independent realizations of HFHR algorithm, and estimate the sampling error using the surrogate.

\subsection{Bayesian Neural Network}\label{subsec:bnn}
To test the efficacy of HFHR on practical non-convex problems, we consider Bayesian neural network (BNN) which is a compelling learning model \citep{wilson2020case}; however, the focus won't be on its learning capability, and instead we just consider its training, which amounts to a real-life, high-dimensional, multi-modal example of sampling tasks. It no longer satisfies the conditions of our analysis, and our goal is to show HFHR still accelerates. We use fully-connected network with [22, 10, 2] neurons, ReLU, standard Gaussian prior for all parameters, and compare ULD and HFHR on UCI data set Parkinson
\citep{Dua:2019}.

Choices of hyper-parameter for Algorithm \ref{alg:HFHR} and ULD algorithm are systematically investigated. For each pair $(\gamma, \alpha) \in \{0.1, 0.5, 1,  5, 10,  50, 100\}^2$, we empirically tune the step size to the stability limit of ULD algorithm, simulate 1,000 independent realizations, and use the ensemble to conduct Bayesian posterior prediction. HFHR will then use the same step size. For each $\gamma$, we plot the negative log likelihood of HFHR algorithm (with different $\alpha$ choices) and ULD algorithm on training and test data in Figure \ref{fig:bnn_2}.  

Fig.\ref{fig:bnn_2} indicates that HFHR converges significantly faster than ULD in a wide range of setups. Obviously, the log-strongly-concave assumption required in Thm.\ref{thm:wasserstein} does not hold for multimodal target distributions. However, this numerical result shows that HFHR still accelerates ULD for highly complex models such as BNN, even when there is no obvious theoretical guarantee.
It showcases the applicability and effectiveness of HFHR as a general sampling algorithm.

\begin{figure}[!h]
\centering
    \begin{subfigure}{0.23\textwidth}
		\centering
		\includegraphics[width=\textwidth]{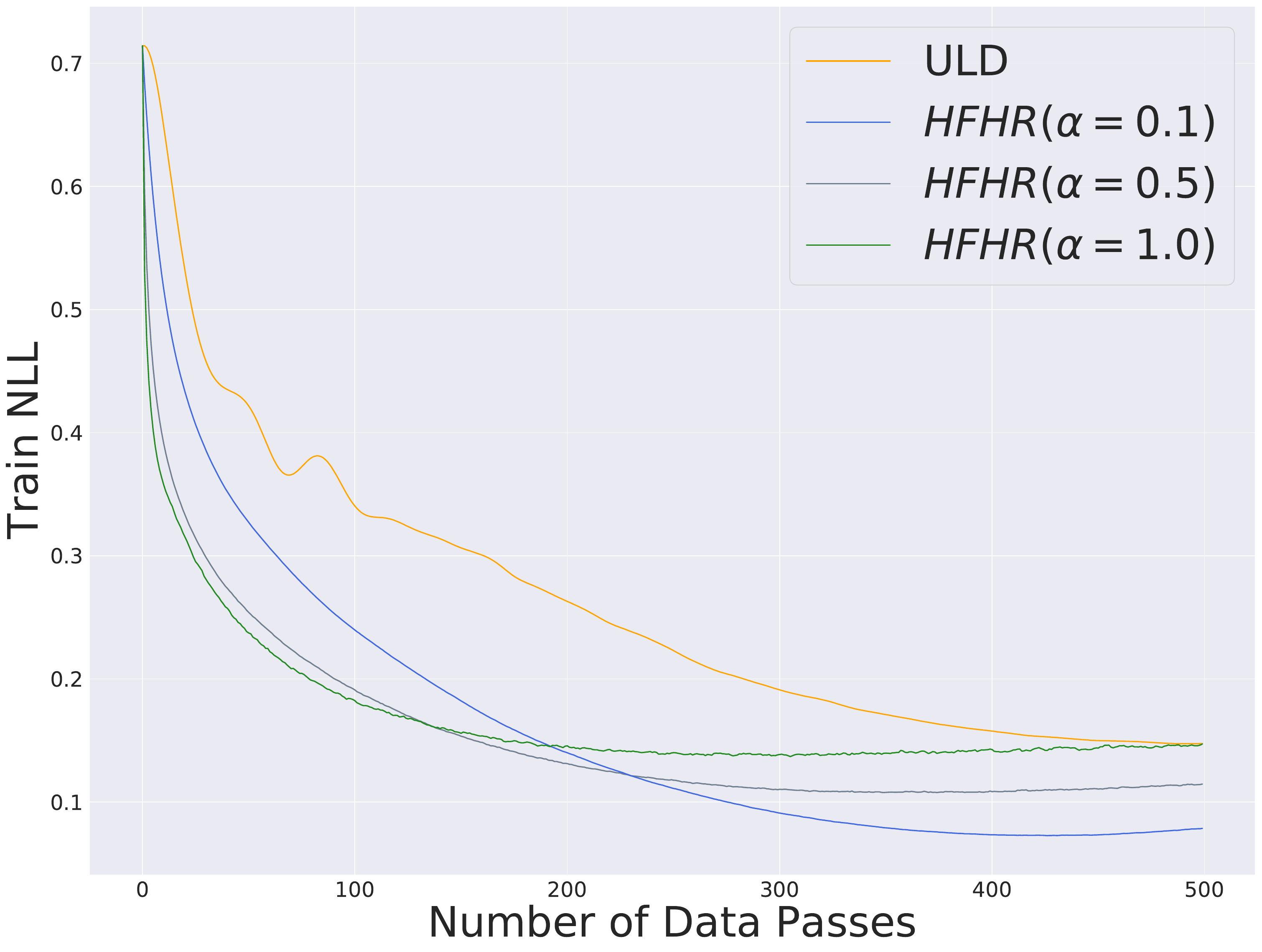}
		\caption{$\gamma=0.1 \, (h=0.005)$} 	
	\end{subfigure}
    \begin{subfigure}{0.23\textwidth}
		\centering
		\includegraphics[width=\textwidth]{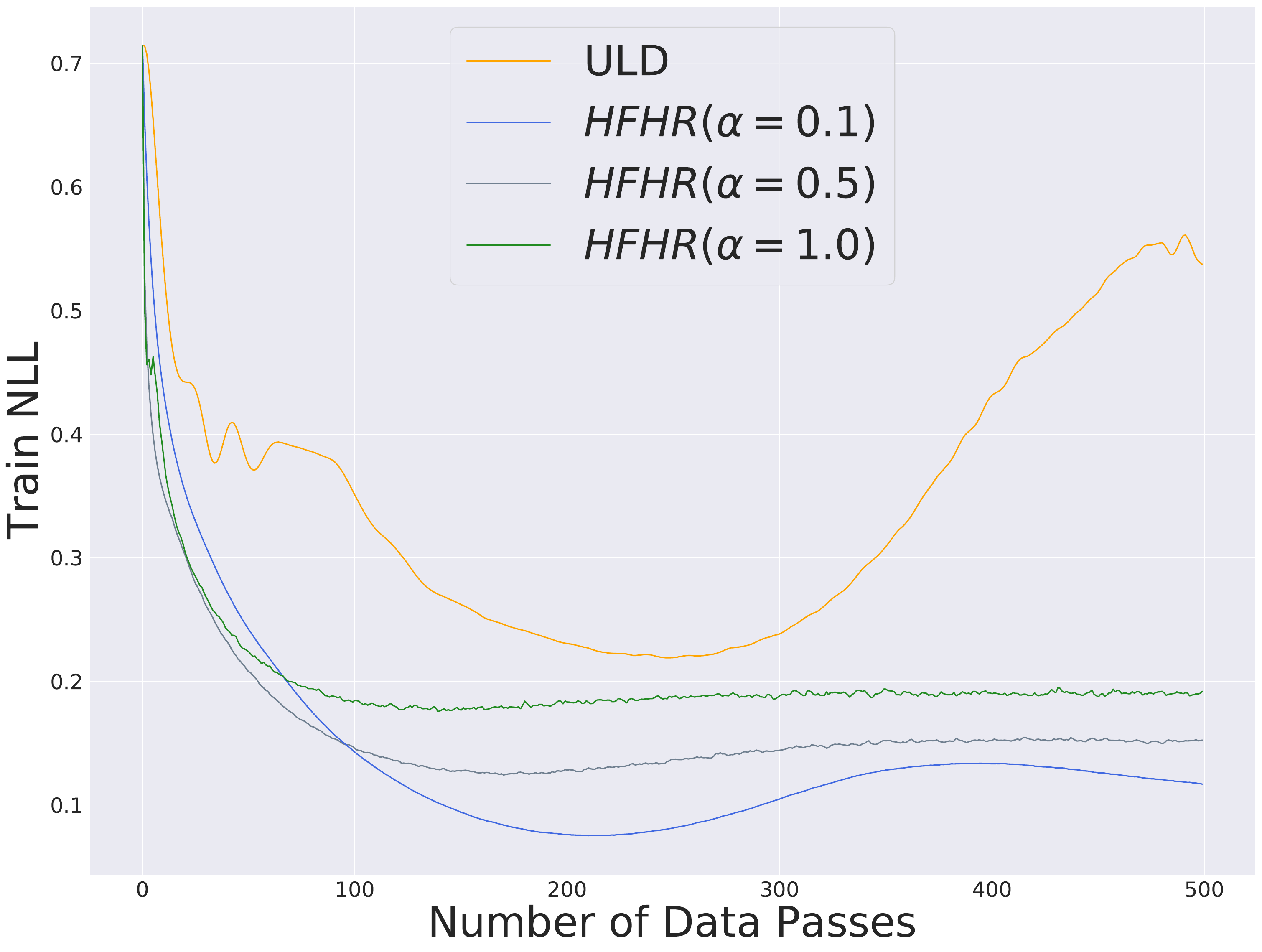}
		\caption{$\gamma=0.1 \, (h=0.01)$} \label{fig:r1c2}	
	\end{subfigure}
    \begin{subfigure}{0.23\textwidth}
		\centering
		\includegraphics[width=\textwidth]{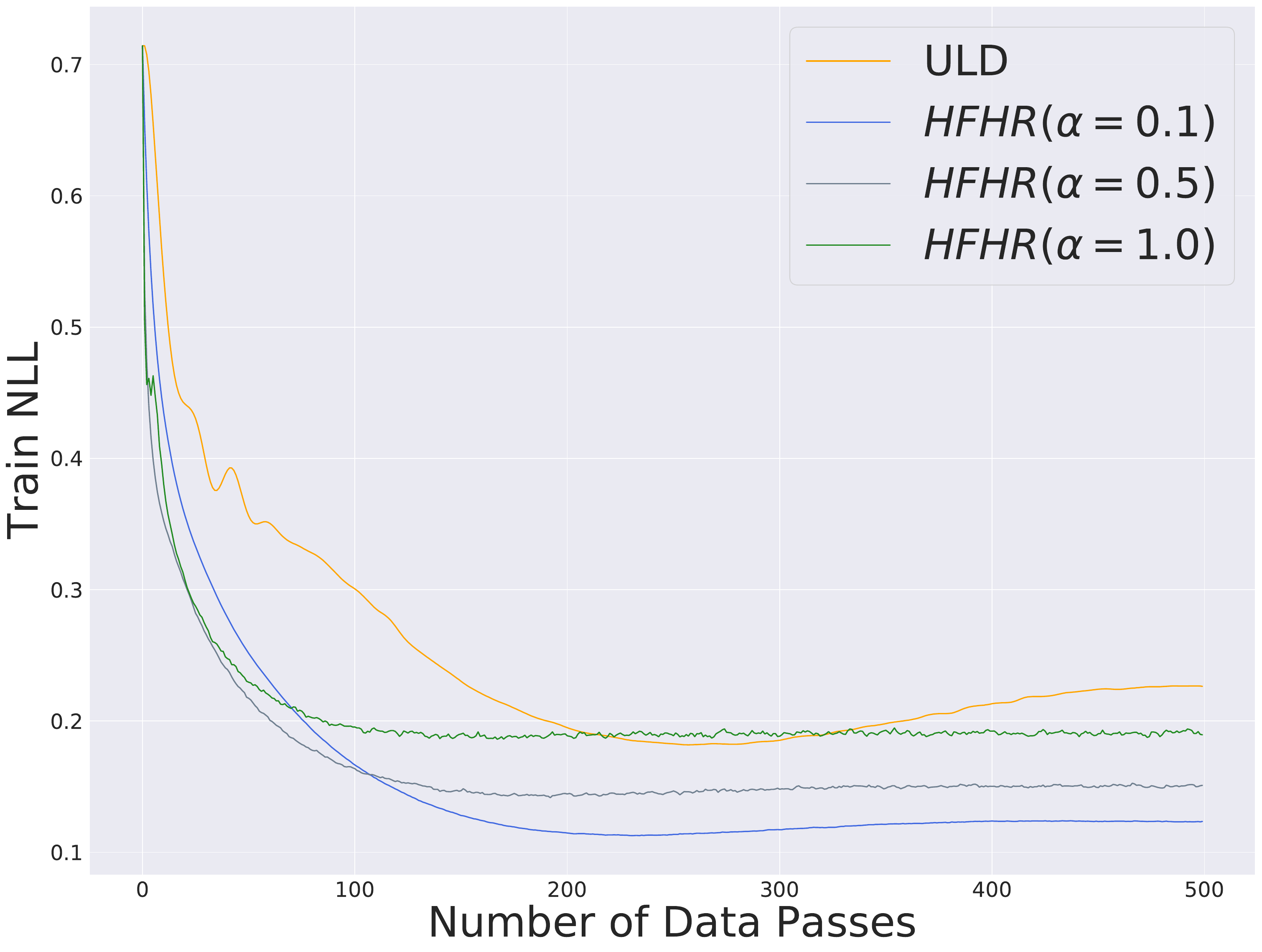}
		\caption{$\gamma=1 \,  (h=0.01)$} 
	\end{subfigure}
    \begin{subfigure}{0.23\textwidth}
		\centering
		\includegraphics[width=\textwidth]{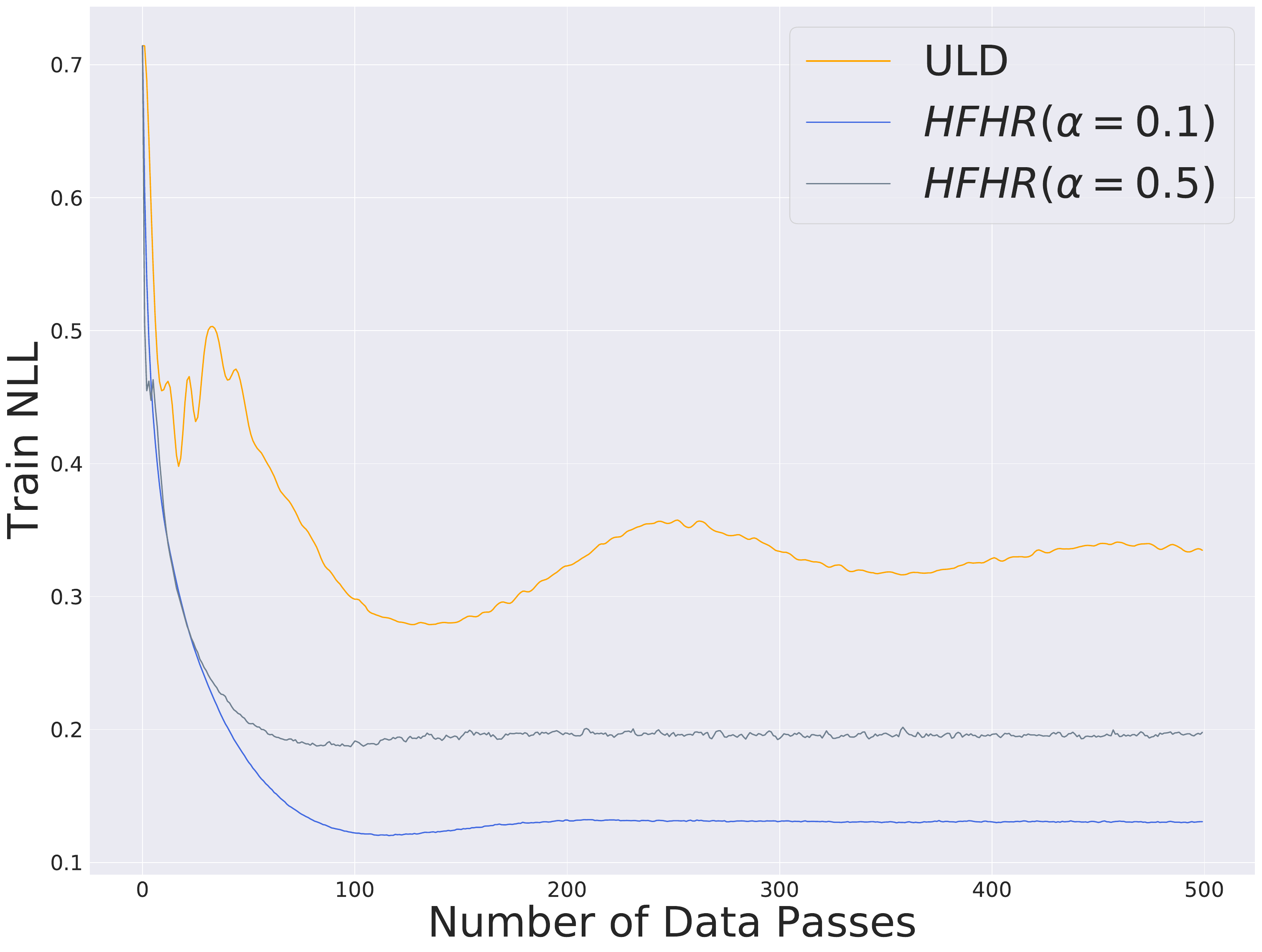}
		\caption{$\gamma=1 \, (h=0.02)$} \label{fig:r2c2}	
	\end{subfigure}
	\begin{subfigure}{0.23\textwidth}
		\centering
		\includegraphics[width=\textwidth]{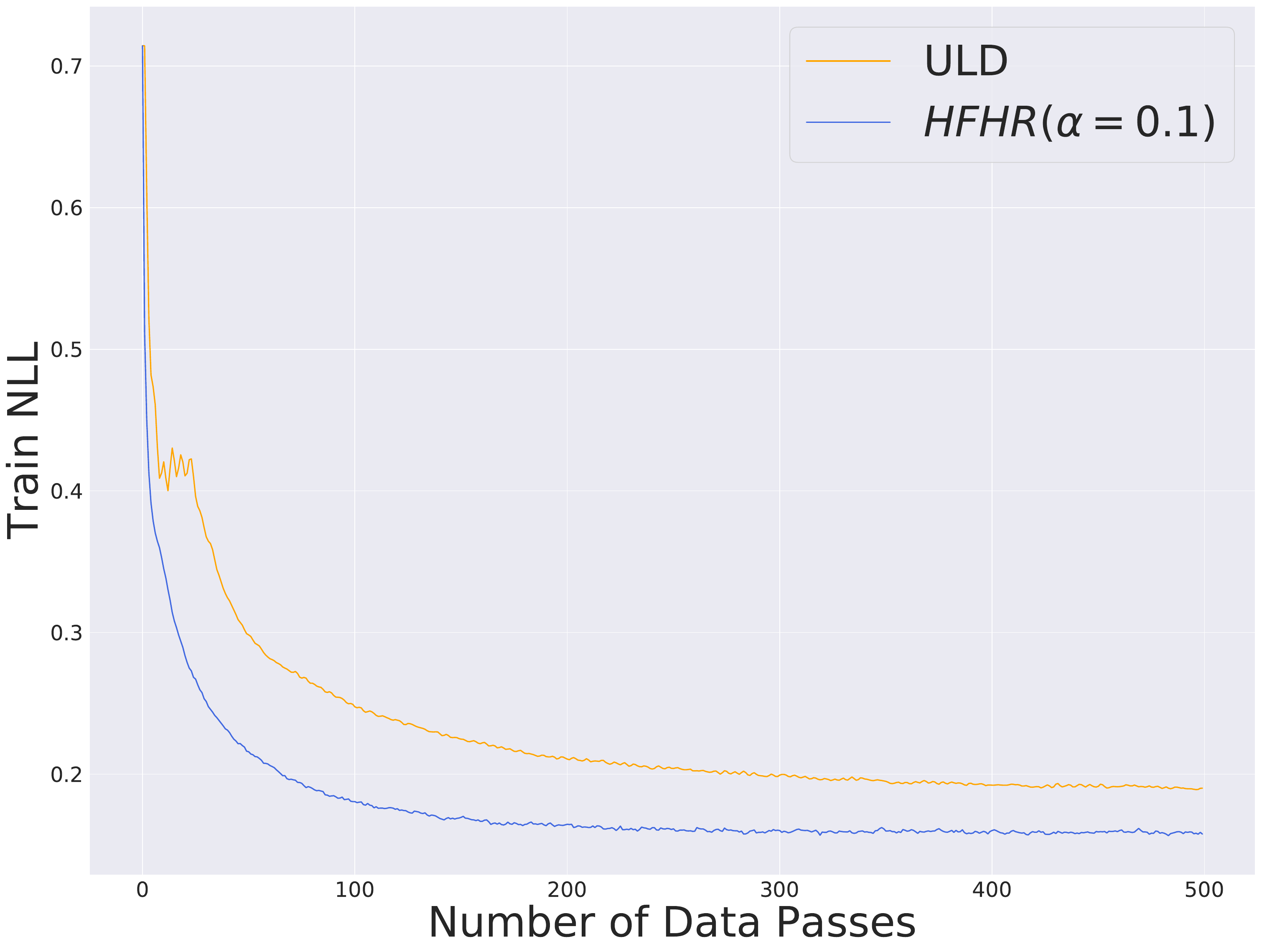}
		\caption{$\gamma=10 \, (h=0.05)$}
	\end{subfigure}
    \begin{subfigure}{0.23\textwidth}
		\centering
		\includegraphics[width=\textwidth]{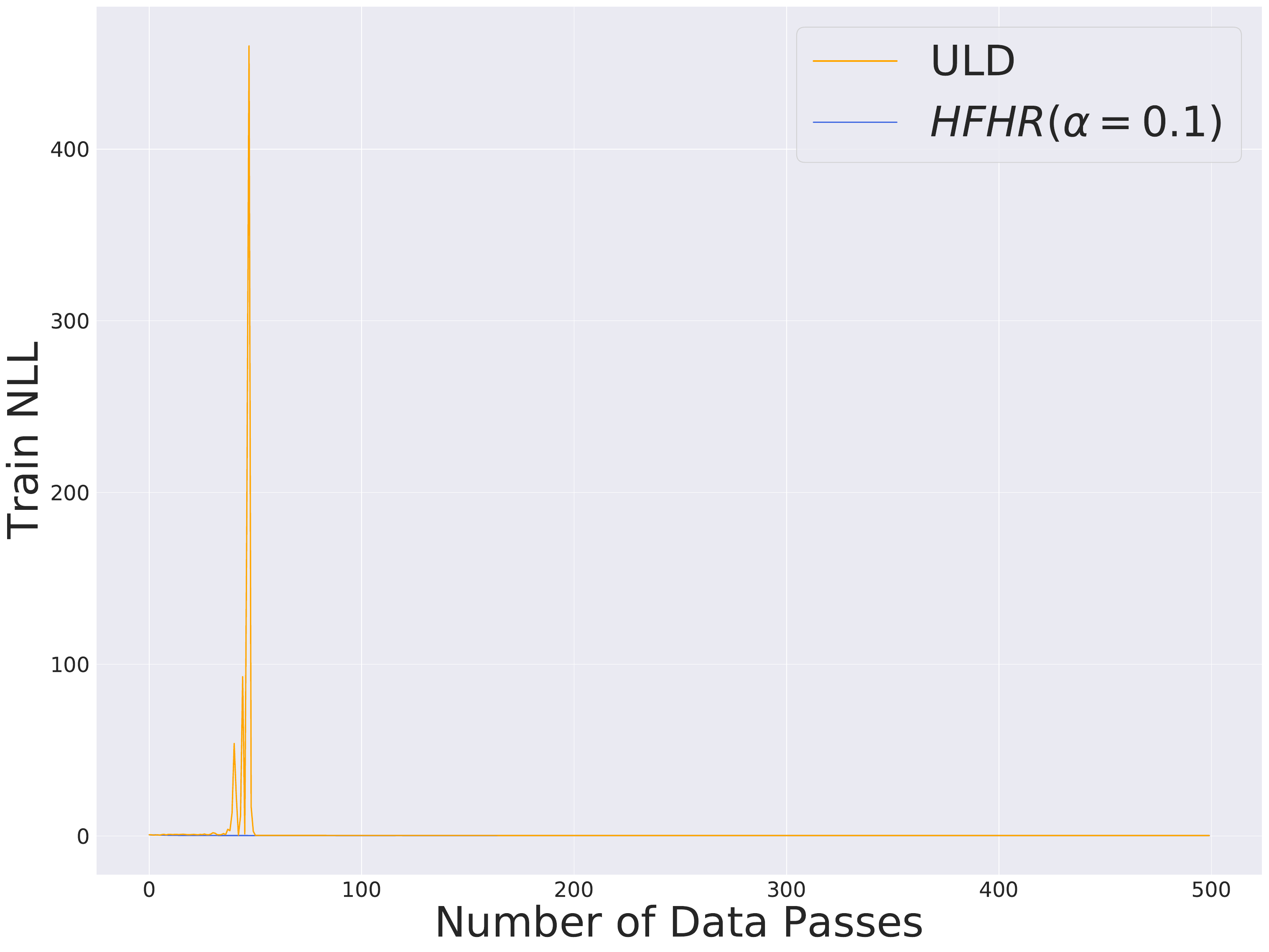}
		\caption{$\gamma=10 \, (h=0.1)$} 
	\end{subfigure}
    \begin{subfigure}{0.23\textwidth}
		\centering
		\includegraphics[width=\textwidth]{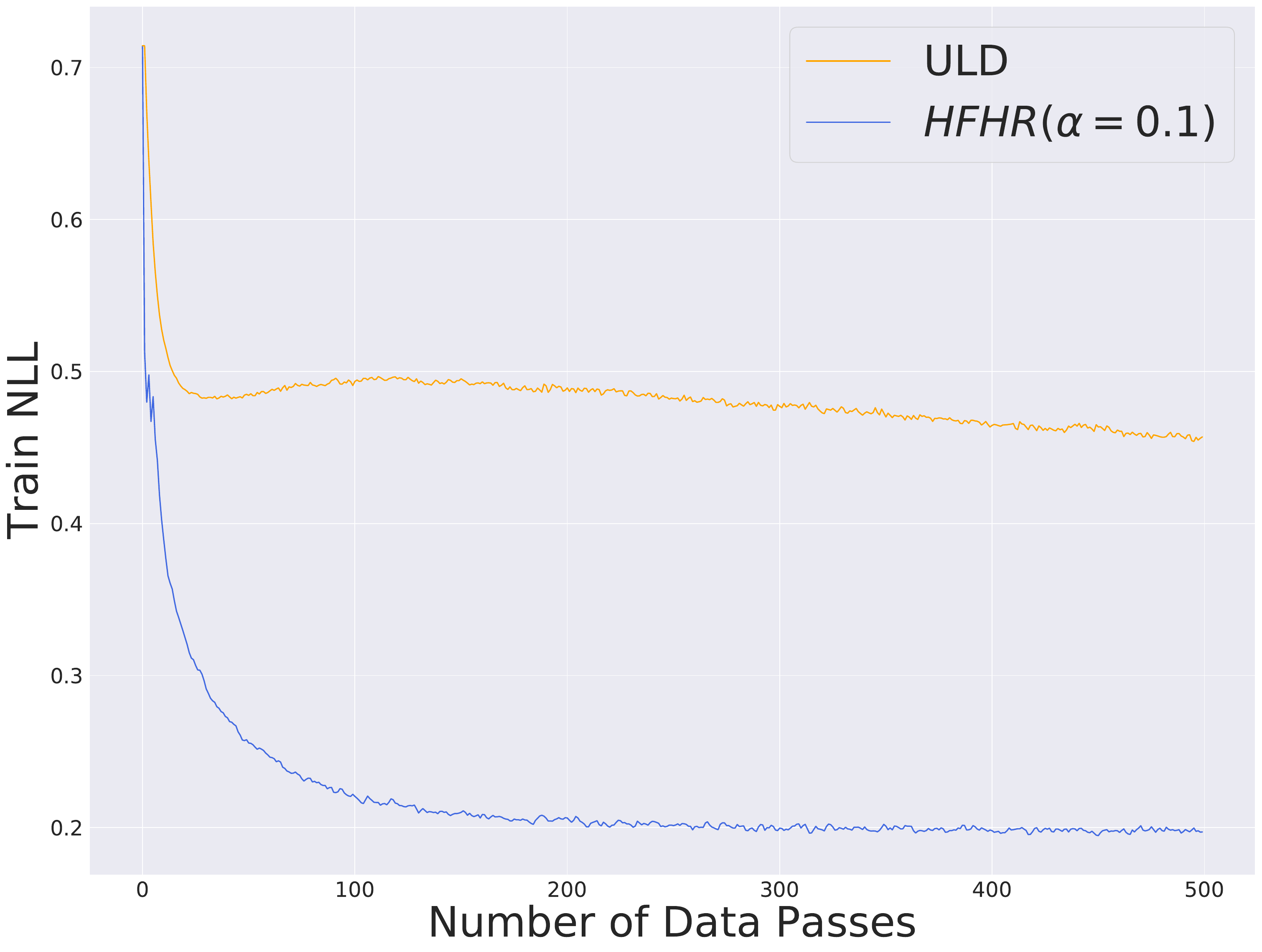}
		\caption{$\gamma=100 \, (h=0.1)$} 
	\end{subfigure}
    \begin{subfigure}{0.23\textwidth}
		\centering
		\includegraphics[width=\textwidth]{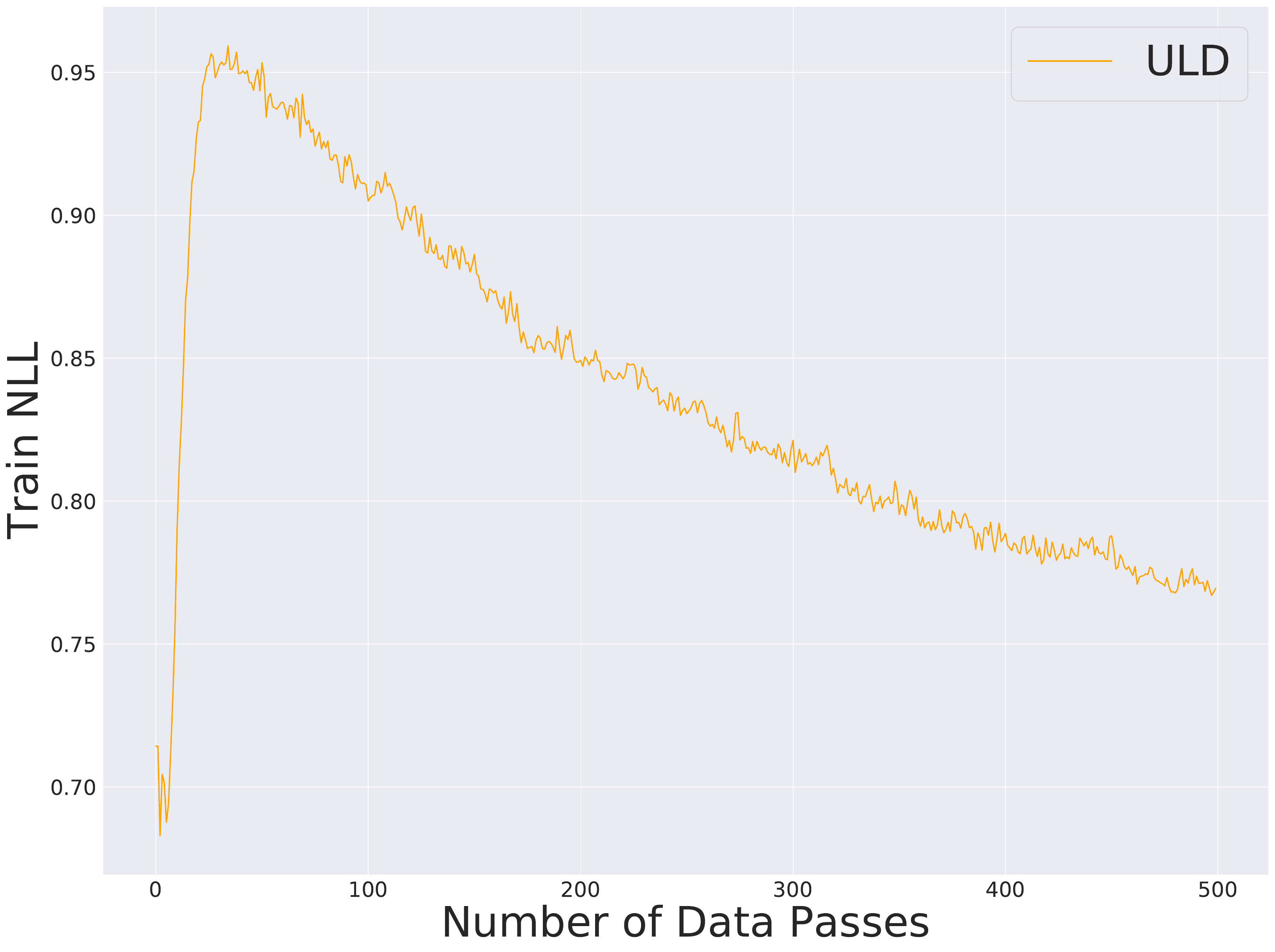}
		\caption{$\gamma=100 \, (h=0.2)$} 	
	\end{subfigure}
    \caption{Training Negative Log-Likelihood (NLL) for various $\gamma$. Left column uses step sizes that are close to the stability limit of ULD algorithm, as further increased step size in right column no longer gives stability/monotonicity. Unstable cases where $\alpha$ is too large are not drawn (recall $\alpha,\gamma,h$ constrain each other; see e.g., Rmk.\ref{remark:iteration_complexity} for intuitions in convex setups).}\label{fig:bnn_2}
\end{figure}

\vskip -0.1cm
\section{Conclusion and Discussion}
\vskip -0.1cm
This paper proposes HFHR dynamics, a NAG-optimizer-based diffusion process. Its discretizations give a family of accelerated sampling algorithms. To demonstrate the acceleration enabled by HFHR, the geometric ergodicity of HFHR (both the continuous and discretized versions) is quantified, and its convergence is provably faster than Underdamped Langevin Dynamics, which by itself is often already considered as an accelerated version of Overdamped Langevin Dynamics. Since HFHR adopts a new perspective, which is to turn the \textbf{finite} learning rate advantage of NAG-SC optimizer into a sampling counterpart, there are a number of directions in which this work can be extended: (i) HFHR dynamics can be discretized in different ways resulting in different algorithms. Two popular discretizations are considered here and one theoretically analyzed, but other discretizations could also be used and possibly lead to favorable performances. (ii) To scale HFHR up to large data sets, full gradient may be replaced by stochastic gradient (SG) --- how to quantify, and hence optimize the performance of SG-HFHR? (iii) Can the generalization ability of HFHR-trained learning models (e.g., BNN) be quantified, and how does it compare with that by LMC, KLMC, or other dynamics-based samplers? These will be future work. 


\section*{Acknowledgements}
The authors sincerely thank Michael Tretyakov, Yian Ma, Wenlong Mou, and Lingjiong Zhu for helpful discussions. MT was partially supported by NSF grants DMS-1847802 and ECCS-1936776. This work was initiated when HZ was a professor at Georgia Tech.

\bibliography{reference}
\bibliographystyle{icml2022}

\newpage
\appendix
\onecolumn
\section{Additional Notations}\label{app:notation}
We introduce a few notations that are used in the main text as well as some proof. When $\nabla f$ is $L$-Lipschitz, the drift term $\begin{bmatrix} \bs{p} - \alpha \nabla f(\bs{q}) \\ -\gamma \bs{p} - \nabla f(\bs{q})\end{bmatrix}$ in HFHR dynamics is also $L^\prime$-Lipschitz, as proved in Lemma \ref{lemma:lipschitz}, where
\[
    L^\prime = \sqrt{2} \max\left\{ \sqrt{1 + \alpha^2} \max\left\{\frac{1}{\sqrt{2}}, L \right\} , \sqrt{1 + \gamma^2}\right\}.
\]

We show in Lemma \ref{lemma:contraction} that a linear-transformed HFHR dynamics satisfies the nice contraction property, the linear transformation $P$ we use is defined as
\[
    P = \begin{bmatrix} \gamma I & I \\ 0 & \sqrt{1+\alpha\gamma}I\end{bmatrix} \in \mathbb{R}^{2d \times 2d}.
\]
Denote the largest and the smallest singular value of $P$ by
\begin{align*}
    \sigma_{\text{max}} =& \sqrt{\frac{\alpha \gamma}{2}+\frac{\gamma^{2}}{2} + \frac{\sqrt{\alpha^{2} \gamma^{2}-2 \alpha \gamma^{3}+4 \alpha \gamma+\gamma^{4}+4}}{2}+1}, \\
    \sigma_{\text{min}} =&s \sqrt{\frac{\alpha \gamma}{2}+\frac{\gamma^{2}}{2}-\frac{\sqrt{\alpha^{2} \gamma^{2}-2 \alpha \gamma^{3}+4 \alpha \gamma+\gamma^{4}+4}}{2}+1}
\end{align*}
and its condition number by 
\[
    \kappa^\prime = \frac{\sigma_{\text{max}}}{\sigma_{\text{min}}} = \sqrt{\frac{\frac{\alpha \gamma}{2}+\frac{\gamma^{2}}{2} + \frac{\sqrt{\alpha^{2} \gamma^{2}-2 \alpha \gamma^{3}+4 \alpha \gamma+\gamma^{4}+4}}{2}+1}{\frac{\alpha \gamma}{2}+\frac{\gamma^{2}}{2}-\frac{\sqrt{\alpha^{2} \gamma^{2}-2 \alpha \gamma^{3}+4 \alpha \gamma+\gamma^{4}+4}}{2}+1}}.
\]
The rate $\lambda^\prime$ of exponential convergence of transformed HFHR dynamics is characterized in Lemma \ref{lemma:contraction} and is defined as
\[
    \lambda^\prime = \min\left\{ \frac{m}{\gamma} + \alpha m, \frac{\gamma^2 - L}{\gamma} \right\}
\]
given that $\gamma^2 > L$.

\section{Proofs for the Continuous Dynamics}
\emph{Notations and definitions can be found in Sec.\ref{sec:notation}}.
\subsection{Proof of Theorem \ref{thm:invariant_distribution}}
\label{sec:proofOfInvariantDistribution}
\begin{proof}
The Fokker-Plank equation of HFHR is given by
\[
    \partial_t \rho_t = - \nablax \cdot \left(\m{\pbold \\ -\nabla f(\qbold)} \rho_t \right) 
    + \alpha\left(\nablaq \cdot ( \nabla f(\qbold) \rho_t ) + \Delta_{\qbold} \rho_t\right)
    + \gamma\left(\nablap \cdot ( \pbold \rho_t ) + \Delta_{\pbold} \rho_t\right)
\]
where $\nablax = (\nablaq, \nablap)$. For $\pi \propto e^{-f(\qbold) - \frac{1}{2} \|\pbold\|^2}$, we have 
\begin{align*}
    & \nablax \cdot \left(\m{\pbold \\ -\nabla f(\qbold)} \pi \right) = \innerprod{\m{\pbold \\ -\nabla f(\qbold)}}{\nablax \pi} = 0, \\
    & \Delta_{\qbold} \pi = -\nablaq \cdot(\pi \nabla f(\qbold)) \\
    & \Delta_{\pbold} \pi = -\nablap \cdot(\pi \pbold)
\end{align*}
Therefore $\partial_t \pi = 0$ and hence $\pi$ is the invariant distribution of HFHR.
\end{proof}

\subsection{Proof of Theorem \ref{thm:exp_convergence_coupling}}
\begin{proof}
Consider two copies of HFHR that are driven by the same Brownian motion
\[
    \begin{cases}
        d\qbold_t = (\pbold_t - \alpha \nabla f(\qbold_t))dt + \sqrt{2\alpha} d\bs{B}^1_t \\
        d\pbold_t = (-\gamma \pbold_t - \nabla f(\qbold_t))dt + \sqrt{2\gamma} d\bs{B}^2_t
    \end{cases},
    \quad
    \begin{cases}
        d\tilde{\qbold}_t = (\tpbold_t - \alpha \nabla f(\tilde{\qbold}_t))dt + \sqrt{2\alpha} d\bs{B}^1_t \\
        d\tilde{\pbold_t} = (-\gamma \tpbold_t - \nabla f(\tilde{\qbold}_t))dt + \sqrt{2\gamma} d\bs{B}^2_t
    \end{cases},
\]
where we set $(\tilde{\qbold}_0, \tpbold_0) \sim \pi$,  $\pbold_0 = \tpbold_0$ and $\qbold_0$ such that
\[
    W_2^2(\mu_0, \mu) = \mathbb{E} \left[ \| \qbold_0 - \tilde{\qbold}_0 \|^2_2 \right], \quad \qbold_0 \sim \mu_0
\]
Denote $\phipsi = P\deltaqp$ where $P$ is defined in Appendix \ref{app:notation}. By Lemma \ref{lemma:contraction} and the assumption on $\alpha, \gamma$, we have
\[
    \norm{\phipsi}^2 \le e^{-2(\frac{m}{\gamma} + m \alpha) t} \norm{\begin{bmatrix} \bs{\phi}_0 \\ \bs{\psi}_0\end{bmatrix}}^2.
\]
Therefore we obtain
\begin{align*}
    W_2^2(\mu_t, \mu) =& \inf_{(\bs{q}_t, \tilde{\bs{q}}_t) \sim \Pi(\mu_t, \mu)} \mathbb{E}\norm{\bs{q}_t - \tilde{\bs{q}}_t}^2 \\
    \le& \inf_{(\bs{q}_t, \tilde{\bs{q}}_t) \sim \Pi(\mu_t, \mu), (\bs{p}_t, \tilde{\bs{p}}_t) \sim \Pi(\nu_t, \nu)} \mathbb{E} \norm{ \begin{bmatrix} \bs{q}_t - \tilde{\bs{q}}_t \\ \bs{p}_t - \tilde{\bs{p}}_t\end{bmatrix} }^2 \\
    \le& \mathbb{E}\|P^{-1}\|_2^2 \norm{\phipsi}^2 \\
    \le& \mathbb{E} \|P^{-1}\|_2^2 e^{-2(\frac{m}{\gamma} + m\alpha)t} \norm{\begin{bmatrix} \bs{\phi}_0 \\ \bs{\psi}_0\end{bmatrix}}^2 \\
    \le& (\kappa^\prime)^2  e^{-2(\frac{m}{\gamma} + m\alpha)t} \norm{\begin{bmatrix} \bs{q}_0 - \tilde{\bs{q}}_0  \\ \bs{p}_0 - \tilde{\bs{p}}_0\end{bmatrix}}^2 \\
    =& (\kappa^\prime)^2  e^{-2(\frac{m}{\gamma} + m\alpha)t} W_2^2(\mu_0, \mu)
\end{align*}
Taking square root yields the desired result.

\end{proof}

\section{Arbitrary Long Time Discretization Error of Algorithm \ref{alg:HFHR} }
\begin{theorem}\label{thm:discretization}
Under Conditions A\ref{asp:standard} and further assume the function $\nabla \Delta f$ grows at most linearly, i.e., $\norm{\nabla \Delta f(\bs{q})} \le G\sqrt{1 + \norm{\bs{q}}^2 }, \forall \bs{q} \in \mathbb{R}^d$. Also suppose $\gamma$ in HFHR dynamics satisfy $\gamma^2 > L$. Then there exist $C, h_0 > 0$, such that for  $0 < h \le h_0$, we have
\[
    \left(\mathbb{E}\norm{\bs{x}_k - \bar{\bs{x}}_k}^2\right)^\frac{1}{2}  \le C h 
\]
where $\bar{\bs{x}}_k$ is the $k$-th iterate of Algorithm \ref{alg:HFHR} with step size $h$ starting from $\bs{x}_0$, $\bs{x}_k$ is the solution of HFHR dynamics at time $kh$, starting from $\bs{x}_0$. This result holds uniformly for all $k \ge 0$ and $k$ can go to $\infty$. In particular, $C = \mathcal{O}(\sqrt{d})$ and if $\gamma - \frac{L+m}{\gamma}\ge m\alpha$, then there exists $b > 0$, independent of $\alpha$ and is of order $\mathcal{O}(\sqrt{d})$, such that  
\begin{equation}
    C \le \frac{b}{m}(\alpha^2 - \frac{\alpha}{\gamma} + \frac{1}{\gamma^2}).
\end{equation}
\end{theorem}
\begin{proof}
Denote $t_k = kh$, the solution of the HFHR dynamics at time $t$ by $\bs{x}_{0, \bs{x}_0}(t)$, the $k$-th iterates of the Strang's splitting method of HFHR dynamics by $\bar{\bs{x}}_{0, \bs{x}_0}(kh)$. Both $\bs{x}_{0, \bs{x}_0}(t)$ and $\bar{\bs{x}}_{0, \bs{x}_0}(kh)$ start from the same initial value $\bs{x}_0$. The linear transformation $P$ defined in Appendix \ref{app:notation}, transforms the solution of HFHR dynamics into $\bs{y}_{0, P\bs{x}_0}(t) = P \bs{x}_{0, \bs{x}_0}(t)$ and the Strang's splitting discretization of HFHR into $\bar{\bs{y}}_{0, P \bs{x}_0}(t) = P\bar{\bs{x}}_{0, \bs{x}_0}(t)$.

For the ease of notation, we write $\bs{y}_{0, \bs{y}_0}(t_k)$ as $\bs{y}_k$ and $\bar{\bs{y}}_{0, \bs{y}_0}(t_k)$ as $\bar{\bs{y}}_k$. We have the following identity
\begin{align*}
    \mathbb{E} \norm{\bs{y}_{k+1} - \bar{\bs{y}}_{k+1}}^2
    =& \mathbb{E} \norm{\bs{y}_{t_k, \bs{y}_k}(h) - \bar{\bs{y}}_{t_k, \bar{\bs{y}}_k}(h) }^2\\
    =& \mathbb{E} \norm{\bs{y}_{t_k, \bs{y}_k}(h) - \bs{y}_{t_k, \bar{\bs{y}}_k}(h) + \bs{y}_{t_k, \bar{\bs{y}}_k}(h) - \bar{\bs{y}}_{t_k, \bar{\bs{y}}_k}(h) }^2 \\
    =& \underbrace{\mathbb{E}\norm{\bs{y}_{t_k, \bs{y}_k}(h) - \bs{y}_{t_k, \bar{\bs{y}}_k}(h)}^2}_{\circled{1}} + \underbrace{\mathbb{E}\norm{\bs{y}_{t_k, \bar{\bs{y}}_k}(h) - \bar{\bs{y}}_{t_k, \bar{\bs{y}}_k}(h)}^2}_{\circled{2}} \\
    +& 2 \underbrace{\mathbb{E} \left\langle \bs{y}_{t_k, \bs{y}_k}(h) - \bs{y}_{t_k, \bar{\bs{y}}_k}(h) , \bs{y}_{t_k, \bar{\bs{y}}_k}(h) - \bar{\bs{y}}_{t_k, \bar{\bs{y}}_k}(h)\right\rangle}_{\circled{3}}
\end{align*}
By Lemma \ref{lemma:contraction}, when $0 < h < \frac{1}{2\lambda^\prime}$, term \circled{1} can be upper bounded as
\begin{align*}
    \mathbb{E}\norm{\bs{y}_{t_k, \bs{y}_k}(h) - \bs{y}_{t_k, \bar{\bs{y}}_k}(h)}^2 \le& e^{-2\lambda^\prime h} \mathbb{E}\norm{\bs{y}_k - \bar{\bs{y}}_k}^2 \\
    \le& \left(1 -2\lambda^\prime h + 2(\lambda^\prime)^2h^2 \right)\mathbb{E}\norm{\bs{y}_k - \bar{\bs{y}}_k}^2 \\
    \le& \left(1 - \lambda^\prime h  \right)\mathbb{E}\norm{\bs{y}_k - \bar{\bs{y}}_k}^2 
\end{align*}
where the second inequality is due to $e^{-x} \le 1 - x + \frac{x^2}{2}, \forall x > 0$.

For term \circled{2}, we have by Lemma \ref{lemma:local} that
\begin{align*}
    \mathbb{E}\norm{\bs{y}_{t_k, \bar{\bs{y}}_k}(h) - \bar{\bs{y}}_{t_k, \bar{\bs{y}}_k}(h)}^2 \le \sigma_{\text{max}}^2 \, \mathbb{E}\norm{\bs{x}_{t_k, \bar{\bs{x}}_k}(h) - \bar{\bs{x}}_{t_k, \bar{\bs{x}}_k}(h)}^2 \le \sigma_{\text{max}}^2 \, C_2^2 h^3
\end{align*}
where $\sigma_{\text{max}}$ is the largest singular value of matrix $P$.  

For term \circled{3}, we have by Lemma \ref{lemma:Z} that 
\begin{align*}
    &2\mathbb{E} \left\langle \bs{y}_{t_k, \bs{y}_k}(h) - \bs{y}_{t_k, \bar{\bs{y}}_k}(h) , \bs{y}_{t_k, \bar{\bs{y}}_k}(h) - \bar{\bs{y}}_{t_k, \bar{\bs{y}}_k}(h)\right\rangle \\
    =& 2\mathbb{E} \left\langle \bs{y}_k - \bar{\bs{y}}_k + \bs{z} , \bs{y}_{t_k, \bar{\bs{y}}_k}(h) - \bar{\bs{y}}_{t_k, \bar{\bs{y}}_k}(h)\right\rangle \\
    =& \underbrace{2\mathbb{E} \left\langle \bs{y}_k - \bar{\bs{y}}_k , \bs{y}_{t_k, \bar{\bs{y}}_k}(h) - \bar{\bs{y}}_{t_k, \bar{\bs{y}}_k}(h)\right\rangle}_{\circled{3a}} + \underbrace{2\mathbb{E} \left\langle  \bs{z} , \bs{y}_{t_k, \bar{\bs{y}}_k}(h) - \bar{\bs{y}}_{t_k, \bar{\bs{y}}_k}(h)\right\rangle}_{\circled{3b}}
\end{align*}

For term \circled{3a}, by the tower property of conditional expectation, we have
\begin{align*}
    2\mathbb{E} \left\langle \bs{y}_k - \bar{\bs{y}}_k , \bs{y}_{t_k, \bar{\bs{y}}_k}(h) - \bar{\bs{y}}_{t_k, \bar{\bs{y}}_k}(h)\right\rangle
    =& 2\mathbb{E}  \left[\mathbb{E} \left[ \left\langle \bs{y}_k - \bar{\bs{y}}_k , \bs{y}_{t_k, \bar{\bs{y}}_k}(h) - \bar{\bs{y}}_{t_k, \bar{\bs{y}}_k}(h)\right\rangle \bigg| \mathcal{F}_k \right]\right] \\
    =& 2\mathbb{E} \left\langle \bs{y}_k - \bar{\bs{y}}_k ,  \mathbb{E} \left[   \bs{y}_{t_k, \bar{\bs{y}}_k}(h) - \bar{\bs{y}}_{t_k, \bar{\bs{y}}_k}(h) \bigg| \mathcal{F}_k \right]\right\rangle \\
    \le& 2 \sqrt{\mathbb{E} \norm{\bs{y}_k - \bar{\bs{y}}_k}^2}  \sqrt{\mathbb{E} \norm{ \mathbb{E} \left[   \bs{y}_{t_k, \bar{\bs{y}}_k}(h) - \bar{\bs{y}}_{t_k, \bar{\bs{y}}_k}(h) \bigg| \mathcal{F}_k \right]}^2} \\
    \le& 2 \sqrt{ \mathbb{E} \norm{\bs{y}_k - \bar{\bs{y}}_k}^2}  \sqrt{ \sigma_{\text{max}}^2 \mathbb{E} \norm{ \mathbb{E} \left[   \bs{x}_{t_k, \bar{\bs{x}}_k}(h) - \bar{\bs{x}}_{t_k, \bar{\bs{x}}_k}(h) \bigg| \mathcal{F}_k \right]}^2} \\
    \le& 2 \sqrt{ \mathbb{E} \norm{\bs{y}_k - \bar{\bs{y}}_k}^2}  \sqrt{ \sigma_{\text{max}}^2 C_1^2  h^4} \\
    \le& 2\sigma_{\text{max}} C_1 \sqrt{ \mathbb{E} \norm{\bs{y}_k - \bar{\bs{y}}_k}^2}   h^2.
\end{align*}

For term \circled{3b}, when $0 < h < \frac{1}{4L^{\prime\prime}}$we have by Lemma \ref{lemma:Z} and Lemma \ref{lemma:local}
\begin{align*}
    2\mathbb{E} \left\langle  \bs{z} , \bs{y}_{t_k, \bar{\bs{y}}_k}(h) - \bar{\bs{y}}_{t_k, \bar{\bs{y}}_k}(h)\right\rangle
    \le& 2 \sqrt{\mathbb{E} \norm{\bs{z}}^2} \sqrt{\mathbb{E}\norm{\bs{y}_{t_k, \bar{\bs{y}}_k}(h) - \bar{\bs{y}}_{t_k, \bar{\bs{y}}_k}(h)}^2} \\
    =& 2 \sqrt{\mathbb{E} \norm{\bs{z}}^2} \sqrt{\mathbb{E}\left[\mathbb{E} \left[\norm{\bs{y}_{t_k, \bar{\bs{y}}_k}(h) - \bar{\bs{y}}_{t_k, \bar{\bs{y}}_k}(h)}^2 \bigg| \mathcal{F}_k \right] \right]} \\
    =& 2 \sqrt{\mathbb{E} \norm{\bs{z}}^2} \sqrt{\sigma_{\text{max}}^2 \mathbb{E}\left[\mathbb{E} \left[\norm{\bs{x}_{t_k, \bar{\bs{x}}_k}(h) - \bar{\bs{x}}_{t_k, \bar{\bs{x}}_k}(h)}^2 \bigg| \mathcal{F}_k \right] \right]} \\
    \le& 2 \sigma_{\text{max}} \sqrt{\tilde{C}\mathbb{E}\norm{\bs{y}_k - \bar{\bs{y}}_k}^2 h^2} \sqrt{C_2^2 h^3 } \\
    \le& 2 \sigma_{\text{max}} C_2 \sqrt{\tilde{C}} \sqrt{\mathbb{E} \norm{\bs{y}_k - \bar{\bs{y}}_k}^2} h^\frac{5}{2}
\end{align*}
where $\tilde{C} = 2\left( L^{\prime\prime}\right )^2 = 2(\kappa^\prime)^2 \left(L^\prime\right)^2$ is from Lemma \ref{lemma:Z} and Lemma \ref{lemma:lipschitz}.

Recall both $C_1$ and $C_2$ depend on $\norm{\bs{x}_k}$ and we would like to upper bound this term. To this end,  consider $\tilde{\bs{x}}(t)$, a solution of HFHR dynamics with initial value $\tilde{\bs{x}}_0$ that follows the invariant distribution $\tilde{\bs{x}}_0 \sim \pi$ and realizes $W_2(\pi_0, \pi)$, i.e., $\mathbb{E}\norm{\tilde{\bs{x}}_0 - \bs{x}_0}^2 = W_2^2(\pi_0, \pi)$. 

Denote $\tilde{\bs{x}}_k = \tilde{\bs{x}}(kh)$ and $e_k = \left( \mathbb{E} \norm{\bs{y}_k - \bar{\bs{y}}_k }^2\right)^\frac{1}{2}$, we then have
\begin{align*}
    \mathbb{E}\norm{\bar{\bs{x}}_k}^2 =& \mathbb{E} \norm{\bs{x}_k + \bar{\bs{x}}_k - \bs{x}_k}^2 \\
    \le& 2\mathbb{E}\norm{\bs{x}_k}^2 + 2\mathbb{E}\norm{\bar{\bs{x}}_k - \bs{x}_k}^2 \\
    \le& 4\mathbb{E}\norm{\tilde{\bs{x}}_k}^2 + 4\mathbb{E}\norm{\tilde{\bs{x}}_k - \bs{x}_k}^2 + 2\mathbb{E}\norm{\bar{\bs{x}}_k - \bs{x}_k}^2 \\
    =& 4\mathbb{E}\norm{\tilde{\bs{x}}_k}^2 + 4\mathbb{E}\norm{P^{-1} P (\tilde{\bs{x}}_k - \bs{x}_k)}^2 + 2\mathbb{E}\norm{P^{-1} P(\bar{\bs{x}}_k - \bs{x}_k)}^2 \\
    \le& 4\left( \int_{\mathbb{R}^d} \norm{\bs{q}}^2 d\mu + d \right) + \frac{4}{\sigma_{\text{min}}^2}\mathbb{E}\norm{P (\tilde{\bs{x}}_k - \bs{x}_k)}^2 + \frac{2}{\sigma_{\text{min}}^2} \mathbb{E}\norm{\bar{\bs{y}}_k - \bs{y}_k}^2 \\
    \stackrel{(i)}{\le}& 4\left( \int_{\mathbb{R}^d} \norm{\bs{q}}^2 d\mu + d \right) + \frac{4}{\sigma_{\text{min}}^2} e^{-2\lambda^\prime kh}\mathbb{E}\norm{P (\tilde{\bs{x}}_0 - \bs{x}_0)}^2 + \frac{2}{\sigma_{\text{min}}^2} e_k^2 \\
    \le& 4\left( \int_{\mathbb{R}^d} \norm{\bs{q}}^2 d\mu + d \right) + 4\kappa^2 W_2^2(\pi_0, \pi) + \frac{2}{\sigma_{\text{min}}^2} e_k^2 \\
    \triangleq& F e_k^2 + G
\end{align*}
where $(i)$ is due to Lemma \ref{lemma:contraction}. Recall from Lemma \ref{lemma:local}, we have
\begin{align*}
    C_1 \le A_1 \sqrt{\mathbb{E} \norm{\bar{x}_k}^2} + B_1 \le A_1 \sqrt{F} e_k + (A_1 \sqrt{G} + B_1) \triangleq U_1 e_k + V_1\\
    C_2 \le A_2 \sqrt{\mathbb{E} \norm{\bar{x}_k}^2} + B_2 \le A_2 \sqrt{F} e_k + (A_2 \sqrt{G} + B_2) \triangleq U_2 e_k + V_2
\end{align*}
where 
\begin{align*}
    A_1 =& (L+G)\max\{ \alpha + 1.25, \gamma + 1\}(1.74 + 0.71\alpha) \\
    B_1 =& (L+G)\max\{ \alpha + 1.25, \gamma +1 \}\left[ 0.5\alpha + (1.26\sqrt{\alpha} + 1.14 \alpha \sqrt{\alpha} + 2.32\sqrt{\gamma})\sqrt{hd}\right] \\
    A_2 =& L\max\{\alpha + 1.25, \gamma + 1\}(1.92 + 2.30\alpha L )\sqrt{h}\\
    B_2 =& L\max\{\alpha + 1.25, \gamma + 1\}(2.60\sqrt{\alpha} + 3.34\sqrt{\gamma}h)\sqrt{d}    
\end{align*}
Combine the above and bounds for terms \circled{1}, \circled{2}, \circled{3a} and \circled{3b}, we then obtain
\begin{align*}
    e_{k+1}^2 \le& (1 -\lambda^\prime h) e_k^2 +  \sigma_{\text{max}}^2 \, C_2^2 h^3 + 2\sigma_{\text{max}} C_1 e_k   h^2 + 2\sigma_{\text{max}}C_2 \sqrt{\tilde{C}} e_k h^\frac{5}{2} \\
    \le& (1 - \lambda^\prime h) e_k^2 + \sigma_{\text{max}}^2 2(U_2^2 e_k^2 + V_2^2)h^3 + 2\sigma_{\text{max}} (U_1 e_k + V_1) e_k h^2  + 2\sigma_{\text{max}} (U_2 e_k + V_2) \sqrt{\tilde{C}} e_k h^\frac{5}{2}\\
    =& \left( 1 - \lambda^\prime h + 2\sigma_{\text{max}}^2 U_2^2 h^3 + 2\sigma_{\text{max}} U_1 h^2 + 2\sigma_{\text{max}} U_2 \sqrt{\tilde{C}} h^\frac{5}{2} \right) e_k^2 \\
    &+ \left(2\sigma_{\text{max}} V_1 + 2\sigma_{\text{max}} V_2 \sqrt{\tilde{C} h}  \right) e_k h^2 + 2\sigma_{\text{max}}^2 V_2^2 h^3 \\
    \le& \left( 1 - \lambda^\prime h + 2\sigma_{\text{max}}^2 U_2^2 h^3 + 2\sigma_{\text{max}} U_1 h^2 + 2 \sigma_{\text{max}} U_2 \sqrt{\tilde{C}} h^\frac{5}{2} \right) e_k^2 + \frac{\lambda^\prime}{8}  h e_k^2  \\
    &+ \frac{2\left(2\sigma_{\text{max}} V_1 + 2 \sigma_{\text{max}} V_2 \sqrt{\tilde{C} h}  \right)^2}{\lambda^\prime}h^3 + 2\sigma_{\text{max}}^2 V_2^2 h^3 \\
    =& \left( 1 - \frac{7}{8}\lambda^\prime h + 2\sigma_{\text{max}}^2 U_2^2 h^3 + 2\sigma_{\text{max}} U_1 h^2 + 2 \sigma_{\text{max}} U_2 \sqrt{\tilde{C}} h^\frac{5}{2} \right) e_k^2 \\
    &+ \left( \frac{2\left(2\sigma_{\text{max}} V_1 + 2 \sigma_{\text{max}} V_2 \sqrt{\tilde{C} h}  \right)^2}{\lambda^\prime} + 2\sigma_{\text{max}}^2 V_2^2 \right) h^3 \\
    \stackrel{(i)}{\le}& (1 - \frac{1}{2}\lambda^\prime h) e_k^2 + \left( \frac{2\left(2\sigma_{\text{max}} V_1 + 2 \sigma_{\text{max}} V_2 \sqrt{\tilde{C} h}  \right)^2}{\lambda^\prime} + 2\sigma_{\text{max}}^2 V_2^2 \right) h^3 \\
    \triangleq& (1 - \frac{1}{2}\lambda^\prime h) e_k^2 + K h^3
\end{align*}
where $(i)$ is due to $h < \min\{ h_1, h_2,  h_3 \}$ and 
\begin{align*}
    h_1 =& \frac{\sqrt{\lambda^\prime}}{4\sqrt{2} \kappa^\prime L \max\{\alpha + 1.25, \gamma + 1\} (1.92 + 2.30\alpha L)}, \\
    h_2 =& \frac{\lambda^\prime}{16\sqrt{2} \kappa^\prime (L+G) \max\{\alpha+1.25, \gamma + 1\}(1.74 + 0.71\alpha)}, \\
    h_3 =& \frac{\lambda^\prime}{8\kappa^\prime L \max\{\alpha + 1.25, \gamma + 1\} (1.92 + 2.30\alpha L) }.
\end{align*}
Unfolding the above inequality, we arrive at
\begin{align*}
    e_k^2 \le& \left(1 - \frac{\lambda^\prime}{2}h\right)^{k} e_0^2 + \left( 1 + (1 - \frac{\lambda^\prime}{2}h) + \cdots + (1 - \frac{\lambda^\prime}{2}h)^{k-1} \right) K h^3 \\
    \stackrel{(i)}{\le}&K h^3 \sum_{i=0}^\infty \left(1 - \frac{\lambda^\prime}{2}h\right)^i  \\
    =& \frac{2K}{\lambda^\prime} h^2
\end{align*}
where $(i)$ is due to $e_k = 0$. Therefore
\[
    \left(\mathbb{E}\norm{\bs{x}_k - \bar{\bs{x}}_k}^2\right)^\frac{1}{2} = \left(\mathbb{E}\norm{P^{-1}(\bs{y}_k - \bar{\bs{y}}_k)}^2\right)^\frac{1}{2} \le \frac{1}{\sigma_{\text{min}}} e_k \le \frac{1}{\sigma_{\text{min}}} \sqrt{\frac{2K}{\lambda^\prime}}h 
\]
Collecting all the constants and we have
\begin{align*}
    \frac{1}{\sigma_{\text{min}}} \sqrt{\frac{2K}{\lambda^\prime}} \le&\frac{8\kappa^\prime}{\lambda^\prime} (L+G) \max\{\alpha+1.25, \gamma+1\} (1.74+0.71\alpha) \left( \sqrt{\int_{\mathbb{R}^d} \norm{\bs{q}}^2 d\mu + d} + \kappa^\prime W_2(\pi_0, \pi)\right) \\
    +& \frac{4\kappa^\prime}{\lambda^\prime} (L+G)\max\{\alpha+1.25,\gamma+1\}\left(0.5\alpha + (1.26\sqrt{\alpha} + 1.14\alpha \sqrt{\alpha} + 2.32\sqrt{\gamma}) \sqrt{d} \right) \\
    +& \frac{8\kappa^\prime}{\sqrt{\lambda^\prime}} \left(\frac{\sqrt{\kappa^\prime L^\prime}}{\sqrt{\lambda^\prime}} + 1 \right) L\max\{\alpha+1.25,\gamma+1\}(1.92+2.30\alpha L) \left( \sqrt{\int_{\mathbb{R}^d} \norm{\bs{q}}^2 d\mu + d} + \kappa^\prime W_2(\pi_0, \pi)\right) \\
    +& \frac{4\kappa^\prime}{\sqrt{\lambda^\prime}} \left(\frac{\sqrt{\kappa^\prime L^\prime}}{\sqrt{\lambda^\prime}} + 1 \right) L\max\{\alpha+1.25,\gamma+1\}(2.60\sqrt{\alpha} + 3.34\sqrt{\gamma})\sqrt{d} \\
    \triangleq C
\end{align*}

It is clear that in terms of the dependence on dimension $d$, we have $C = \mathcal{O}(\sqrt{d})$. In the regime where $\frac{\gamma^2 - L}{\gamma} \ge \frac{m}{\gamma} + m \alpha$, then $\lambda^\prime = \frac{m}{\gamma} + m\alpha$. Recall the definition of $\kappa^\prime$ and there exist $A^\prime, B^\prime > 0$ such that $\kappa^\prime \le A^\prime \sqrt{\alpha} + B^\prime$. It follows that 
\[
    C \le \frac{a_1 \alpha^3 + a_2 \alpha^\frac5{2} + a_3 \alpha^2 + a_4 \alpha^\frac{3}{2} + a_5 \alpha + a_6 \alpha^\frac{1}{2} + a_7}{\lambda^\prime} \le b\frac{ \alpha^3 + \frac{1}{\gamma^3}}{\lambda^\prime} =  b\frac{\alpha^3 + \frac{1}{\gamma^3}}{ \frac{m}{\gamma} + m\alpha} = \frac{b}{m} (\alpha^2 - \frac{\alpha}{\gamma} \alpha + \frac{1}{\gamma^2})
\]
for some positive constants $a_1, a_2, a_3, a_4, a_5, a_6, a_7, b > 0$ and independent of $\alpha$, in particular, we have $b = \mathcal{O}(\sqrt{d})$.
\end{proof}

\subsection{Proof of Theorem \ref{thm:wasserstein}}
\begin{proof}
Denote the $k$-th iterate of the Strang's splitting method of HFHR by $\bar{\bs{x}}_k$ with time step $h$, the solution of HFHR dynamics at time $hk$ by $\bs{x}_k$. Both $\bar{\bs{x}}_k$ and $\bs{x}_k$ start from $\bs{x}_0 = \begin{bmatrix} \bs{q}_0 \\ \bs{p}_0\end{bmatrix}$. Also denote the solution of HFHR dynamics starting from $\tilde{\bs{x}}_0$ at time $kh$ by $\tilde{\bs{x}}_k$ where $\tilde{\bs{x}}_0 = \begin{bmatrix} \tilde{\bs{q}}_0 \\ \bs{p}_0\end{bmatrix}$, $(\tilde{\bs{q}}_0, \tilde{\bs{p}}_0) \sim \pi$ and  $\mathbb{E}\norm{\begin{bmatrix}\bs{q}_0 - \tilde{\bs{q}}_0 \\ \bs{p}_0 - \tilde{\bs{p}}_0\end{bmatrix}}^2 = W_2^2(\pi_0, \pi)$. Since $\pi$ is the invariant distribution of HFHR dynamics, it follows that $\tilde{\bs{x}}_k \sim \pi$. 

By Lemma \ref{lemma:contraction} and Theorem \ref{thm:discretization}, we have 
\begin{align*}
    \displaystyle W_2(\mu_k, \mu) \leq& W_2(\pi_k, \pi) \\
    \leq& W_2(\pi_k, \text{Law}(\tilde{\bs{x}}_k)) + W_2(\text{Law}(\tilde{\bs{x}}_k), \pi) \\
    \leq& \left\{ \mathbb{E} \norm{ \bar{\bs{x}}_k - \tilde{\bs{x}}_k}^2 \right\}^\frac{1}{2} + \left\{ \mathbb{E} \norm{ \tilde{\bs{x}}_k - \bs{x}_k}^2 \right\}^\frac{1}{2} \\
    \leq& Ch + \|P^{-1}\|_2 \left\{ \mathbb{E} \norm{ P (\bs{x}_k - \tilde{\bs{x}}_k)}^2 \right\}^\frac{1}{2} \\
    \leq& Ch + \|P^{-1}\|_2 \left\{ e^{-2\lambda^\prime kh }\mathbb{E} \norm{ P (\bs{x}_0 - \tilde{\bs{x}}_0)}^2 \right\}^\frac{1}{2} \\
    =& Ch + \kappa^\prime e^{-\lambda^\prime kh } \left\{ \mathbb{E} \norm{ (\bs{x}_0 - \tilde{\bs{x}}_0)}^2 \right\}^\frac{1}{2} \\
    =& Ch + \kappa^\prime e^{-\lambda^\prime kh } W_2(\pi_0, \pi)
\end{align*}
which completes the proof.

\end{proof}

\subsection{Proof of Corollary \ref{corollary:iteration complexity}}
\begin{proof}
By Theorem \ref{thm:wasserstein},  we have 
\[
    W_2(\mu_k, \mu) \le Ch + \kappa^\prime e^{-\lambda^\prime kh }W_2(\pi_0, \pi).
\]
Given any target accuracy $\epsilon > 0$, if we run the Strang's splitting method of HFHR with $h^\star = \min\{h_0, \frac{\epsilon}{2C}\}$, then after $k^\star = \frac{1}{\lambda^\prime} \max\{\frac{1}{h_0}, \frac{2C}{\epsilon}\} \log \frac{2\kappa^\prime W_2(\pi_0, \pi)}{\epsilon}$, we have
\[
    W_2(\mu_{k^\star}, \mu) \le Ch + \kappa^\prime e^{-\lambda^\prime kh }W_2(\mu_0, \mu) \le \frac{\epsilon}{2} + \frac{\epsilon}{2} = \epsilon.
\]
Recall $C = \mathcal{O}(\sqrt{d})$, when high accuracy is needed, e.g. $\epsilon < 2Ch_0$, the iteration complexity to reach $\epsilon$-accuracy under 2-Wasserstein distance is $k^\star = \mathcal{O}(\frac{\sqrt{d}}{\epsilon} \log\frac{1}{\epsilon}) = 2 \frac{C}{\lambda^\prime} \frac{1}{\epsilon} \log \frac{2 \kappa^\prime W_2(\pi_0, \pi)}{\epsilon} = \tilde{\mathcal{O}}(\frac{\sqrt{d}}{\epsilon})$. Recall from Theorem \ref{thm:discretization}, $C \le \frac{b}{m} (\alpha^2 - \frac{\alpha}{\gamma} + \frac{1}{\gamma^2})$, we have
\[
    \frac{C}{\lambda^\prime} \le \frac{b}{m^2} \frac{\alpha^2 - \frac{\alpha}{\gamma} + \frac{1}{\gamma^2}}{\frac{1}{\gamma} + \alpha}
\]
Denote $g(\alpha) = \frac{b}{m^2} \frac{\alpha^2 - \frac{\alpha}{\gamma} + \frac{1}{\gamma^2}}{\frac{1}{\gamma} + \alpha}$, simple calculation shows that $\alpha^\star = \argmin_{\alpha \ge 0} g(\alpha) = \frac{\sqrt{3} - 1}{\gamma} = \mathcal{O}(\frac{1}{\gamma})$.
\end{proof}

\section{Technical/Auxiliary Lemmas and Their Proofs}\label{sec:auxiliary results}

\subsection{Dependence of error of SDE on initial values}
\begin{lemma}\label{lemma:Z}
    Consider the following two SDE with different initial condition
    \[
        \begin{cases}
        d\bs{x}_t = \bs{a}(\bs{x}_t) dt + \bs{\sigma} d \bs{W}_t,\\
        \bs{x}(0) = \bs{x}_0
        \end{cases}\quad 
        \begin{cases}
        d\bs{y}_t = \bs{a}(\bs{y}_t) dt + \bs{\sigma} d \bs{W}_t,\\
        \bs{y}(0) = \bs{y}_0
        \end{cases}
    \]
    where $\bs{a}(\bs{u}) \in \mathbb{R}^d$ is $L$-Lipschitz, and $\bs{\sigma} \in \mathbb{R}^{n \times n}$ is a constant matrix. For $0 < h < \frac{1}{4L}$, we have the following representation
    \[
        \bs{x}_h - \bs{y}_h = \bs{x}_0 - \bs{y}_0 + \bs{z}
    \]
    with 
    \[
        E\norm{\bs{z}}^2 \le 2L^2 \norm{\bs{x}_0 - \bs{y}_0}^2 h^2
    \]
\end{lemma}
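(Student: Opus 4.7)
The key observation is that both SDEs share the same Brownian motion $\bs{W}_t$ and the same constant noise coefficient $\bs{\sigma}$, so when we subtract them the stochastic terms cancel exactly. Writing $\bs{x}_h - \bs{y}_h = (\bs{x}_0 - \bs{y}_0) + \int_0^h \bigl(\bs{a}(\bs{x}_s) - \bs{a}(\bs{y}_s)\bigr) ds$, the claimed $\bs{z}$ is precisely the integral remainder. So the entire task reduces to bounding this deterministic-looking (but path-dependent) drift integral.

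First I would apply Cauchy--Schwarz (or Jensen) in the time variable together with the $L$-Lipschitz property of $\bs{a}$ to obtain, pathwise,
\[
    \|\bs{z}\|^2 \;\le\; h \int_0^h \|\bs{a}(\bs{x}_s)-\bs{a}(\bs{y}_s)\|^2 ds \;\le\; h L^2 \int_0^h \|\bs{x}_s - \bs{y}_s\|^2 ds.
\]
This reduces the problem to controlling $\|\bs{x}_s - \bs{y}_s\|^2$ on $[0,h]$. Because the noise cancels in the difference, the SDE for $\bs{x}_t - \bs{y}_t$ is actually a random ODE, so I can estimate it pathwise without worrying about stochastic calculus. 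Differentiating $\|\bs{x}_t - \bs{y}_t\|^2$ and using Cauchy--Schwarz plus Lipschitzness gives
\[
    \frac{d}{dt} \|\bs{x}_t - \bs{y}_t\|^2 \;=\; 2\bigl\langle \bs{x}_t - \bs{y}_t,\, \bs{a}(\bs{x}_t) - \bs{a}(\bs{y}_t)\bigr\rangle \;\le\; 2L \|\bs{x}_t - \bs{y}_t\|^2,
\]
so Gr\"onwall (pathwise) yields $\|\bs{x}_s - \bs{y}_s\|^2 \le e^{2Ls} \|\bs{x}_0 - \bs{y}_0\|^2$.

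Plugging this back in,
\[
    \|\bs{z}\|^2 \;\le\; h L^2 \|\bs{x}_0 - \bs{y}_0\|^2 \int_0^h e^{2Ls} ds \;=\; \frac{h L\,(e^{2Lh}-1)}{2}\,\|\bs{x}_0 - \bs{y}_0\|^2.
\]
The only remaining step is to simplify the constant using the step-size restriction. For $0 < h < 1/(4L)$ we have $2Lh < 1/2 < \ln 2$, so $e^{2Lh} \le 2$, and the elementary estimate $e^x - 1 = \int_0^x e^t dt \le x e^x$ gives $e^{2Lh} - 1 \le 2Lh \cdot e^{2Lh} \le 4Lh$. Substituting yields $\|\bs{z}\|^2 \le 2 L^2 h^2 \|\bs{x}_0 - \bs{y}_0\|^2$ pathwise; taking expectation (which is trivial if $\bs{x}_0,\bs{y}_0$ are deterministic, as the statement implicitly treats them) completes the proof.

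There is essentially no technical obstacle here; the argument is mostly bookkeeping. The only place one must be slightly careful is to exploit the fact that the noise cancels exactly (since $\bs{\sigma}$ is a common constant), which is what allows the pathwise Gr\"onwall estimate and avoids the weaker expectation-based bounds that one usually needs in mean-square analysis of SDEs.
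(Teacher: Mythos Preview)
Your proof is correct and follows essentially the same strategy as the paper: identify $\bs{z}=\int_0^h(\bs{a}(\bs{x}_s)-\bs{a}(\bs{y}_s))\,ds$, apply Cauchy--Schwarz in time together with the Lipschitz bound, control $\|\bs{x}_s-\bs{y}_s\|^2$ by Gr\"onwall, and finish with $e^{2Lh}\le 2$ for $h<1/(4L)$. The one noteworthy difference is that you work pathwise, explicitly using that the common additive noise cancels so that $\bs{x}_t-\bs{y}_t$ obeys a random ODE; the paper instead invokes It\^o's lemma and runs the Gr\"onwall argument in expectation. Your route is slightly cleaner and in fact yields the stronger almost-sure bound $\|\bs{z}\|^2\le 2L^2h^2\|\bs{x}_0-\bs{y}_0\|^2$, from which the stated expectation bound is immediate.
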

\begin{proof}
Let $\bs{z} = (\bs{x}_h - \bs{y}_h) - (\bs{x}_0 - \bs{y}_0) = \int_0^h \bs{a}(\bs{x}_s) - \bs{a}(\bs{y}_s) ds$.
Ito's lemma readily implies that
\begin{align*}
    \mathbb{E}\norm{\bs{x}_h - \bs{y}_h}^2 =& \norm{\bs{x}_0 - \bs{y}_0}^2 + 2 \mathbb{E} \int_0^h \innerprod{\bs{x}_s - \bs{y}_s}{ \bs{a}(\bs{x}_s) - \bs{a}(\bs{y}_s)} ds \\
    \le& \norm{\bs{x}_0 - \bs{y}_0}^2 + 2L \int_0^h \mathbb{E} \norm{\bs{x}_s - \bs{y}_s}^2 ds
\end{align*}
By Gronwall's inequality, it follows that 
\[
    \mathbb{E} \norm{\bs{x}_h - \bs{y}_h}^2 \le \norm{\bs{x}_0 - \bs{y}_0}^2 e^{2Lh} \le 2\norm{\bs{x}_0 - \bs{y}_0}^2, \mbox{ for } 0 < h < \frac{1}{4L}
\]
and
\begin{align*}
    \mathbb{E}\norm{\bs{z}}^2 =  \norm{ \mathbb{E} \left[\int_0^h \bs{a}(\bs{x}_s) - \bs{a}(\bs{y}_s) ds \right] }^2 
    \le&  \left( \int_0^h \norm{\mathbb{E} \left[\bs{a}(\bs{x}_s) - \bs{a}(\bs{y}_s) \right]} ds \right) ^2  \\
    \le&  \int_0^h 1^2 ds \int_0^h \norm{\mathbb{E} \left[\bs{a}(\bs{x}_s) - \bs{a}(\bs{y}_s)\right]}^2 ds \\
    \le& h  \int_0^h \mathbb{E}  \norm{\bs{a}(\bs{x}_s) - \bs{a}(\bs{y}_s)}^2 ds   \\
    \le& L^2 h \int_0^h \mathbb{E} \norm{\bs{x}_s - \bs{y}_s}^2 ds \\
    \le& 2L^2 \norm{\bs{x}_0 - \bs{y}_0}^2 h^2
\end{align*}
\end{proof}

\subsection{Growth bound of SDE with additive noise}
\begin{lemma}\label{lemma:growth-bound}
Consider the following SDE with constant diffusion
\[
    \begin{cases}
    d\bs{x}_t = \bs{a}(\bs{x}_t) dt + \bs{\sigma} d \bs{W}_t,\\
    \bs{x}(0) = \bs{x}_0
    \end{cases}
\]
where $\bs{a}(\bs{x}) \in \mathbb{R}^d$ is $L$-smooth, i.e., $|\bs{a}(\bs{y} ) - \bs{a}(\bs{x})| \le L |\bs{y} - \bs{x}|$, $\bs{a}(\bs{0}) = \bs{0}$  and $\bs{\sigma} \in \mathbb{R}^{d \times d}$ is a constant matrix independent of time $t$ and $\bs{x}_t$. Then for $0 < h < \frac{1}{4L}$, we have
\[
     \mathbb{E} \norm{\bs{x}_h - \bs{x}_0}^2 \le 2.57 \left( \|\bs{\sigma}\|_F^2 + 2hL^2\norm{\bs{x}_0}^2 \right) h.
\]
\end{lemma}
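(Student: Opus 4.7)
The plan is to bound $\mathbb{E}\|\bs{x}_h-\bs{x}_0\|^2$ directly from the integral form
\[
\bs{x}_h - \bs{x}_0 = \int_0^h \bs{a}(\bs{x}_s)\, ds + \bs{\sigma}\,\bs{W}_h .
\]
Using $\|u+v\|^2\le 2\|u\|^2+2\|v\|^2$, Cauchy--Schwarz on the drift integral, and the Itô isometry on the noise term (which gives $\mathbb{E}\|\bs{\sigma}\bs{W}_h\|^2 = h\|\bs{\sigma}\|_F^2$), we obtain
\[
\mathbb{E}\|\bs{x}_h-\bs{x}_0\|^2 \;\le\; 2h\int_0^h \mathbb{E}\|\bs{a}(\bs{x}_s)\|^2\, ds + 2h\|\bs{\sigma}\|_F^2 .
\]
Since $\bs{a}$ is $L$-Lipschitz and $\bs{a}(\bs{0})=\bs{0}$, we have $\|\bs{a}(\bs{x}_s)\|\le L\|\bs{x}_s\|$, reducing everything to an a priori estimate on $\mathbb{E}\|\bs{x}_s\|^2$ for $s\in[0,h]$.

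To obtain that a priori estimate, I would apply Itô's formula to $\|\bs{x}_t\|^2$:
\[
d\|\bs{x}_t\|^2 = \bigl(2\langle \bs{x}_t,\bs{a}(\bs{x}_t)\rangle + \|\bs{\sigma}\|_F^2\bigr)dt + 2\langle \bs{x}_t,\bs{\sigma}\,d\bs{W}_t\rangle .
\]
After taking expectation (the stochastic integral is a martingale under the standard integrability assumed for such SDEs), Cauchy--Schwarz and the Lipschitz bound give
\[
\mathbb{E}\|\bs{x}_t\|^2 \le \|\bs{x}_0\|^2 + \|\bs{\sigma}\|_F^2 \, t + 2L\int_0^t \mathbb{E}\|\bs{x}_s\|^2 \, ds ,
\]
and Gronwall's inequality then yields $\mathbb{E}\|\bs{x}_t\|^2 \le \bigl(\|\bs{x}_0\|^2 + \|\bs{\sigma}\|_F^2 \, t\bigr)e^{2Lt}$.

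Plugging this back and using $h<\tfrac{1}{4L}$ (so $e^{2Lh}\le e^{1/2}$ and $h^2 L^2 \le \tfrac{1}{16}$) gives
\[
\mathbb{E}\|\bs{x}_h-\bs{x}_0\|^2 \;\le\; 2h L^2 \, e^{1/2}\!\left(h\|\bs{x}_0\|^2 + \tfrac{h^2}{2}\|\bs{\sigma}\|_F^2\right) + 2h\|\bs{\sigma}\|_F^2 ,
\]
which rearranges to $\bigl(2 + \tfrac{e^{1/2}}{16}\bigr)h\|\bs{\sigma}\|_F^2 + e^{1/2}\cdot h\cdot 2hL^2\|\bs{x}_0\|^2$. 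Taking the larger of the two numerical prefactors and absorbing the arithmetic slack produces the claimed constant $2.57$. There is no real conceptual obstacle here; the only care needed is in tracking the constants through the Gronwall step and the final arithmetic to ensure the single prefactor $2.57$ dominates both terms under the step-size restriction $h < 1/(4L)$.
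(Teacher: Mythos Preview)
Your proof is correct. It differs from the paper's only in where Gronwall is applied: the paper splits $\bs{a}(\bs{x}_t) = [\bs{a}(\bs{x}_t)-\bs{a}(\bs{x}_0)]+\bs{a}(\bs{x}_0)$ to obtain a self-referential integral inequality for $\mathbb{E}\norm{\bs{x}_t-\bs{x}_0}^2$ and applies Gronwall directly to the increment (yielding the factor $e^{4L^2h^2}\le e^{1/4}$, whence $2e^{1/4}\approx 2.57$), whereas you detour through an a priori bound on $\mathbb{E}\norm{\bs{x}_t}^2$ via It\^o's formula and then substitute. Your route actually produces a slightly sharper constant (roughly $2+\tfrac{e^{1/2}}{16}\approx 2.10$) before you relax it to $2.57$; the paper's route avoids invoking It\^o's formula altogether but is otherwise equivalent in spirit.
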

\begin{proof}
We have
\begin{align*}
    \mathbb{E}\norm{\bs{x}_h - \bs{x}_0}^2 =& \mathbb{E} \norm{ \int_0^h \bs{a}(\bs{x}_t)dt + \int_0^h \bs{\sigma} d\bs{W}_t }^2\\
    \le& 2\mathbb{E}\norm{\int_0^h \bs{a}(\bs{x}_t)dt}^2 + 2\mathbb{E} \norm{\int_0^h \bs{\sigma}d\bs{W}_t}^2 \\
    \stackrel{(i)}{=}& 2\mathbb{E}\norm{\int_0^h \bs{a}(\bs{x}_t)dt}^2 + 2 \int_0^h \|\bs{\sigma}\|_F^2 dt \\
    \le& 2\mathbb{E}\left[ \left(\int_0^h \norm{\bs{a}(\bs{x}_t)} dt \right)^2 \right] + 2h \|\bs{\sigma}\|_F^2  \\
    \le& 2\mathbb{E}\left[ \left(\int_0^h \norm{\bs{a}(\bs{x}_t) - \bs{a}(\bs{x}_0)} dt + \int_0^h \norm{\bs{a}(\bs{x}_0)} dt \right)^2 \right] + 2h \|\bs{\sigma}\|_F^2 \\
    \le& 2\mathbb{E}\left[ \left(L \int_0^h \norm{\bs{x}_t - \bs{x}_0} dt + h\norm{\bs{a}(\bs{x}_0)}\right)^2 \right] + 2h \|\bs{\sigma}\|_F^2 \\
    \le& 4\mathbb{E}\left[ L^2\left( \int_0^h \norm{\bs{x}_t - \bs{x}_0} dt \right)^2 + h^2\norm{\bs{a}(\bs{x}_0)}^2 \right] + 2h \|\bs{\sigma}\|_F^2 \\ 
    \stackrel{(ii)}{\le}& 2h \|\bs{\sigma}\|_F^2 + 4h^2\norm{\bs{a}(\bs{x}_0)}^2 + 4L^2 h \int_0^h \mathbb{E}\norm{\bs{x}_t - \bs{x}_0 }^2dt
\end{align*}
where $(i)$ is due to Ito's isometry, $(ii)$ is due to Cauchy-Schwarz inequality and $\|\bs{\sigma}\|_F$ is the Frobenius norm of $\bs{\sigma}$. By Gronwall's inequality, we obtain
\[
    \mathbb{E} \norm{\bs{x}_h - \bs{x}_0}^2 \le \left( 2h \|\bs{\sigma}\|_F^2 + 4h^2\norm{\bs{a}(\bs{x}_0)}^2 \right) \exp\left\{ 4L^2h^2 \right\}.
\]
Since $\norm{\bs{a}(\bs{x}_0)} = \norm{\bs{a}(\bs{x}_0) - \bs{a}(\bs{0})} \le L \norm{\bs{x}_0}$, when $0 < h < \frac{1}{4L}$, we finally reach at 
\[
    \mathbb{E} \norm{\bs{x}_h - \bs{x}_0}^2 \le 2\left( \|\bs{\sigma}\|_F^2 + 2hL^2\norm{\bs{x}_0}^2 \right) e^\frac{1}{4} h \le 2.57 \left( \|\bs{\sigma}\|_F^2 + 2hL^2\norm{\bs{x}_0}^2 \right) h.
\]
\end{proof}

\subsection{Lipschitz continuity of the drift of HFHR dynamics}
\begin{lemma}\label{lemma:lipschitz}
Assume $\nabla f$ is $L$-Lipschitz, i.e. $\norm{\nabla f(\bs{x}) - \nabla f(\bs{y})} \le L \norm{\bs{x} - \bs{y}}$, then the drift term of HFHR dynamics
\[
    \begin{bmatrix} \bs{p} - \alpha \nabla f(\bs{q}) \\ -\gamma \bs{p} - \nabla f(\bs{q}) \end{bmatrix}
\]
is $L^\prime$-Lipschitz, where $L^\prime \triangleq \sqrt{2} \max\{ \sqrt{1 + \alpha^2} \max\{\frac{1}{\sqrt{2}}, L\}, \sqrt{1 + \gamma^2}\}$. Let $P$ be defined in Appendix \ref{app:notation} and $\begin{bmatrix} \bs{\phi} \\ \bs{\psi} \end{bmatrix} = P \begin{bmatrix} \bs{q} \\ \bs{p} \end{bmatrix}$, then $\begin{bmatrix} \bs{\phi} \\ \bs{\psi} \end{bmatrix}$ satisfies the following SDE
\[
    \begin{bmatrix} d\bs{\phi} \\ d\bs{\psi} \end{bmatrix} = P \begin{bmatrix} \bs{p}(\bs{\phi}, \bs{\psi}) - \alpha \nabla f(\bs{q}(\bs{\phi}, \bs{\psi})) \\ -\gamma \bs{p}(\bs{\phi}, \bs{\psi}) - \nabla f(\bs{q}(\bs{\phi}, \bs{\psi})) \end{bmatrix} dt + P \begin{bmatrix} \sqrt{2\alpha} I & 0 \\ 0 & \sqrt{2\gamma} I \end{bmatrix} \begin{bmatrix} d\bs{W} \\ d\bs{B} \end{bmatrix}
\]
and the drift term
\[
    P \begin{bmatrix} \bs{p}(\bs{\phi}, \bs{\psi}) - \alpha \nabla f(\bs{q}(\bs{\phi}, \bs{\psi})) \\ -\gamma \bs{p}(\bs{\phi}, \bs{\psi}) - \nabla f(\bs{q}(\bs{\phi}, \bs{\psi})) \end{bmatrix}
\]
is $L^{\prime \prime}$-Lipschitz, where $L^{\prime\prime} = \kappa^\prime L^\prime$ and $\kappa^\prime$ is the condition number of $P$.
\end{lemma}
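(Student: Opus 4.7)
The plan is to handle the two assertions separately: a direct Lipschitz estimate for the original HFHR drift, and then a change-of-variables argument for the transformed drift under $P$. Both steps reduce to elementary matrix/vector inequalities; there is no deep obstacle, and the main care is in matching the exact form of the stated constants.

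For the first assertion, I would split the drift as $F(\qbold, \pbold) = G(\qbold,\pbold) + H(\qbold,\pbold)$, where the linear-in-$\pbold$ piece $G(\qbold,\pbold) = (\pbold,\, -\gamma \pbold)^T$ satisfies $\|G(\qbold_1,\pbold_1) - G(\qbold_2,\pbold_2)\|^2 = (1+\gamma^2)\|\pbold_1-\pbold_2\|^2$ by direct computation, and the nonlinear-in-$\qbold$ piece $H(\qbold,\pbold) = (-\alpha \nabla f(\qbold),\, -\nabla f(\qbold))^T$ satisfies $\|H(\qbold_1,\pbold_1) - H(\qbold_2,\pbold_2)\|^2 = (1+\alpha^2)\|\nabla f(\qbold_1)-\nabla f(\qbold_2)\|^2 \le (1+\alpha^2)L^2\|\qbold_1-\qbold_2\|^2$ by the $L$-Lipschitz assumption on $\nabla f$. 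Combining via $\|F_1 - F_2\|^2 \le 2\|G_1-G_2\|^2 + 2\|H_1-H_2\|^2$ and then bounding both coefficients by their max gives $\|F_1 - F_2\|^2 \le 2\max\{1+\gamma^2,\, (1+\alpha^2)L^2\}(\|\qbold_1-\qbold_2\|^2 + \|\pbold_1-\pbold_2\|^2)$. Taking square roots yields $\sqrt{2}\max\{\sqrt{1+\gamma^2},\, \sqrt{1+\alpha^2}\,L\}$, and enlarging $L$ to $\max\{1/\sqrt{2},L\}$ in the second slot (a harmless upper bound) recovers the stated $L'$.

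For the second assertion, I would first apply Ito's formula to the linear change of variables $(\bs{\phi}, \bs{\psi})^T = P(\qbold, \pbold)^T$; since $P$ is a constant invertible matrix, the transformed drift is exactly $\widetilde{F}(\bs{y}) = PF(P^{-1}\bs{y})$ and the diffusion coefficient is $P\,\mathrm{diag}(\sqrt{2\alpha}I, \sqrt{2\gamma}I)$ as displayed. For the Lipschitz constant of $\widetilde{F}$, the composition inequality
\[
\|\widetilde{F}(\bs{y}_1)-\widetilde{F}(\bs{y}_2)\| \le \|P\|_{\mathrm{op}} \cdot L' \cdot \|P^{-1}\|_{\mathrm{op}} \cdot \|\bs{y}_1-\bs{y}_2\|
\]
together with $\|P\|_{\mathrm{op}}\|P^{-1}\|_{\mathrm{op}} = \sigma_{\max}/\sigma_{\min} = \kappa'$ (by the definitions recalled in Appendix \ref{app:notation}) immediately produces $L'' = \kappa' L'$.

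The whole lemma is essentially bookkeeping, but the one mildly delicate point is deciding the splitting in the first step: one may either decompose $F$ into its linear and nonlinear components as above, or directly bound the operator norm of the block Jacobian $\begin{bmatrix} -\alpha\nabla^2 f(\qbold) & I \\ -\nabla^2 f(\qbold) & -\gamma I \end{bmatrix}$. The linear-versus-nonlinear split is cleaner because each resulting block depends on only a single input variable, so the $\max$ structure in the stated $L'$ falls out naturally; the Jacobian-norm route would force an eigenvalue analysis of a $2\times 2$ matrix in $(1+\alpha^2, 1+\gamma^2, \gamma-\alpha)$ that does not simplify to the stated closed form.
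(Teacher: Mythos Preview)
Your proposal is correct and matches the paper's argument essentially line for line: the paper also applies the inequality $(a+b)^2\le 2a^2+2b^2$ to separate the $\pbold$-dependent and $\nabla f(\qbold)$-dependent contributions (arriving at the same intermediate bound $\sqrt{2}\max\{\sqrt{1+\alpha^2}L,\sqrt{1+\gamma^2}\}$ before the harmless enlargement of $L$), and then handles the transformed drift via It\^{o}'s lemma and the composition estimate $\|P\|_{\mathrm{op}}\,L'\,\|P^{-1}\|_{\mathrm{op}}=\kappa' L'$. The only cosmetic difference is that you name the splitting $F=G+H$ up front, whereas the paper applies the same inequality componentwise inside the squared norm.
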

\begin{proof}
By direct computation and Cauchy-Schwarz inequality, we have
\begin{align*}
    &\norm{\begin{bmatrix} \bs{p}_1 - \alpha \nabla f(\bs{q}_1) \\ -\gamma \bs{p}_1 - \nabla f(\bs{q}_1) \end{bmatrix} - \begin{bmatrix} \bs{p}_2 - \alpha \nabla f(\bs{q}_2) \\ -\gamma \bs{p}_2 - \nabla f(\bs{q}_2) \end{bmatrix}} \\
    =& \sqrt{\norm{-\alpha \left( \nabla f(\bs{q}_1) - \nabla f(\bs{q}_2) \right) + (\bs{p}_1 - \bs{p}_2) }^2 + \norm{-\left( \nabla f(\bs{q}_1) - \nabla f(\bs{q}_2) \right) -\gamma (\bs{p}_1 - \bs{p}_2)}^2} \\
    \le& \sqrt{2\alpha^2\norm{\nabla f(\bs{q}_1) - \nabla f(\bs{q}_2)} + 2\norm{\bs{p}_1 - \bs{p}_2 }^2 + 2\norm{ \nabla f(\bs{q}_1) - \nabla f(\bs{q}_2)} + 2\gamma^2 \norm{\bs{p}_1 - \bs{p}_2}^2} \\ 
    \le& \sqrt{(2\alpha^2L^2 + 2L^2)\norm{\bs{q}_1 -\bs{q}_2} + (2 + 2 \gamma^2)\norm{\bs{p}_1 - \bs{p}_2 }^2} \\
    \le& \sqrt{2} \max\{L \sqrt{1 + \alpha^2}, \sqrt{1 + \gamma^2}\} \norm{\begin{bmatrix} \bs{q}_1 - \bs{q}_2 \\ \bs{p}_1\ - \bs{p}_2 \end{bmatrix}}  \\
    \le& \sqrt{2} \max\{ \sqrt{1 + \alpha^2} \max\{\frac{1}{\sqrt{2}}, L \} , \sqrt{1 + \gamma^2}\} \norm{\begin{bmatrix} \bs{q}_1 - \bs{q}_2 \\ \bs{p}_1\ - \bs{p}_2 \end{bmatrix}} \\
    \triangleq & L^\prime \norm{\begin{bmatrix} \bs{q}_1 - \bs{q}_2 \\ \bs{p}_1\ - \bs{p}_2 \end{bmatrix}}
\end{align*}
By Ito's lemma, we have 
\[
    \begin{bmatrix} d\bs{\phi} \\ d\bs{\psi} \end{bmatrix} = P \begin{bmatrix} \bs{p}(\bs{\phi}, \bs{\psi}) - \alpha \nabla f(\bs{q}(\bs{\phi}, \bs{\psi})) \\ -\gamma \bs{p}(\bs{\phi}, \bs{\psi}) - \nabla f(\bs{q}(\bs{\phi}, \bs{\psi})) \end{bmatrix} dt + P \begin{bmatrix} \sqrt{2\alpha} I & 0 \\ 0 & \sqrt{2\gamma} I \end{bmatrix} \begin{bmatrix} d\bs{W} \\ d\bs{B} \end{bmatrix}
\]
Using the Lipschitz constant obtained for the drift of HFHR, we further have 
\begin{align*}
    & \norm{    P \begin{bmatrix} \bs{p}(\bs{\phi}_1, \bs{\psi}_1) - \alpha \nabla f(\bs{q}(\bs{\phi}_1, \bs{\psi}_1)) \\ -\gamma \bs{p}(\bs{\phi}_1, \bs{\psi}_1) - \nabla f(\bs{q}(\bs{\phi}_1, \bs{\psi}_1)) \end{bmatrix} -     P \begin{bmatrix} \bs{p}(\bs{\phi}_2, \bs{\psi}_2) - \alpha \nabla f(\bs{q}(\bs{\phi}_2, \bs{\psi}_2)) \\ -\gamma \bs{p}(\bs{\phi}_2, \bs{\psi}_2) - \nabla f(\bs{q}(\bs{\phi}_2, \bs{\psi}_2)) \end{bmatrix}} \\
    \le& \sigma_{\text{max}} \norm{\begin{bmatrix} \bs{p}_1 - \alpha \nabla f(\bs{q}_1) \\ -\gamma \bs{p}_1 - \nabla f(\bs{q}_1) \end{bmatrix} - \begin{bmatrix} \bs{p}_2 - \alpha \nabla f(\bs{q}_2) \\ -\gamma \bs{p}_2 - \nabla f(\bs{q}_2) \end{bmatrix}} \\
    \le& \sigma_{\text{max}} L^\prime \norm{\begin{bmatrix} \bs{q}_1 - \bs{q}_2 \\ \bs{p}_1\ - \bs{p}_2 \end{bmatrix}} \\
    \le& \sigma_{\text{max}} L^\prime \norm{P^{-1}\begin{bmatrix} \bs{\phi}_1 - \bs{\phi}_2 \\ \bs{\psi}_1\ - \bs{\psi}_2 \end{bmatrix}} \\
    \le& \sigma_{\text{max}} L^\prime \frac{1}{\sigma_{\text{min}}} \norm{\begin{bmatrix} \bs{\phi}_1 - \bs{\phi}_2 \\ \bs{\psi}_1\ - \bs{\psi}_2 \end{bmatrix}} \\
    =& \kappa^\prime L^\prime \norm{\begin{bmatrix} \bs{\phi}_1 - \bs{\phi}_2 \\ \bs{\psi}_1\ - \bs{\psi}_2 \end{bmatrix}}
\end{align*}
where $\sigma_{\text{max}}, \sigma_{\text{min}}$ and $\kappa^\prime$ are the largest, smallest singular values and the condition number (w.r.t. 2-norm) of matrix $P$.
\end{proof}
\begin{remark}
The following inequalities associated with $L^\prime$ will turn out to be useful in many proofs
\[
    L^\prime \ge 1, \, L^\prime \ge \sqrt{2}\gamma, L^\prime \ge \sqrt{2} \alpha, L \ge \sqrt{2} L \mbox{ and } L^\prime \ge \sqrt{2} \alpha L.
\]
\end{remark}

\subsection{Contraction of (Transformed) HFHR Dynamics}
\begin{lemma}\label{lemma:contraction}
Suppose $f$ is $L$-smooth, $m$-strongly convex and $\gamma^2 > L$. Consider two copies of HFHR dynamics $\begin{bmatrix}\qbold_t \\ \pbold_t \end{bmatrix}$, $\begin{bmatrix}\tqbold_t \\ \tpbold_t \end{bmatrix}$ (driven by the same Brownian motion) with initialization $\begin{bmatrix}\qbold_0 \\ \pbold_0 \end{bmatrix}$, $\begin{bmatrix}\tqbold_0 \\ \tpbold_0 \end{bmatrix}$ respectively, then we have
\[
   \norm{P\begin{bmatrix}
   \qbold_t - \tqbold_t \\
   \pbold_t - \tpbold_t
   \end{bmatrix}} \le e^{-\lambda^\prime t} \norm{P\begin{bmatrix}
   \qbold_0 - \tqbold_0 \\
   \pbold_0 - \tpbold_0
   \end{bmatrix}}
\]
where $P = \begin{bmatrix} \gamma I & I \\ 0 & \sqrt{1+\alpha\gamma}I\end{bmatrix}$ and $\lambda^\prime = \min\{ \frac{m}{\gamma} + \alpha m, \frac{\gamma^2 - L}{\gamma} \}$.
\end{lemma}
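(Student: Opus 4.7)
The plan is to use synchronous coupling to turn the difference of the two HFHR trajectories into a deterministic ODE, and then to show that in the $P$-transformed coordinates the linearized dynamics has a Lyapunov function $\|\phi\|^2 + \|\psi\|^2$ whose time derivative is controlled by $-2\lambda' (\|\phi\|^2+\|\psi\|^2)$. Concretely, set $u_t = q_t - \tilde q_t$, $v_t = p_t - \tilde p_t$. Because the same Brownian motions drive both copies, the noise terms cancel and we get
\begin{equation*}
    \dot u = v - \alpha\,\bigl(\nabla f(q) - \nabla f(\tilde q)\bigr), \qquad
    \dot v = -\gamma v - \bigl(\nabla f(q) - \nabla f(\tilde q)\bigr).
\end{equation*}
Using the fundamental theorem of calculus along the segment from $\tilde q$ to $q$, write $\nabla f(q_t) - \nabla f(\tilde q_t) = H_t\,u_t$ with $H_t := \int_0^1 \nabla^2 f(\tilde q_t + s\,u_t)\,ds$ symmetric and satisfying $mI \preceq H_t \preceq LI$ by Conditions A\ref{asp:smooth} and A\ref{asp:strong-convex}. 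This linearizes the coupled system with a (time-dependent, symmetric) state matrix.

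Next I would change variables via $P$: let $\phi_t = \gamma u_t + v_t$ and $\psi_t = \sqrt{1+\alpha\gamma}\,v_t$. A direct computation using $\dot u$ and $\dot v$ yields $\dot\phi = -(1+\alpha\gamma)H u$ and $\dot\psi = \sqrt{1+\alpha\gamma}(-\gamma v - Hu)$, and inverting the transform gives $v = \psi/\sqrt{1+\alpha\gamma}$ and $u = \phi/\gamma - \psi/(\gamma\sqrt{1+\alpha\gamma})$. Substituting back, one finds
\begin{equation*}
    \dot\phi = -\tfrac{1+\alpha\gamma}{\gamma}H\phi + \tfrac{\sqrt{1+\alpha\gamma}}{\gamma}H\psi, \qquad
    \dot\psi = -\gamma\psi - \tfrac{\sqrt{1+\alpha\gamma}}{\gamma}H\phi + \tfrac{1}{\gamma}H\psi.
\end{equation*}

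The key algebraic miracle is that, when we compute $\tfrac12 \tfrac{d}{dt}(\|\phi\|^2 + \|\psi\|^2) = \langle\phi,\dot\phi\rangle + \langle\psi,\dot\psi\rangle$, the two cross terms $\tfrac{\sqrt{1+\alpha\gamma}}{\gamma}\langle\phi,H\psi\rangle$ and $-\tfrac{\sqrt{1+\alpha\gamma}}{\gamma}\langle\psi,H\phi\rangle$ cancel exactly because $H$ is symmetric. What remains is
\begin{equation*}
    \tfrac12\tfrac{d}{dt}\bigl(\|\phi\|^2+\|\psi\|^2\bigr) = -\tfrac{1+\alpha\gamma}{\gamma}\langle\phi,H\phi\rangle + \langle\psi,(\tfrac{1}{\gamma}H - \gamma I)\psi\rangle.
\end{equation*}
Now I use $H \succeq mI$ to bound the first term by $-(m/\gamma + \alpha m)\|\phi\|^2$, and $H \preceq LI$ together with the assumption $\gamma^2 > L$ to bound the second by $-(\gamma^2-L)/\gamma\,\|\psi\|^2$. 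This gives $\tfrac{d}{dt}(\|\phi\|^2+\|\psi\|^2) \le -2\lambda'(\|\phi\|^2+\|\psi\|^2)$ with $\lambda' = \min\{m/\gamma + \alpha m,\, (\gamma^2-L)/\gamma\}$, and Gronwall yields the claim after taking square roots.

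The main obstacle is really just the bookkeeping: picking the correct $P$ so that the anti-symmetric part of the transformed drift separates from the dissipative part and the cross terms cancel. Once the form of $P$ is given (as in the lemma), the computation is mechanical, but verifying the cancellation carefully — and in particular not losing a factor of $\sqrt{1+\alpha\gamma}$ along the way — is where small errors would likely creep in. No spectral gap or hypocoercivity machinery is needed; synchronous coupling plus the symmetric integrated-Hessian representation does everything.
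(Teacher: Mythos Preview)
Your proof is correct and is essentially the same as the paper's: both use synchronous coupling, the integrated-Hessian representation $H_t=\int_0^1 \nabla^2 f(\tilde q_t+s u_t)\,ds$, and the Lyapunov function $\tfrac12\|P(x_t-\tilde x_t)\|^2$. The only cosmetic difference is that the paper computes the symmetrized matrix $\tfrac12(PAP^{-1}+(PAP^{-1})^\top)=\tfrac{1}{\gamma}\begin{bmatrix}(1+\alpha\gamma)H_t & 0\\ 0 & \gamma^2 I - H_t\end{bmatrix}$ directly, whereas you derive $\dot\phi,\dot\psi$ componentwise and observe the cross-term cancellation; these are the same computation.
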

\begin{proof}
Consider two copies of HFHR that are driven by the same Brownian motion
\[
    \begin{cases}
        d\qbold_t = (\pbold_t - \alpha \nabla f(\qbold_t))dt + \sqrt{2\alpha} d\bs{B}^1_t \\
        d\pbold_t = (-\gamma \pbold_t - \nabla f(\qbold_t))dt + \sqrt{2\gamma} d\bs{B}^2_t
    \end{cases},
    \quad
    \begin{cases}
        d\tilde{\qbold}_t = (\tpbold_t - \alpha \nabla f(\tilde{\qbold}_t))dt + \sqrt{2\alpha} d\bs{B}^1_t \\
        d\tilde{\pbold_t} = (-\gamma \tpbold_t - \nabla f(\tilde{\qbold}_t))dt + \sqrt{2\gamma} d\bs{B}^2_t
    \end{cases}.
\]
Based on Taylor's expansion, the difference of the two copies is expressed as
\begin{align*}
    \frac{d}{dt}\begin{bmatrix}\qbold_t - \tilde{\qbold}_t \\ \pbold_t - \tilde{\pbold_t}\end{bmatrix} 
    =& -
    \begin{bmatrix}
    \alpha H_t & -I \\
    H_t & \gamma I
    \end{bmatrix}
    \begin{bmatrix}\qbold_t - \tilde{\qbold}_t \\ \pbold_t - \tilde{\pbold_t}\end{bmatrix} \triangleq - A \begin{bmatrix}\qbold_t - \tilde{\qbold}_t \\ \pbold_t - \tilde{\pbold_t}\end{bmatrix}
\end{align*}
where $H_t = \int_0^1 \nabla^2 f(\tilde{\qbold}_t + s(\qbold - \tilde{\qbold}_t)) ds$. Denote the eigenvalues of $H_t$ by $\eta_i, 1\le i \le d$, by strong convexity and smoothness assumption on $f$, we have $m \le \eta_i \le L, 1\le i \le d$.

Denote $\phipsi = P\deltaqp$ and consider $\mathcal{L}_t = \frac{1}{2}\norm{\phipsi}^2$, we have 
\begin{align*}
    \frac{d}{dt}\mathcal{L}_t =& - \phipsi^T P A P^{-1} \phipsi \\
    =& - \phipsi^T \frac{1}{2}(PAP^{-1} + (P^{-1})^T A^T P^T) \phipsi \\
    =& - \phipsi^T \frac{1}{\gamma} \begin{bmatrix} (1+\alpha\gamma)H_t & 0_{d\times d} \\ 0_{d\times d} & \gamma^2I - H_t \end{bmatrix} \phipsi \\
    \triangleq& -\phipsi^T B(\alpha) \phipsi
\end{align*}
It is easy to see that 
\begin{align*}
    \lambda_{\min}(B(\alpha)) = \min_{i=1,2,\cdots,d} \{ \min\{ \frac{\eta_i}{\gamma} + \alpha \eta_i, \gamma - \frac{\eta_i}{\gamma}\} \} \ge \min\{ \frac{m}{\gamma} + \alpha m, \frac{\gamma^2 - L}{\gamma} \}  \triangleq \lambda^\prime.
\end{align*}
Therefore we have
$
    \frac{d}{dt}\mathcal{L}_t \le - 2\lambda_{\min}{B(\alpha)} \mathcal{L}_t \le - 2 \lambda^\prime \mathcal{L}_t
$. By Gronwall's inequality, we obtain
\[
    \norm{\phipsi}^2 \le e^{-2\lambda^\prime t} \norm{\begin{bmatrix} \bs{\phi}_0 \\ \bs{\psi}_0 \end{bmatrix}}^2.
\]
and the desired inequality follows by taking square root.
\end{proof}

\subsection{Local error between the exact Strang's splitting method and HFHR dynamics}
\label{sec_localError1}
\begin{lemma}\label{lemma:local-exact}
Assume $f$ is $L$-smooth and $\bs{0} \in \argmin_{\bs{x} \in \mathbb{R}^d} f(\bs{x})$, i.e. $\nabla f(\bs{0}) = \bs{0}$. If $0 < h \le  \frac{1}{4 L^\prime} $, then compared with the HFHR dynamics, the exact Strang's splitting method has local mathematical expectation of deviation of order $p_1 = 2$ and local mean-squared error of order $p_2 = 2$, i.e. there exist constants $\widehat{C}_1, \widehat{C}_2 > 0$ such that 
\begin{equation*}
    \norm{\mathbb{E} \bs{x}(h)  - \mathbb{E} \hat{\bs{x}}(h) } \le \widehat{C}_1  h^{p_1}
\end{equation*}
\begin{equation*}
    \left(\mathbb{E} \left[ \norm{ \bs{x}(h) - \hat{\bs{x}}(h) }^2 \right]\right)^\frac{1}{2} \le \widehat{C}_2 h^{p_2}
\end{equation*}
where $\bs{x}(h) = \begin{bmatrix} \bs{q}(h) \\ \bs{p}(h) \end{bmatrix}$ is the solution of the HFHR dynamics with initial value $\bs{x}_0 = \begin{bmatrix} \bs{q}_0 \\ \bs{p}_0 \end{bmatrix}$ and $\hat{\bs{x}}(h) = \begin{bmatrix} \hat{\bs{q}}(h) \\ \hat{\bs{p}}(h) \end{bmatrix}$ is the solution of the implementable Strang's splitting with initial value $\bs{x}_0 = \begin{bmatrix} \bs{q}_0 \\ \bs{p}_0 \end{bmatrix}$, $p_1 = 2$ and $p_2 = 2$. More concretely, we have 
\[
  \widehat{C}_1 = L \max\{\alpha + 1.25, \gamma + 1\} \left(1.74 \norm{\bs{x}_0} + ( 1.26\sqrt{\alpha} + 2.84\sqrt{\gamma})\sqrt{hd}\right),
\]
\[
    \widehat{C}_2 =  L \max\{ \alpha + 1.25, \gamma + 1\} \left( 1.92 \norm{\bs{x}_0} + (1.30\sqrt{\alpha} + 3.22\sqrt{\gamma})\sqrt{hd} \right).
\]
\end{lemma}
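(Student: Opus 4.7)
The plan is to write both $\bs{x}(h)$ and $\hat{\bs{x}}(h)$ as It\^o-Taylor expansions around $\bs{x}_0$, synchronize them on the same driving Brownian paths $\bs{W}_t$ and $\bs{B}_t$, and then bound the resulting residuals in the weak (mean) and strong (mean-square) senses with explicit constants.

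First I would expand the true HFHR trajectory. Writing the drift as $\bs{a}(\bs{q},\bs{p}) = (\bs{p} - \alpha \nabla f(\bs{q}),\, -\gamma \bs{p} - \nabla f(\bs{q}))$ and the constant diffusion matrix as $\bs{\sigma} = \mathrm{diag}(\sqrt{2\alpha}\,I,\sqrt{2\gamma}\,I)$, It\^o's formula yields
\[
    \bs{x}(h) \;=\; \bs{x}_0 \;+\; h\,\bs{a}(\bs{x}_0) \;+\; \bs{\sigma}\,\Delta\bs{Z}_h \;+\; \bs{r}(h),
\]
where $\Delta\bs{Z}_h$ collects the increments of $\bs{W},\bs{B}$ on $[0,h]$ and $\bs{r}(h)$ gathers all iterated integrals of order $\ge 3/2$. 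Since $\bs{a}$ is $L'$-Lipschitz by Lemma \ref{lemma:lipschitz} and $\bs{a}(\bs{0})=\bs{0}$ (using $\nabla f(\bs{0})=\bs{0}$), one obtains $\|\bs{a}(\bs{x})\|\le L'\|\bs{x}\|$, so standard moment estimates together with Lemma \ref{lemma:growth-bound} give $\|\mathbb{E}\bs{r}(h)\|=O(h^2)$ and $(\mathbb{E}\|\bs{r}(h)\|^2)^{1/2}=O(h^{3/2})$ with constants linear in $\|\bs{x}_0\|$ plus a $\sqrt{hd}$-scale noise term.

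Next I would unfold $\hat{\bs{x}}(h)=\phi^{h/2}\circ\psi^h\circ\phi^{h/2}(\bs{x}_0)$ step by step. The $\phi$ flow is solved in closed form by Equation \eqref{eq:phi}, so each half-step can be Taylor-expanded cleanly, producing an $O(h)$ drift contribution on $\bs{p}$ (from the $-\gamma\bs{p}$ term) together with the explicit OU-type Brownian integrals. The exact $\psi$ flow is not available in closed form, but under the standing hypothesis of this lemma it is solved exactly, so one can apply It\^o-Taylor expansion once more to obtain its small-$h$ form. Composing the three substeps and synchronizing $\bs{B}$ on $[0,h/2]\cup[h/2,h]$ with the HFHR path (and likewise $\bs{W}$ on $[0,h]$), the drifts add via $\bs{a}=\bs{a}_\phi+\bs{a}_\psi$ and the constant diffusion pieces combine to give
\[
    \hat{\bs{x}}(h) \;=\; \bs{x}_0 \;+\; h\,\bs{a}(\bs{x}_0) \;+\; \bs{\sigma}\,\Delta\bs{Z}_h \;+\; \hat{\bs{r}}(h).
\]
The palindromic (symmetric) structure of Strang composition makes the leading commutator between $\bs{a}_\phi$ and $\bs{a}_\psi$ cancel to order $h^2$ in the mean, while on the strong side $\hat{\bs{r}}(h)-\bs{r}(h)$ consists of iterated stochastic integrals whose $L^2$ norms are $O(h^{3/2})$ from It\^o isometry; once multiplied by the $O(h^{1/2})$ slack from the leading Lipschitz factors one obtains the stated $O(h^2)$ mean-square bound.

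The main obstacles are not conceptual but bookkeeping. Three issues must be tracked carefully: (i) the mixed stochastic-drift terms of the form $\int_0^h\int_0^s \bs{a}'(\bs{x}_u)\bs{\sigma}\,d\bs{Z}_u\,ds$ arising from the composition, which need It\^o isometry plus Lemma \ref{lemma:growth-bound} to bound; (ii) the growth of intermediate iterates $\bs{y}_1=\phi^{h/2}(\bs{x}_0)$ and $\bs{y}_2=\psi^h(\bs{y}_1)$ in terms of $\|\bs{x}_0\|$, which is where the $\sqrt{hd}$ contributions enter through Brownian second-moment estimates; and (iii) aggregating numerical prefactors into the specific $\max\{\alpha+1.25,\gamma+1\}$ envelope stated in $\widehat{C}_1,\widehat{C}_2$, which requires separating $\alpha$-dominated contributions from the $\psi$ substep from $\gamma$-dominated contributions of the two $\phi$ halves. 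The hard part is the tedious optimization of these constants so as to reach the exact numerical coefficients $1.74,\, 1.92,\, 1.26,\, 2.84,\, 1.30,\, 3.22$; this is routine but lengthy, and I would postpone it to an appendix-level computation after the structural bounds are in place.
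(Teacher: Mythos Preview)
Your plan for the mean-error bound ($p_1=2$) is sound: the palindromic Strang structure does kill the first commutator in expectation. The problem is your strong-error argument. You write that $\hat{\bs{r}}(h)-\bs{r}(h)$ consists of iterated stochastic integrals with $L^2$ norm $O(h^{3/2})$, and then claim that ``once multiplied by the $O(h^{1/2})$ slack from the leading Lipschitz factors one obtains the stated $O(h^2)$ mean-square bound.'' There is no such multiplication: an iterated integral of size $O(h^{3/2})$ in $L^2$ stays $O(h^{3/2})$, and for a generic splitting of an SDE this is exactly the local strong order you get ($p_2=3/2$, not $2$). The lemma really does assert $p_2=2$, so something specific to this problem must be used, and your proposal does not identify it.

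The mechanism the paper exploits is that \emph{all} Brownian integrals cancel exactly in $\hat{\bs{x}}(h)-\bs{x}(h)$, not just to leading order. This is made transparent not by It\^o--Taylor expansion but by writing both objects via variation of constants against the same Ornstein--Uhlenbeck propagator. Since $\bs{p}$ solves $d\bs{p}=-\gamma\bs{p}\,dt-\nabla f(\bs{q})\,dt+\sqrt{2\gamma}\,d\bs{B}$, the exact HFHR solution can be written with the \emph{same} kernels $e^{-\gamma(h-s)}$ and $\frac{1-e^{-\gamma(h-s)}}{\gamma}$ that appear in the closed-form $\phi$-flow of the Strang scheme. After composing $\phi^{h/2}\circ\psi^h\circ\phi^{h/2}$ explicitly and subtracting, every $d\bs{W}$- and $d\bs{B}$-integral drops out, leaving only
\[
\hat{\bs{q}}(h)-\bs{q}(h),\ \hat{\bs{p}}(h)-\bs{p}(h)\ \propto\ \int_0^h c(s)\bigl(\nabla f(\bs{q}_2(s))-\nabla f(\bs{q}(s))\bigr)\,ds\ +\ \int_0^h \tilde c(s)\,\nabla f(\bs{q}(s))\,ds,
\]
where $c(s)=O(1)$, $\tilde c(s)=O(h)$, and $\bs{q}_2$ is the $\psi$-flow path. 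One then shows $\mathbb{E}\|\bs{q}_2(r)-\bs{q}(r)\|^2=O(h^2)$ by a short Gr\"onwall argument (this is where Lemma~\ref{lemma:growth-bound} enters), and $\int_0^h\mathbb{E}\|\nabla f(\bs{q}(s))\|^2\,ds=O(h)$ using $\nabla f(\bs{0})=\bs{0}$. Both integrals are then $O(h^2)$ in $L^2$, giving $p_2=2$ directly, and the explicit bookkeeping of these two estimates is also what produces the stated numerical constants. If you want to keep an It\^o--Taylor framing, you would have to verify term-by-term that the $O(h^{3/2})$ iterated integrals match exactly between the scheme and the true solution; this is possible here but considerably messier than the variation-of-constants route.
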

\begin{proof}
The exact Strang's splitting integrator with step size $h$ reads as $\phi^{\frac{h}{2}} \circ \psi^{h} \circ \phi^{\frac{h}{2}}$ where
\begin{equation*}
    \phi: 
    \begin{cases}
        d\bs{q} = \bs{p} dt \\
        d\bs{p} = -\gamma \bs{p} dt + \sqrt{2\gamma} d\bs{B}
    \end{cases}
    \,
    \psi:
    \begin{cases}
        d\bs{q} = -\alpha \nabla f(\bs{q}) dt + \sqrt{2\alpha} d\bs{W} \\
        d\bs{p} = -\nabla f(\bs{q}) dt
    \end{cases}.
\end{equation*}
The $\phi$ flow can be explicitly solved and the solution is 
\begin{equation*}
    \begin{cases}
        \bs{q}(t) = \bs{q}_0 + \frac{1 - e^{-\gamma t}}{\gamma} \bs{p}_0 + \sqrt{2\gamma} \int_0^t \frac{1 - e^{-\gamma(t - s)}}{\gamma} d\bs{B}(s)\\
        \bs{p}(t) = e^{-\gamma t}\bs{p}_0 + \sqrt{2\gamma} \int_0^t e^{-\gamma (t - s)} d\bs{B}(s)
    \end{cases}.
\end{equation*}
The $\psi$ flow can be written as
\begin{equation*}
    \begin{cases}
        \bs{q}(t) = \bs{q}_0 - \int_0^t \alpha \nabla f(\bs{q}(s)) ds + \sqrt{2\alpha} \int_0^t d\bs{W}(s) \\
        \bs{p}(t) = \bs{p}_0 - \int_0^t \nabla f(\bs{q}(s)) ds
    \end{cases}.
\end{equation*}

The solution of one-step exact Strang's  splitting integrator with step size $h$ can be written as 
\[
    \begin{cases}
    \bs{q}_3 = \bs{q}_2(h) + \frac{1 - e^{-\gamma \frac{h}{2}}}{\gamma} \bs{p}_2(h) + \sqrt{2\gamma} \int_\frac{h}{2}^h \frac{1 - e^{-\gamma(h - s)}}{\gamma} d\bs{B}(s)\\
    \bs{p}_3 = e^{-\gamma \frac{h}{2}}\bs{p}_2(h) + \sqrt{2\gamma} \int_\frac{h}{2}^h e^{-\gamma (h - s)} d\bs{B}(s) \\
    \bs{q}_2(r) = \bs{q}_1 - \int_0^r \alpha \nabla f(\bs{q}_2(s)) ds + \sqrt{2\alpha}\int_0^r d\bs{W}(s)  \qquad (0 \le r \le h)\\
    \bs{p}_2(r) = \bs{p}_1 - \int_0^r \nabla f(\bs{q}_2(s)) ds \\
    \bs{q}_1 = \bs{q}_0 + \frac{1 - e^{-\gamma \frac{h}{2}}}{\gamma} \bs{p}_0 + \sqrt{2\gamma} \int_0^\frac{h}{2} \frac{1 - e^{-\gamma(\frac{h}{2} - s)}}{\gamma} d\bs{B}(s)\\
    \bs{p}_1 = e^{-\gamma \frac{h}{2}}\bs{p}_0 + \sqrt{2\gamma} \int_0^\frac{h}{2} e^{-\gamma (\frac{h}{2} - s)} d\bs{B}(s)
    \end{cases}
\]
Therefore, we have $\hat{\bs{q}}(h) =  \bs{q}_3, \hat{\bs{p}}(h) = \bs{p}_3$ and  
\begin{align*}
    \hat{\bs{q}}(h) =& \sqrt{2\gamma} \int_\frac{h}{2}^h \frac{1 - e^{-\gamma(h - s)}}{\gamma} d\bs{B}(s) 
    + \underbrace{\bs{q}_1 - \int_0^h \alpha \nabla f(\bs{q}_2(s)) ds + \sqrt{2\alpha}\int_0^h d\bs{W}(s)}_{\bs{q}_2(h)} \\
    & + \frac{1 - e^{-\gamma \frac{h}{2}}}{\gamma} \left[ \underbrace{\bs{p}_1 - \int_0^h \nabla f(\bs{q}_2(s)) ds}_{\bs{p}_2(h)} \right] \\
    =&  \sqrt{2\gamma} \int_\frac{h}{2}^h \frac{1 - e^{-\gamma(h - s)}}{\gamma} d\bs{B}(s)  - \int_0^h \alpha \nabla f(\bs{q}_2(s)) ds + \sqrt{2\alpha}\int_0^h d\bs{W}(s) - \frac{1 - e^{-\gamma \frac{h}{2}}}{\gamma} \int_0^h \nabla f(\bs{q}_2(s)) ds \\
    & +\underbrace{\bs{q}_0 + \frac{1 - e^{-\gamma \frac{h}{2}}}{\gamma} \bs{p}_0 + \sqrt{2\gamma} \int_0^\frac{h}{2} \frac{1 - e^{-\gamma(\frac{h}{2} - s)}}{\gamma} d\bs{B}(s)}_{\bs{q}_1} + \frac{1 - e^{-\gamma \frac{h}{2}}}{\gamma} \left[ \underbrace{e^{-\gamma \frac{h}{2}}\bs{p}_0 + \sqrt{2\gamma} \int_0^\frac{h}{2} e^{-\gamma (\frac{h}{2} - s)} d\bs{B}(s)}_{\bs{p}_1} \right] \\
    =& \bs{q}_0  + \frac{1 - e^{-\gamma h}}{\gamma}  \bs{p}_0 - \left(\alpha +  \frac{1 - e^{-\gamma \frac{h}{2}}}{\gamma}\right) \int_0^h \nabla f(\bs{q}_2(s))  ds \\
    &+ \sqrt{2\alpha}\int_0^h d\bs{W}(s) + \sqrt{2\gamma} \int_{\frac{h}{2}}^h \frac{1 - e^{-\gamma(h - s)}}{\gamma} d\bs{B}(s) + \sqrt{2\gamma} \int_0^\frac{h}{2} \frac{1 - e^{-\gamma (\frac{h}{2} -s)}}{\gamma}d\bs{B}(s) \\
    & + \frac{1 - e^{-\gamma \frac{h}{2}}}{\gamma}  \sqrt{2\gamma} \int_0^\frac{h}{2} e^{-\gamma (\frac{h}{2} - s)} d\bs{B}(s)  \\
    \hat{\bs{p}}(h) =& e^{-\gamma \frac{h}{2}} \left[ \underbrace{\bs{p}_1 - \int_0^h \nabla f(\bs{q}_2(s)) ds}_{\bs{p}_2(h)} \right] + \sqrt{2\gamma} \int_{\frac{h}{2}}^h e^{-\gamma (h - s)} d\bs{B}(s)  \\
    =& e^{-\gamma \frac{h}{2}} \left[ \underbrace{e^{-\gamma \frac{h}{2}}\bs{p}_0 + \sqrt{2\gamma} \int_0^\frac{h}{2} e^{-\gamma (\frac{h}{2} - s)} d\bs{B}(s)}_{\bs{p}_1} \right] - e^{-\gamma \frac{h}{2}} \int_0^h \nabla f(\bs{q}_2(s)) ds + \sqrt{2\gamma} \int_{\frac{h}{2}}^h e^{-\gamma (h - s)} d\bs{B}(s) \\
    =& e^{-\gamma h} \bs{p}_0 - e^{-\gamma \frac{h}{2}} \int_0^h \nabla f(\bs{q}_2(s)) ds + e^{-\gamma \frac{h}{2}} \sqrt{2\gamma} \int_0^\frac{h}{2} e^{-\gamma (\frac{h}{2} - s)} d\bs{B}(s) + \sqrt{2\gamma} \int_{\frac{h}{2}}^h e^{-\gamma (h - s)} d\bs{B}(s)
\end{align*}
It is clear that $\hat{\bs{q}}(h), \hat{\bs{p}}(h)$ should be compared with the exact solution of HFHR at time $h$, which can be written as 
\begin{align*}
    \bs{q}(h) =& \bs{q}_0 
    + \frac{1 - e^{-\gamma h}}{\gamma} \bs{p}_0 
    - \int_0^h \left(\frac{1 - e^{-\gamma (h - s)}}{\gamma} + \alpha \right) \nabla f(\bs{q}(s)) ds 
    + \sqrt{2\alpha} \int_0^h d\bs{W}_s
    + \sqrt{2\gamma} \int_0^h \frac{1 - e^{-\gamma (h-s)}}{\gamma} d\bs{B}_s\\
    \bs{p}(h) =& e^{-\gamma h} \bs{p}_0 - \int_0^h e^{-\gamma(h - s)} \nabla f(\bs{q}(s)) ds + \sqrt{2\gamma} \int_0^h e^{-\gamma(h -s )} d\bs{B}(s)
\end{align*}
Subtracting $\bs{q}(h), \bs{p}(h)$ from $\hat{\bs{q}}(h), \hat{\bs{p}}(h)$ respectively, we obtain
\begin{align*}
    \hat{\bs{q}}(h) - \bs{q}(h) =& -\left( \alpha +  \frac{1 - e^{-\gamma \frac{h}{2}}}{\gamma} \right) \int_0^h \nabla f(\bs{q}_2(s)) - \nabla f(\bs{q}(s)) ds  \\
    & + \int_0^h \left( \frac{1 - e^{-\gamma (h - s)}}{\gamma} - \frac{1 - e^{-\gamma\frac{h}{2}}}{\gamma} \right) \nabla f(\bs{q}(s)) ds \\
    \hat{\bs{p}}(h) - \bs{p}(h) =& -e^{-\gamma \frac{h}{2}}\int_0^h \nabla f(\bs{q}_2(s)) - \nabla f(\bs{q}(s)) ds 
    + \int_0^h \left( e^{-\gamma(h - s)} - e^{-\gamma \frac{h}{2}} \right) \nabla f(\bs{q}(s))ds
\end{align*}
It should be clear now that we will need to bound the term $\nabla f(\bs{q}_2) - \nabla f(\bs{q})$ and $\nabla f(\bs{q})$. Since
\begin{align*}
    \bs{q}_2(r) =& \bs{q}_0 + \frac{1 - e^{-\gamma \frac{h}{2}}}{\gamma}\bs{p}_0 + \sqrt{2\gamma}\int_0^\frac{h}{2} \frac{1 - e^{-\gamma (\frac{h}{2} - s)}}{\gamma}d\bs{B}(s) - \alpha \int_0^r \nabla f(\bs{q}_2(s))ds + \sqrt{2\alpha} \int_0^r d\bs{W}(s) \\
    \bs{q}(r) =& \bs{q}_0 
    + \frac{1 - e^{-\gamma r}}{\gamma} \bs{p}_0 
    - \int_0^r \left(\frac{1 - e^{-\gamma (r - s)}}{\gamma} +\alpha \right) \nabla f(\bs{q}(s)) ds 
    + \sqrt{2\alpha} \int_0^r d\bs{W}(s) \\
    &+ \sqrt{2\gamma} \int_0^r \frac{1 - e^{-\gamma (r-s)}}{\gamma} d\bs{B}(s) \nonumber,
\end{align*}
we then have
\begin{align*}
    \bs{q}_2(r) - \bs{q}(r) =& \frac{e^{-\gamma r} - e^{-\gamma \frac{h}{2}}}{\gamma}\bs{p}_0 - \alpha \int_0^r \nabla f(\bs{q}_2(s)) - \nabla f(\bs{q}(s)) ds + \int_0^r \frac{1 - e^{-\gamma (r - s)}}{\gamma} \nabla f(\bs{q}(s)) ds \\
    +& \sqrt{2\gamma}\int_0^\frac{h}{2} \frac{1 - e^{-\gamma (\frac{h}{2} - s)}}{\gamma}d\bs{B}(s) - \sqrt{2\gamma} \int_0^r \frac{1 - e^{-\gamma (r - s)}}{\gamma} d\bs{B}(s) \nonumber
\end{align*}
By Lemma \ref{lemma:lipschitz} and \ref{lemma:growth-bound}, when $0 < h < \frac{1}{4L^\prime}$, we have the following for the solution of HFHR dynamics
\[
    \mathbb{E}[\norm{\bs{x}_{0, \bs{x}_0}(h) - \bs{x}_0}^2] \le \widehat{C}_0 h
\]
where $\widehat{C}_0 = 5.14 \left\{(\alpha + \gamma)d +  h\left( L^\prime \right)^2 \norm{\bs{x}_0}^2 \right\}$ and hence
\begin{align}
    \mathbb{E}\left[ \int_0^r \norm{\nabla f(\bs{q}(s))}^2 ds \right] 
    \le& \mathbb{E} \left[ 2 \int_0^r \norm{\nabla f (\bs{q}(0))}^2 ds + 2\int_0^r \norm{\nabla f(\bs{q}(s)) - \nabla f(\bs{q}(0))}^2ds  \right] \nonumber \\
    \le& \mathbb{E}\left[ 2L^2r \norm{\bs{q}(0)}^2  + 2L^2 \int_0^r \norm{\bs{q}(s) - \bs{q}(0)}^2ds \right] \nonumber\\
    \le& 2L^2 r \norm{\bs{x}_0}^2 + 2L^2 \mathbb{E}\left[ \int_0^r \norm{\bs{q}(s) - \bs{q}(0)}^2ds \right] \nonumber\\
    \le& 2L^2 r \norm{\bs{x}_0}^2 + 2L^2\widehat{C}_0 \int_0^r sds \nonumber\\
    \le& L^2 r \left( 2\norm{\bs{x}_0}^2 + h \widehat{C}_0 \right) \nonumber\\
    \le& L^2 r \left( 2.33\norm{\bs{x}_0}^2 + 5.14 (\alpha + \gamma) dh \right) \label{eq:tool1}
\end{align}
Now $\mathbb{E}\left[ \norm{\bs{q}_2 - \bs{q} }^2 \right]$ can be bounded as follow
\begin{align*}
    & \mathbb{E} \left[\norm{ \bs{q}_2(r) - \bs{q}(r) }^2 \right] \\
    \le& 5 \left\{ \left(\frac{e^{-\gamma r} - e^{-\gamma \frac{h}{2}}}{\gamma} \right)^2\norm{\bs{p}_0}^2  + \alpha^2 \mathbb{E} \norm{\int_0^r \nabla f(\bs{q}_2(s)) - \nabla f(\bs{q}(s)) ds }^2 + \mathbb{E} \norm{ \int_0^r \frac{1 - e^{-\gamma(r-s)}}{\gamma} \nabla f(\bs{q}(s)) ds }^2 \right\} \\
    & + 5\left\{ 2\gamma \mathbb{E} \norm{ \int_0^\frac{h}{2} \frac{1 - e^{-\gamma (\frac{h}{2} - s)}}{\gamma} d\bs{B}(s) }^2 + 2\gamma \mathbb{E} \norm{ \int_0^r \frac{1 - e^{-\gamma (r - s)}}{\gamma} d\bs{B}(s) }^2\right\} \qquad \mbox{(Cauchy-Schwartz Inequality)}\nonumber \\
    \le& 5 \left\{\frac{h^2}{4}\norm{\bs{x}_0}^2  + \alpha^2 L^2 r  \int_0^r \mathbb{E} \norm{ \bs{q}_2(s) - \bs{q}(s) }^2 ds +  \int_0^r \left(\frac{1 - e^{-\gamma(r-s)}}{\gamma} \right)^2 ds \int_0^r \mathbb{E}  \norm{\nabla f(\bs{q}(s)) }^2 ds\right\} \\
    &+ 5\left\{ \frac{\gamma d h^3}{12} + \frac{2\gamma d}{3} r^3  \right\} \\
    \le& 5 \left\{ \frac{h^2}{4} \norm{ \bs{x}_0 }^2 + \alpha^2 L^2 r \int_0^r \mathbb{E} \norm{ \bs{q}_2(s) - \bs{q}(s) }^2 ds + \frac{h^3}{3} \mathbb{E} \left[ \int_0^r \norm{ \nabla f(\bs{q}(s)) }^2 \right] + \frac{3\gamma d}{4} h^3
    \right\} \\
    \le& 5 \left\{ 
    \frac{h^2}{4} \norm{ \bs{x}_0 }^2 
    + \frac{3\gamma d}{4} h^3 + \frac{h^3}{3} L^2 \left( 2.33\norm{\bs{x}_0}^2 + 5.14 (\alpha + \gamma) dh \right) r
    + \alpha^2 L^2 r \int_0^r \mathbb{E} \norm{ \bs{q}_2(s) - \bs{q}(s) }^2 ds 
    \right\} \\
    \le& 5 h^2 \left\{ 
    \frac{1}{4} \norm{ \bs{x}_0 }^2 
    + \frac{3\gamma d}{4} h 
    + \frac{h^2}{3} L^2 \left( 2.33\norm{\bs{x}_0}^2 + 5.14 (\alpha + \gamma) dh \right) \right\} 
    + 5 \alpha^2 L^2 h \int_0^r \mathbb{E} \norm{ \bs{q}_2(s) - \bs{q}(s) }^2 ds 
\end{align*}
By Gronwall's inequality and $0 < h \le \frac{1}{4 L^\prime}$, we have
\begin{align}
    \mathbb{E} \left[  \norm{ \bs{q}_2(r) - \bs{q}(r) }^2 \right] 
    \le& 5 h^2 \left\{ \frac{1}{4} \norm{\bs{x}_0}^2 + \frac{3\gamma d}{4} h + \frac{h^2}{3} L^2 \left( 2.33\norm{\bs{x}_0}^2 + 5.14 (\alpha + \gamma) dh \right) \right\}  \exp\{5\alpha^2 L^2 h^2\} \nonumber\\
    \le& 5 h^2 \left\{ \frac{1}{4} \norm{\bs{x}_0}^2 + \frac{3\gamma d}{4} h + \frac{h^2}{3} L^2 \left( 2.33\norm{\bs{x}_0}^2 + 5.14 (\alpha + \gamma) dh \right) \right\}  e^\frac{5}{32} \nonumber\\
    \le& 5.85 h^2 \left\{ 0.28 \norm{\bs{x}_0}^2 + (0.06\alpha + 0.81\gamma)h d \right\} \nonumber\\
    \le& h^2 \left\{ 1.64 \norm{\bs{x}_0}^2 + (0.36\alpha + 4.74\gamma)h d \right\}. \label{eq:tool2}
\end{align}
With bounds in Equation \eqref{eq:tool1} and \eqref{eq:tool2},  we are now ready to show $p_1$ and $p_2$. For $p_1$, i.e. the order of the mathematical expectation of deviation, we have
\begin{align*}
    &\norm{\mathbb{E}\left[ \begin{bmatrix}\hat{\bs{q}}(h) \\ \hat{\bs{p}}(h) \end{bmatrix} - \begin{bmatrix}\bs{q}(h) \\ \bs{p}(h) \end{bmatrix} \right] } \\
    \le& \norm{\mathbb{E}\left[ \hat{\bs{q}}(h)  -  \bs{q}(h) \right] } +  \norm{\mathbb{E}\left[ \hat{\bs{p}}(h)  -  \bs{p}(h) \right] } \\
    \le& \left( \alpha + \frac{1 - e^{-\gamma \frac{h}{2}}}{\gamma}  \right) \norm{ \int_0^h \mathbb{E} \left[ \nabla f(\bs{q}_2(s)) - \nabla f(\bs{q}(s)) \right]ds } + \norm{ \int_0^h \left( \frac{1 - e^{-\gamma (h - s)}}{\gamma} - \frac{1 - e^{-\gamma\frac{h}{2}}}{\gamma} \right) \mathbb{E}\left[\nabla f(\bs{q}(s))\right] ds }\\
    & + e^{-\gamma\frac{ h}{2}} \norm{\int_0^\frac{h}{2} \mathbb{E} \left[ \nabla f(\bs{q}_2(s)) - \nabla f(\bs{q}(s)) \right] ds} + \norm{\int_0^h \left( e^{-\gamma(h - s)} - e^{-\gamma \frac{h}{2}} \right)  \mathbb{E} \left[\nabla f(\bs{q}(s)) \right] ds } \\
    \le& \left( \alpha + 1 + \frac{h}{2}  \right) L \int_0^h \mathbb{E} \norm{ \bs{q}_2(s) - \bs{q}(s)} ds \\
    & + \int_0^h \left(\left| \frac{1 - e^{-\gamma (h - s)}}{\gamma} - \frac{1 - e^{-\gamma\frac{h}{2}}}{\gamma} \right| + \left|  e^{-\gamma(h - s)} - e^{-\gamma \frac{h}{2}} \right| \right) \norm{ \mathbb{E}\left[\nabla f(\bs{q}(s))\right]} ds \\
    \le& L\left( \alpha + 1 + \frac{h}{2}  \right) \int_0^h \mathbb{E} \norm{ \bs{q}_2(s) - \bs{q}(s) } ds \\
    & + \left\{ \left(\int_0^h \left| \frac{1 - e^{-\gamma (h - s)}}{\gamma} - \frac{1 - e^{-\gamma\frac{h}{2}}}{\gamma} \right|^2 ds\right)^\frac{1}{2} + \left( \int_0^h \left|  e^{-\gamma(h - s)} - e^{-\gamma \frac{h}{2}} \right|^2 ds \right)^\frac{1}{2} \right\} \left( \int_0^h \norm{ \mathbb{E}\left[\nabla f(\bs{q}(s))\right]}^2 ds \right)^\frac{1}{2} \\
    \le& L\left( \alpha + 1 + \frac{h}{2}  \right) \int_0^h \left(\mathbb{E} \norm{ \bs{q}_2(s) - \bs{q}(s) }^2 \right)^\frac{1}{2} ds + \frac{1 + \gamma}{2\sqrt{3}} h^\frac{3}{2} \left( \mathbb{E} \int_0^h \norm{ \left[\nabla f(\bs{q}(s))\right]}^2 ds \right)^\frac{1}{2} \\
    \le& L\left( \alpha + 1 + \frac{h}{2}  \right) h^2 \left\{ 1.64 \norm{\bs{x}_0}^2 + (0.36\alpha + 4.74\gamma)h d \right\}^\frac{1}{2} + \frac{1 + \gamma}{2\sqrt{3}} h^2 L \left( 2.33\norm{\bs{x}_0}^2 + 5.14 (\alpha + \gamma) dh \right)^\frac{1}{2} \\
    \le& L\left( \alpha + 1.25 \right) h^2 \left( 1.29 \norm{\bs{x}_0} + \sqrt{0.36\alpha + 4.74\gamma} \sqrt{h d} \right) + (1 + \gamma) h^2 L \left( 0.45\norm{\bs{x}_0} + \sqrt{0.43\alpha + 0.43\gamma} \sqrt{dh} \right) \\
    \le& L h^2 \max\{\alpha + 1.25, \gamma + 1\} \left(1.74 \norm{\bs{x}_0} + ( 1.26\sqrt{\alpha} + 2.84\sqrt{\gamma})\sqrt{hd}\right)
\end{align*}
The above derivation proves $p_1 = 2$ with 
\[
    \widehat{C}_1 = L \max\{\alpha + 1.25, \gamma + 1\} \left(1.74 \norm{\bs{x}_0} + ( 1.26\sqrt{\alpha} + 2.84\sqrt{\gamma})\sqrt{hd}\right).
\]

We now proceed with $p_2$, i.e. mean-square error
\begin{align*}
    &\mathbb{E} \norm{\begin{bmatrix}\hat{\bs{q}}(h) \\ \hat{\bs{p}}(h) \end{bmatrix} -  \begin{bmatrix}\bs{q}(h) \\ \bs{p}(h) \end{bmatrix}}^2 \\
    \le& 2\left( \alpha +  \frac{h}{2} \right)^2 \mathbb{E} \norm{\int_0^h \nabla f(\bs{q}_2(s)) - \nabla f(\bs{q}(s)) ds}^2 + 2\mathbb{E} \norm{ \int_0^h \left( \frac{1 - e^{-\gamma (h - s)}}{\gamma} - \frac{1 - e^{-\gamma\frac{h}{2}}}{\gamma} \right) \nabla f(\bs{q}(s)) ds }^2 \\
    & + 2 \mathbb{E} \norm{ \int_0^h \nabla f(\bs{q}_2(s)) - \nabla f(\bs{q}(s)) ds }^2 + 2 \mathbb{E} \norm{ \int_0^h \left( e^{-\gamma(h - s)} - e^{-\gamma \frac{h}{2}} \right) \nabla f(\bs{q}(s))ds }^2 \\
    \le& 2\left( (\alpha + \frac{h}{2})^2 + 1 \right) L^2 \mathbb{E} \left(\int_0^h |\bs{q}_2(s) - \bs{q}(s)| ds\right)^2 +  2\int_0^h \left| \frac{1 - e^{-\gamma (h - s)}}{\gamma} - \frac{1 - e^{-\gamma\frac{h}{2}}}{\gamma} \right|^2 ds \int_0^h \mathbb{E} \norm{\nabla f(\bs{q}(s))}^2 ds \\
    & + 2\int_0^h \left| e^{-\gamma(h - s)} - e^{-\gamma \frac{h}{2}} \right|^2 ds  \, \int_0^h \mathbb{E} \norm{ \nabla f(\bs{q}(s)) }^2 ds \\
    \le& 2\left( (\alpha + \frac{h}{2})^2 + 1 \right) L^2 h \int_0^h \mathbb{E}|\bs{q}_2(s) - \bs{q}(s)|^2 ds 
    + \frac{1 + \gamma^2}{6} h^3 \int_0^h \mathbb{E} |\nabla f(\bs{q}(s))|^2 ds\\
    \le& 2\left( (\alpha + \frac{h}{2})^2 + 1 \right) L^2  \left\{ 1.64\norm{\bs{x}_0}^2 + (0.36\alpha + 4.74\gamma) hd\right\} h^4 + \frac{1 + \gamma^2}{6} L^2 \left\{ 2.33\norm{\bs{x}_0}^2 + 5.14(\alpha + \gamma) hd \right\} h^4 \\
    \le& L^2 \max\{ (\alpha + 1.25)^2, 1 + \gamma^2\} \left( 3.67 \norm{\bs{x}_0}^2 + (1.68\alpha + 10.34\gamma)hd \right) h^4
\end{align*}
The above derivation implies $p_2 = 2$ with 
\[
    \widehat{C}_2 =  L \max\{ \alpha + 1.25, 1 + \gamma\} \left( 1.92 \norm{\bs{x}_0} + (1.30\sqrt{\alpha} + 3.22\sqrt{\gamma})\sqrt{hd} \right).
\]
\end{proof}

\subsection{Local error between Algorithm \ref{alg:HFHR} and the exact Strang's splitting method}
\label{sec_localError2}
\begin{lemma}\label{lemma:local-implementable}
Assume $f$ is $L$-smooth, $ \bs{0} \in \argmin_{\bs{x} \in \mathbb{R}^d} f(\bs{x})$, i.e. $\nabla f(\bs{0}) = \bs{0}$ and the operator $\nabla \Delta f$ grows at most linearly, i.e. $\norm{\nabla \Delta f(\bs{q})} \le G\sqrt{1 + \norm{\bs{q}}^2 }$. If $0 < h \le \frac{1}{4 L^\prime}$, then compared with the exact Strang's splitting method of HFHR dynamics, the implementable Strang's splitting method has local mathematical expectation of deviation of order $p_1 = 2$ and local mean-squared error of order $p_2 = 1.5$, i.e. there exist constants $\bar{C}_1, \bar{C}_2 > 0$ such that 
\[
    \norm{ \mathbb{E} \hat{\bs{x}}(h)  - \mathbb{E} \bar{\bs{x}}(h) } \le \bar{C}_1 h^{p_1}
\]
\[
    \left(\mathbb{E} \left[ \norm{ \hat{\bs{x}}(h) - \bar{\bs{x}}(h) }^2 \right]\right)^\frac{1}{2} \le \bar{C}_2 h^{p_2}
\]
where $\hat{\bs{x}}(h) = \begin{bmatrix} \hat{\bs{q}}(h) \\ \hat{\bs{p}}(h) \end{bmatrix}$ is the solution of the exact Strang's splitting method for HFHR with initial value $\bs{x}_0 = \begin{bmatrix} \bs{q}_0 \\ \bs{p}_0 \end{bmatrix}$ and $\bar{\bs{x}}(h) = \begin{bmatrix} \bar{\bs{q}}(h) \\ \bar{\bs{p}}(h) \end{bmatrix}$ is the one-step result of Algorithm \ref{alg:HFHR} with initial value $\bs{x}_0 = \begin{bmatrix} \bs{q}_0 \\ \bs{p}_0 \end{bmatrix}$, $p_1 = 2$ and $p_2 = 1.5$. More concretely, we have 
\[
    \bar{C}_1 = \alpha (\alpha + 1.125) (L+G) \left[ 0.5 + 0.71\norm{\bs{x}_0} + (1.14\sqrt{\alpha} + 0.21 \sqrt{\gamma} h) \sqrt{hd} \right]
\]
and 
\[
    \bar{C}_2 = L(\alpha + 0.73) \left( 2.30\sqrt{h} \alpha L \norm{\bs{x}_0} + (2.27\sqrt{\alpha} + 0.12 \sqrt{\gamma} h )\sqrt{d} \right).
\]
\end{lemma}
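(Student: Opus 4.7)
The key observation is that the exact Strang map $\hat{\bs{x}}(h)=\phi^{h/2}\circ\psi^{h}\circ\phi^{h/2}(\bs{x}_0)$ and the implementable one $\bar{\bs{x}}(h)=\phi^{h/2}\circ\tilde{\psi}^{h}\circ\phi^{h/2}(\bs{x}_0)$ share the same outer $\phi^{h/2}$ with identical Brownian increments and initial condition, and differ only in the middle step. Writing $\bs{x}_1=\phi^{h/2}(\bs{x}_0)$ and then $\hat{\bs{x}}_2=\psi^{h}(\bs{x}_1)$, $\bar{\bs{x}}_2=\tilde{\psi}^{h}(\bs{x}_1)$, the last $\phi^{h/2}$ is affine in its initial condition (its drift is linear in the state and its diffusion is constant), so driving it by a common Brownian motion yields
\[
\hat{\bs{x}}(h)-\bar{\bs{x}}(h)=N(h/2)\bigl(\hat{\bs{x}}_2-\bar{\bs{x}}_2\bigr),\qquad N(s)=\begin{bmatrix}I & \frac{1-e^{-\gamma s}}{\gamma}I\\ 0 & e^{-\gamma s}I\end{bmatrix}.
\]
Since $\|N(h/2)\|$ is bounded by a constant close to $1$ for small $h$, it suffices to estimate $\hat{\bs{x}}_2-\bar{\bs{x}}_2$ in the mean and mean-square senses; the corresponding bounds on $\hat{\bs{x}}(h)-\bar{\bs{x}}(h)$ follow immediately.

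Using a common Wiener process for both $\psi^{h}$ and $\tilde{\psi}^{h}$ started from $\bs{x}_1=(\bs{q}_1,\bs{p}_1)$, the difference takes the clean form
\[
\hat{\bs{q}}_2-\bar{\bs{q}}_2=-\alpha\int_0^h\!\bigl[\nabla f(\bs{q}(s))-\nabla f(\bs{q}_1)\bigr]ds,\qquad \hat{\bs{p}}_2-\bar{\bs{p}}_2=-\int_0^h\!\bigl[\nabla f(\bs{q}(s))-\nabla f(\bs{q}_1)\bigr]ds,
\]
where $\bs{q}(s)$ is the exact $\psi$-trajectory from $\bs{q}_1$. For the mean-square bound, Cauchy--Schwarz and $L$-smoothness give $\mathbb{E}\|\hat{\bs{p}}_2-\bar{\bs{p}}_2\|^2\le hL^2\!\int_0^h\!\mathbb{E}\|\bs{q}(s)-\bs{q}_1\|^2ds$, and Lemma \ref{lemma:growth-bound} applied to the $\psi$-SDE (drift $-\alpha\nabla f$ is $\alpha L$-Lipschitz, diffusion $\sqrt{2\alpha}I$ is constant) yields $\mathbb{E}\|\bs{q}(s)-\bs{q}_1\|^2\lesssim \alpha d\,s+(\alpha L\|\bs{q}_1\|)^2 s^2$. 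Integrating produces a bound of order $h^3\bigl(\alpha d+h\alpha^2L^2\|\bs{q}_1\|^2\bigr)$, whose square root is $O(h^{3/2})$, matching the claimed strong order $p_2=3/2$; the $\bs{q}$ component has the same analysis with one extra factor of $\alpha$.

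For the mean deviation, the extra order in $h$ comes from an It\^o expansion of $\nabla f(\bs{q}(s))$ along the $\psi$-flow, which cancels the martingale contribution:
\[
\mathbb{E}\bigl[\nabla f(\bs{q}(s))-\nabla f(\bs{q}_1)\bigr]=\int_0^s\!\mathbb{E}\Bigl[-\alpha\nabla^2 f(\bs{q}(u))\nabla f(\bs{q}(u))+\alpha\,\nabla\Delta f(\bs{q}(u))\Bigr]du.
\]
Using $L$-smoothness (hence $\|\nabla^2 f\nabla f\|\le L^2\|\bs{q}\|$, recalling $\nabla f(\bs{0})=\bs{0}$) together with the assumed linear growth $\|\nabla\Delta f(\bs{q})\|\le G\sqrt{1+\|\bs{q}\|^2}$, both summands are controlled by $\alpha s$ times a bounded moment of $\bs{q}$; the outer $\int_0^h$ then yields $\|\mathbb{E}[\hat{\bs{p}}_2-\bar{\bs{p}}_2]\|=O(\alpha h^2)$ and $\|\mathbb{E}[\hat{\bs{q}}_2-\bar{\bs{q}}_2]\|=O(\alpha^2 h^2)$, which gives the claimed weak order $p_1=2$.

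The main obstacle is not the qualitative orders but rather producing the explicit constants in the stated closed form. This will require (i) a uniform bound on $\mathbb{E}\|\bs{q}(s)\|^2$ on $[0,h]$ along the $\psi$-flow, again through Lemma \ref{lemma:growth-bound}; (ii) propagating $\|\bs{x}_1\|$ back to $\|\bs{x}_0\|$ via the explicit Gaussian solution of $\phi^{h/2}$, which is responsible for the $\sqrt{\gamma h d}$ contributions appearing in $\bar{C}_1$ and $\bar{C}_2$; and (iii) carefully collecting the factors of $\alpha$, $L$, and $G$ accumulated from the two It\^o-level estimates and the multiplication by $N(h/2)$. The linear-growth assumption on $\nabla\Delta f$ is precisely what enables the weak-order-$2$ estimate without invoking a Lipschitz condition on the Hessian.
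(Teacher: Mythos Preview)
Your proposal is correct and follows essentially the same route as the paper's proof: reduce to the middle $\psi$ vs.\ $\tilde\psi$ step (since the shared outer $\phi^{h/2}$ is affine with common noise), obtain the mean-square bound via Cauchy--Schwarz, $L$-smoothness, and Lemma~\ref{lemma:growth-bound} applied to the $\psi$-SDE, and obtain the weak-order-$2$ bound by an It\^o expansion of $\nabla f(\bs q(s))$ that exposes the $\nabla^2 f\,\nabla f$ and $\nabla\Delta f$ terms controlled by $L$-smoothness and the linear-growth assumption. The only cosmetic difference is that you package the final $\phi^{h/2}$ as multiplication by the matrix $N(h/2)$, whereas the paper writes the corresponding coordinate relations out explicitly before collecting constants.
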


\begin{proof}
The solution of one-step exact Strang's  splitting integrator with step size $h$ can be written as 
\begin{equation*}
    \begin{cases}
    \bs{q}_3 = \bs{q}_2(h) + \frac{1 - e^{-\gamma \frac{h}{2}}}{\gamma} \bs{p}_2(h) + \sqrt{2\gamma} \int_\frac{h}{2}^h \frac{1 - e^{-\gamma(h - s)}}{\gamma} d\bs{B}(s)\\
    \bs{p}_3 = e^{-\gamma \frac{h}{2}}\bs{p}_2(h) + \sqrt{2\gamma} \int_\frac{h}{2}^h e^{-\gamma (h - s)} d\bs{B}(s) \\
    \bs{q}_2(r) = \bs{q}_1 - \int_0^r \alpha \nabla f(\bs{q}_2(s)) ds + \sqrt{2\alpha}\int_0^r d\bs{W}(s)  \qquad (0 \le r \le h)\\
    \bs{p}_2(r) = \bs{p}_1 - \int_0^r \nabla f(\bs{q}_2(s)) ds \\
    \bs{q}_1 = \bs{q}_0 + \frac{1 - e^{-\gamma \frac{h}{2}}}{\gamma} \bs{p}_0 + \sqrt{2\gamma} \int_0^\frac{h}{2} \frac{1 - e^{-\gamma(\frac{h}{2} - s)}}{\gamma} d\bs{B}(s)\\
    \bs{p}_1 = e^{-\gamma \frac{h}{2}}\bs{p}_0 + \sqrt{2\gamma} \int_0^\frac{h}{2} e^{-\gamma (\frac{h}{2} - s)} d\bs{B}(s)
    \end{cases}
\end{equation*}
and the solution of one-step implementable Strang's splitting integrator with step size $h$ can be written as 
\begin{equation*}
    \begin{cases}
    \bar{\bs{q}}_3 = \bar{\bs{q}}_2(h) + \frac{1 - e^{-\gamma \frac{h}{2}}}{\gamma} \bar{\bs{p}}_2(h) + \sqrt{2\gamma} \int_0^\frac{h}{2} \frac{1 - e^{-\gamma(\frac{h}{2} - s)}}{\gamma} d\bs{B}(\frac{h}{2} + s)\\
    \bar{\bs{p}}_3 = e^{-\gamma \frac{h}{2}}\bar{\bs{p}}_2(h) + \sqrt{2\gamma} \int_0^\frac{h}{2} e^{-\gamma (\frac{h}{2} - s)} d\bs{B}(\frac{h}{2} + s) \\
    \bar{\bs{q}}_2(r) = \bs{q}_1 - \int_0^r \alpha \nabla f(\bs{q}_1) ds + \sqrt{2\alpha}\int_0^r d\bs{W}(s)  \qquad (0 \le r \le h)\\
    \bar{\bs{p}}_2(r) = \bs{p}_1 - \int_0^r \nabla f(\bs{q}_1) ds \\
    \bs{q}_1 = \bs{q}_0 + \frac{1 - e^{-\gamma \frac{h}{2}}}{\gamma} \bs{p}_0 + \sqrt{2\gamma} \int_0^\frac{h}{2} \frac{1 - e^{-\gamma(\frac{h}{2} - s)}}{\gamma} d\bs{B}(s)\\
    \bs{p}_1 = e^{-\gamma \frac{h}{2}}\bs{p}_0 + \sqrt{2\gamma} \int_0^\frac{h}{2} e^{-\gamma (\frac{h}{2} - s)} d\bs{B}(s)
    \end{cases}
\end{equation*}
Note that in the implementable Strang's splitting method, $\phi$ flow can be explicitly integrated and hence $\bs{q}_1, \bs{p}_1$ are the same as that in the exact Strang's splitting method.

First, we will bound the deviation of mathematical expectation and mean squared error of $\bs{q}_2(h) - \bar{\bs{q}}_2(h)$ and $\bs{p}_2(h) - \bar{\bs{p}}_2(h)$. We have 
\begin{equation}\label{eq: tmp-1}
    \begin{cases}
        \bs{q}_2(h) - \bar{\bs{q}}_2(h) =& -\alpha \int_0^h \nabla f(\bs{q}_2(s)) - \nabla f(\bs{q}_1) ds \\
        \bs{p}_2(h) - \bar{\bs{p}}_2(h) =& -\int_0^h \nabla f(\bs{q}_2(s)) - \nabla f(\bs{q}_1) ds 
    \end{cases}
\end{equation}

Square both sides of the first equation in \eqref{eq: tmp-1} and take expectation, we obtain
\begin{align*}
    \mathbb{E}\norm{\bs{q}_2(h) - \bar{\bs{q}}_2(h)}^2 =& \alpha^2 \mathbb{E}\norm{\int_0^h \nabla f(\bs{q}_2(s)) - \nabla f(\bs{q}_1) ds}^2 \\ 
    \le& \alpha^2 \mathbb{E}\left(\int_0^h \norm{\nabla f(\bs{q}_2(s)) - \nabla f(\bs{q}_1)} ds\right)^2 \\
    \le& \alpha^2 L^2 \mathbb{E}\left(\int_0^h \norm{\bs{q}_2(s) - \bs{q}_1} ds\right)^2 \\
    \le& \alpha^2 L^2 h \int_0^h \mathbb{E}\norm{\bs{q}_2(s) - \bs{q}_1}^2 ds
\end{align*}
Note that $\bs{q}_2$ is the solution of a rescaled overdamped Langevin dynamics whose drift vector field is $\alpha L$-Lipschitz, by conditional expectation version of Lemma \ref{lemma:growth-bound}, for $0 < h < \frac{1}{4L^\prime} < \frac{1}{4\alpha L}$, we have
$
    \mathbb{E}\norm{\bs{q}_2(h) - \bs{q}_1}^2 \le \bar{C}_0 h
$ with $\bar{C}_0 = 5.14\left\{ \alpha d +  h(\alpha L)^2 \mathbb{E}\norm{\bs{q}_1}^2 \right\}$ and it follows that 
\[  
    \begin{cases}
        \mathbb{E}\norm{\bs{q}_2(h) - \bar{\bs{q}}_2(h)}^2 \le& \alpha^2 L^2 \bar{C}_0  h^3 \\
        \mathbb{E}\norm{\bs{p}_2(h) - \bar{\bs{p}}_2(h)}^2 \le& L^2 \bar{C}_0  h^3.  
    \end{cases}
\]
Now consider $p_1$, i.e., the deviation of mathematical expectation. By Ito's lemma, we have
\begin{align}
    & \bs{q}_2(h) - \bar{\bs{q}}_2(h) \nonumber \\
    =& -\alpha \int_0^h \nabla f(\bs{q}_2(s)) - \nabla f(\bs{q}_1) ds \nonumber\\
    =& -\alpha \int_0^h \left[\int_0^s -\alpha \nabla^2 f(\bs{q}_2(r)) \nabla f(\bs{q}_2(r)) dr  +  \alpha \int_0^s \nabla \Delta f(\bs{q}_2(r)) dr + \rho \right]\, ds  \label{eq:tmp-2}
\end{align}
where $\rho$ is a stochastic integral term. Take expectation and norm for Equation \eqref{eq:tmp-2}, we have 
\begin{align*}
    & \norm{\mathbb{E} \left[ \bs{q}_2(h) - \bar{\bs{q}}_2(h)   \right]} \\
    =& \alpha^2 \norm{\int_0^h \mathbb{E}\left[\int_0^s  \nabla^2 f(\bs{q}_2(r)) \nabla f(\bs{q}_2(r)) dr  - \int_0^s \nabla \Delta f(\bs{q}_2(r)) dr\right]\, ds} \\
    \le& \alpha^2 \int_0^h \mathbb{E}\left[\int_0^s  \|\nabla^2 f(\bs{q}_2(r))\|_2 \norm{\nabla f(\bs{q}_2(r))} dr  +  \int_0^s \norm{\nabla \Delta f(\bs{q}_2(r))} dr\right]\, ds \\
    \le& \alpha^2 \int_0^h \mathbb{E}\left[L  \int_0^s  \norm{\bs{q}_2(r)} dr  +  \int_0^s G (1 + \norm{\bs{q}_2(r)}) dr\right]\, ds \\
    =& \alpha^2 (L + G) \int_0^h  \int_0^s \mathbb{E}\norm{\bs{q}_2(r)}dr + \alpha^2 G\frac{h^2}{2} \\
    \le& \alpha^2 (L + G) \int_0^h  \int_0^s \mathbb{E}\norm{\bs{q}_2(r) - \bs{q}_1} + \mathbb{E} \norm{\bs{q}_1}dr + \alpha^2 G\frac{h^2}{2} \\
    \le& \alpha^2 (L + G) \int_0^h  \int_0^s \sqrt{\mathbb{E}\norm{\bs{q}_2(r) - \bs{q}_1}^2} + \mathbb{E} \norm{\bs{q}_1}dr + \alpha^2 G\frac{h^2}{2} \\
    \le& \alpha^2 (L + G) \sqrt{\bar{C}_0 h}  \frac{h^2}{2} + \alpha^2(L + G) \frac{h^2}{2} \mathbb{E}\norm{\bs{q}_1} + \alpha^2 G\frac{h^2}{2} \\
    \le& \alpha^2 \left\{ \frac{\sqrt{\bar{C}_0 h} + \mathbb{E}\norm{\bs{q}_1}}{2} (L+G) + \frac{G}{2} \right\} h^2 \\
    \le&  \frac{1}{2}\alpha^2   (L+G)  \left\{\sqrt{\bar{C}_0 h} + \mathbb{E}\norm{\bs{q}_1} + 1\right\} h^2
\end{align*}

Similarly, we have $\norm{\mathbb{E} \left[ \bs{p}_2(h) - \bar{\bs{p}}_2(h)   \right]} \le \frac{1}{2}\alpha   (L+G)  \left\{\sqrt{\bar{C}_0 h} + \mathbb{E}\norm{\bs{q}_1} + 1\right\} h^2$.

For $p_2$, i.e., mean-square error,  we have
\begin{align*}
    \mathbb{E} \norm{ \bs{q}_2(h) - \bar{\bs{q}}_2(h)}^2 \le& \alpha^2 \mathbb{E}\left\{\int_0^h \norm{\nabla f(\bs{q}_2(s)) - \nabla f(\bs{q}_1)} ds\right\}^2 \\
    \le& \alpha^2 \mathbb{E} \left\{\int_0^h 1 ds \int_0^h  \norm{\nabla f(\bs{q}_2(s)) - \nabla f(\bs{q}_1)}^2 ds \right\} \\
    \le& \alpha^2 L^2 h \int_0^h  \mathbb{E}\norm{\bs{q}_2(s) - \bs{q}_1}^2 ds \\
    \le& \frac{\alpha^2 L^2 \bar{C}_0}{2}  h^3
\end{align*}
Similarly we obtain $\mathbb{E} \norm{ \bs{p}_2(h) - \bar{\bs{p}}_2(h)}^2 \le \frac{L^2 \bar{C}_0}{2} h^3$. Recall
\[
    \begin{cases}
        \bs{q}_3 - \bar{\bs{q}}_3 = \bs{q}_2(h) - \bar{\bs{q}}_2(h) + \frac{1 - e^{-\gamma \frac{h}{2}}}{\gamma} (\bs{p}_2(h) -\bar{\bs{p}}_2(h) ) \\
        \bs{p}_3 - \bar{\bs{p}}_3 = e^{-\gamma \frac{h}{2}} (\bs{p}_2(h) - \bar{\bs{p}}_2(h))
    \end{cases}.
\]
and it follows that when $0 < h \le \frac{1}{4L^\prime} < 1$
\begin{align}
    \norm{ \mathbb{E}\begin{bmatrix} \bs{q}_3 - \bar{\bs{q}}_3 \\ \bs{p}_3 - \bar{\bs{p}}_3 \end{bmatrix}} \le& \alpha (\alpha + 1 + \frac{h}{2}) (L+G) \frac{\sqrt{\bar{C}_0 h} + \mathbb{E}\norm{\bs{q}_1} + 1 }{2}  h^2 \label{eq:tool3}\\
    \mathbb{E} \norm{ \begin{bmatrix} \bs{q}_3 - \bar{\bs{q}}_3 \\ \bs{p}_3 - \bar{\bs{p}}_3 \end{bmatrix}}^2 \le& L^2 \bar{C}_0 \left( \alpha^2 + \frac{1}{2} + \frac{h^2}{4} \right) h^3  \label{eq:tool4}.
\end{align}
Finally we need to bound $\mathbb{E}\norm{\bs{q}_1}^2$ by $\mathbb{E}\norm{\bs{x}_0}^2$, to this end, we have
\begin{align}
    \mathbb{E}\norm{\bs{q}_1}^2 =& \mathbb{E}\norm{\bs{q}_0 + \frac{1 - e^{-\gamma \frac{h}{2}}}{\gamma} \bs{p}_0 + \sqrt{2\gamma} \int_0^\frac{h}{2} \frac{1 - e^{-\gamma(\frac{h}{2} - s)}}{\gamma} d\bs{B}(s)}^2 \nonumber\\
    \le& (1 + \frac{h^2}{4}) \mathbb{E} \norm{\bs{q}_0}^2 + (1 + \frac{h^2}{4}) \mathbb{E} \norm{\bs{p}_0}^2 +  2\gamma d \int_0^\frac{h}{2} \left(\frac{1 - e^{-\gamma (\frac{h}{2} - s)}}{\gamma} \right)^2 ds \nonumber\\
    \le& (1 + \frac{h^2}{4}) \mathbb{E} \norm{\bs{x}_0}^2 + \frac{\gamma d}{12}h^3 \\
    =& (1 + \frac{h^2}{4}) \norm{\bs{x}_0}^2 + \frac{\gamma d}{12}h^3  \label{eq:tool5} 
\end{align}
Collecting all pieces together, including \eqref{eq:tool3}, \eqref{eq:tool4}, \eqref{eq:tool5}, the definition of $\bar{C}_0$ and $0 < h < \frac{1}{4L^\prime}$, it is not difficult to obtain the following 
\begin{align*}
    \norm{ \mathbb{E}\begin{bmatrix} \bs{q}_3 - \bar{\bs{q}}_3 \\ \bs{p}_3 - \bar{\bs{p}}_3 \end{bmatrix}} \le& \bar{C}_1  h^2 \\
    \left(\mathbb{E} \norm{ \begin{bmatrix} \bs{q}_3 - \bar{\bs{q}}_3 \\ \bs{p}_3 - \bar{\bs{p}}_3 \end{bmatrix}}^2\right)^\frac{1}{2} \le& \bar{C}_2 h^\frac{3}{2}
\end{align*}
with 
\[
    \bar{C}_1 = \alpha (\alpha + 1.125) (L+G) \left[ 0.5 + 0.71\norm{\bs{x}_0} + (1.14\sqrt{\alpha} + 0.21 \sqrt{\gamma} h) \sqrt{hd} \right]
\]
and 
\[
\bar{C}_2 = L(\alpha + 0.73) \left( 2.30\sqrt{h} \alpha L \norm{\bs{x}_0} + (2.27\sqrt{\alpha} + 0.12 \sqrt{\gamma} h )\sqrt{d} \right)
\]

\end{proof}

\subsection{Local error between Algorithm \ref{alg:HFHR} and HFHR dynamics}
\label{sec_localError3}
\begin{lemma}\label{lemma:local}
Assume $f$ is $L$-smooth, $\bs{0} \in \argmin_{\bs{x} \in \mathbb{R}^d} f(\bs{x})$, i.e. $\nabla f(\bs{0}) = \bs{0}$ and the operator $\nabla \Delta f$ grows at most linearly, i.e. $\norm{\nabla \Delta f(\bs{q}) }\le G\sqrt{1 + \norm{\bs{q}}^2 }$. If $0 < h \le \frac{1}{4 L^\prime}$, then compared with the HFHR dynamics, the implementable Strang's splitting method has local weak error  of order $p_1 = 2$ and local mean-squared error of order $p_2 = 1.5$, i.e. there exist constants $C_1, C_2 > 0$ such that 
\[
    \norm{ \mathbb{E} \bs{x}(h)  - \mathbb{E} \bar{\bs{x}}(h) } \le C_1 h^{p_1}
\]
\[
    \left(\mathbb{E} \left[ \norm{ \bs{x}(h) - \bar{\bs{x}}(h) }^2 \right]\right)^\frac{1}{2} \le C_2 h^{p_2}
\]
where $\bs{x}(h) = \begin{bmatrix} \bs{q}(h) \\ \bs{p}(h) \end{bmatrix}$ is the solution of HFHR with initial value $\bs{x}_0 = \begin{bmatrix} \bs{q}_0 \\ \bs{p}_0 \end{bmatrix}$ and $\bar{\bs{x}}(h) = \begin{bmatrix} \bar{\bs{q}}(h) \\ \bar{\bs{p}}(h) \end{bmatrix}$ is the solution of the implementable Strang's splitting with initial value $\bs{x}_0 = \begin{bmatrix} \bs{q}_0 \\ \bs{p}_0 \end{bmatrix}$, $p_1 = 2$ and $p_2 = 1.5$. More concretely, we have 
\[
    C_1 = (L+G) \max\{\alpha + 1.25, \gamma + 1\} \left[ 0.5\alpha + (1.74 + 0.71\alpha)\norm{\bs{x}_0} + \left( 1.26\sqrt{\alpha} + 1.14\alpha \sqrt{\alpha} + 2.32\sqrt{\gamma}\right) \sqrt{hd} \right]
\]
and 
\[
    C_2 = L \max\{\alpha + 1.25, \gamma + 1\} \left[ (1.92 + 2.30\alpha L) \sqrt{h} \norm{\bs{x}_0} + (2.60\sqrt{\alpha} + 3.34\sqrt{\gamma} h) \sqrt{d} \right]
\]
\end{lemma}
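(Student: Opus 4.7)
The natural approach is to introduce the exact Strang splitting $\hat{\bs{x}}(h)$ (the one where the $\psi$ flow is solved exactly) as an intermediate reference point, and then combine the two previously established local error bounds via the triangle inequality. Concretely, I would write
\[
    \bs{x}(h) - \bar{\bs{x}}(h) \;=\; \bigl(\bs{x}(h) - \hat{\bs{x}}(h)\bigr) + \bigl(\hat{\bs{x}}(h) - \bar{\bs{x}}(h)\bigr),
\]
where the first summand is controlled by Lemma \ref{lemma:local-exact} (giving weak and strong orders $p_1 = p_2 = 2$ with explicit constants $\widehat{C}_1,\widehat{C}_2$), and the second summand is controlled by Lemma \ref{lemma:local-implementable} (giving $p_1 = 2$, $p_2 = 1.5$ with explicit constants $\bar{C}_1, \bar{C}_2$). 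Both prior lemmas are stated under the same hypotheses on $f$, $\nabla\Delta f$, and on $h \le 1/(4L')$, so they apply directly under the hypotheses of Lemma \ref{lemma:local}.

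For the weak-error bound I would simply take expectations, apply the triangle inequality, and invoke the two prior lemmas:
\[
\bigl\|\mathbb{E}\bs{x}(h) - \mathbb{E}\bar{\bs{x}}(h)\bigr\| \;\le\; \bigl\|\mathbb{E}\bs{x}(h) - \mathbb{E}\hat{\bs{x}}(h)\bigr\| + \bigl\|\mathbb{E}\hat{\bs{x}}(h) - \mathbb{E}\bar{\bs{x}}(h)\bigr\| \;\le\; (\widehat{C}_1 + \bar{C}_1)\, h^2,
\]
which already gives $p_1 = 2$. For the mean-squared bound I would apply Minkowski's inequality in $L^2(\mathbb{P})$:
\[
\bigl(\mathbb{E}\|\bs{x}(h) - \bar{\bs{x}}(h)\|^2\bigr)^{1/2} \;\le\; \widehat{C}_2 h^2 + \bar{C}_2 h^{3/2}.
\]
Here the remark after Lemma \ref{lemma:lipschitz} gives $L' \ge 1$, so $h \le 1/(4L') \le 1/4 < 1$, which lets me absorb $\widehat{C}_2 h^2 = \widehat{C}_2 \sqrt{h}\cdot h^{3/2} \le \widehat{C}_2 h^{3/2}$, yielding the claimed rate $p_2 = 3/2$ with a constant of the form $\widehat{C}_2 + \bar{C}_2$.

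The only nontrivial part is then the cosmetic bookkeeping needed to massage the sums $\widehat{C}_1 + \bar{C}_1$ and $\widehat{C}_2 + \bar{C}_2$ into the explicit closed forms for $C_1$ and $C_2$ stated in the lemma. This step uses the elementary inequalities $L \le L + G$, $\alpha(\alpha + 1.125) \le \alpha \max\{\alpha + 1.25,\gamma + 1\}$, and $h \le h_0 \le 1/(4L') \le 1$ to collapse subleading $h$-factors into the larger coefficients (for instance, the term proportional to $0.21\sqrt{\gamma}\, h \sqrt{hd}$ in $\bar{C}_1$ can be merged into the $\sqrt{\gamma}\sqrt{hd}$ coefficient). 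I expect this grouping to be routine but tedious; the main potential obstacle is not mathematical but notational, namely tracking the dimension-dependence ($\sqrt{d}$ appears only through the noise contributions) and the $\alpha$-dependence so that the final prefactor matches the stated $C_1$ and $C_2$ verbatim, and in particular so that the $\sqrt{d}$-scaling claimed in Theorem \ref{thm:discretization} propagates cleanly through the subsequent global-in-time argument.
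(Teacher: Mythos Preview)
Your proposal is correct and essentially identical to the paper's proof: the paper also splits through the exact Strang iterate $\hat{\bs{x}}(h)$, applies the triangle inequality for the mean-deviation bound and Minkowski's inequality for the mean-squared bound, cites Lemmas~\ref{lemma:local-exact} and~\ref{lemma:local-implementable}, and then performs the same routine constant-collapsing (using $h\le 1/(4L')<1$) to obtain the displayed $C_1,C_2$.
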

\begin{proof}
Denote by $\hat{\bs{x}}(h) = \begin{bmatrix} \hat{\bs{q}}(h) \\ \hat{\bs{p}}(h) \end{bmatrix}$ the solution of the exact Strang's splitting method with initial value $\bs{x}_0 = \begin{bmatrix} \bs{q}_0 \\ \bs{p}_0 \end{bmatrix}$. By triangle inequality and Minkowski's inequality, we have
\begin{align*}
    \norm{\mathbb{E} \bs{x}(h) - \mathbb{E}\bar{\bs{x}}(h)} \le& \norm{\mathbb{E} \bs{x}(h) - \mathbb{E}\hat{\bs{x}}(h)} + \norm{\mathbb{E} \hat{\bs{x}}(h) - \mathbb{E}\bar{\bs{x}}(h)}, \\
    \left(  \mathbb{E} \norm{\bs{x}(h) - \bar{\bs{x}}(h)}^2 \right)^\frac{1}{2} \le& \left(  \mathbb{E} \norm{\bs{x}(h) - \hat{\bs{x}}(h)}^2 \right)^\frac{1}{2} + \left(  \mathbb{E} \norm{\hat{\bs{x}}(h) - \bar{\bs{x}}(h)}^2 \right)^\frac{1}{2}.
\end{align*}
By Lemma \ref{lemma:local-exact} and \ref{lemma:local-implementable}, we have
\begin{align*}
     \norm{\mathbb{E} \bs{x}(h) - \mathbb{E}\hat{\bs{x}}(h)} \le \widehat{C}_1 h^2, &\quad \norm{\mathbb{E}  \hat{\bs{x}}(h) - \mathbb{E}\bar{\bs{x}}(h)} \le \bar{C}_1 h^2 \\
     \left(  \mathbb{E} \norm{\bs{x}(h) - \hat{\bs{x}}(h)}^2 \right)^\frac{1}{2} \le \widehat{C}_2 h^\frac{3}{2}, &\quad \left(  \mathbb{E} \norm{\hat{\bs{x}}(h) - \bar{\bs{x}}(h)}^2 \right)^\frac{1}{2} \le \bar{C}_2 h^\frac{3}{2}
\end{align*}
and hence
\begin{align*}
    \norm{\mathbb{E} \bs{x}(h) - \mathbb{E}\bar{\bs{x}}(h)} \le& (\widehat{C}_1 + \bar{C}_1) h^2 \\
    \left(  \mathbb{E} \norm{\bs{x}(h) - \bar{\bs{x}}(h)}^2 \right)^\frac{1}{2} \le& (\widehat{C}_2 + \bar{C}_2) h^\frac{3}{2}
\end{align*}
with 
\begin{align*}
    \widehat{C}_1 + \bar{C}_1 \le&  C_1  \\
    \triangleq& (L+G) \max\{\alpha + 1.25, \gamma + 1\} \left[ 0.5\alpha + (1.74 + 0.71\alpha)\norm{\bs{x}_0} + \left( 1.26\sqrt{\alpha} + 1.14\alpha \sqrt{\alpha} + 2.32\sqrt{\gamma}\right) \sqrt{hd} \right] \\
    \widehat{C}_2 + \bar{C}_2 \le& C_2 \triangleq L \max\{\alpha + 1.25, \gamma + 1\} \left[ (1.92 + 2.30\alpha L) \sqrt{h} \norm{\bs{x}_0} + (2.60\sqrt{\alpha} + 3.34\sqrt{\gamma} h) \sqrt{d} \right]
\end{align*}

\end{proof}

\section{$\alpha$ does create acceleration even after discretization: an analytical demonstration}
\label{sec_onAlpha}
If $\alpha \to \infty$ while $\gamma$ remains fixed, then $dq=-\alpha \nabla f(q)+\sqrt{2\alpha}dW$ is the dominant part of the dynamics, and in this case the role of $\alpha$ could be intuitively understood as to simply rescale the time of gradient flow, which does not create any algorithmic advantage, as the timestep of discretization has to scale like $1/\alpha$ in this case. However, finite $\alpha$ no longer corresponds to solely a time-scaling, but closely couples with the dynamics and creates acceleration. This is true even after the continuous dynamics is discretized by an algorithm .

We will analytically illustrate this point by considering quadratic $f$. In this case, the diffusion process remains Gaussian, and it suffices to quantify the convergence of its mean and covariance. In fact, it can be shown that both have the same speed of convergence, and therefore for simplicity we will only consider the mean process. Two demonstrations (with different focuses) will be provided.

\paragraph{Demonstration 1 (1D, $\gamma$ given; infinite acceleration).}
Consider $f(x)=x^2/2$, $\gamma$ fixed. The mean process is
\[ \begin{cases}
    \dot{q} &= p-\alpha q \\
    \dot{p} &= -q-\gamma p
\end{cases} \]
Consider, for simplicity, an Euler-Maruyama discretization of the HFHR dynamics, which coressponds to a Forward Euler discretization of the mean process (other numerical methods can be analyzed analogously):
\[ \begin{bmatrix} q_{k+1} \\ p_{k+1} \end{bmatrix} =
A \begin{bmatrix} q_k \\ p_k \end{bmatrix}, \qquad
A=\begin{bmatrix} 1-\alpha h & h \\ -h & 1-\gamma h \end{bmatrix}.
\]
We will show that, unless $\gamma=2$, an appropriately chosen $\alpha$ will converge infinitely faster than the case with $\alpha=0$, if both cases use the optimal $h$.

To do so, let us compute $A$'s eigenvalues, which are
\[
\frac{1}{2} \left( 2 - (\alpha+\gamma)h \pm h \sqrt{-4+(\alpha-\gamma)^2} \right)
\]
Consider the case where $|\alpha-\gamma|\leq 2$, then the eigenvalues are a pair of complex conjugates. Their modulus 
determines the speed of convergence, and it can be computed to be
\[
    \frac{1}{2}\sqrt{(2-(\alpha+\gamma)h)^2+h^2(4-(\alpha-\gamma)^2)} = \sqrt{1-(\alpha+\gamma)h+(1+\alpha\gamma)h^2}
\]
Minimizing the quadratic function gives the optimal $h$ that ensures the fastest speed of convergence, and the optimal $h$ is
\[
  h=\frac{\alpha+\gamma}{2(1+\alpha\gamma)}
\]
and the optimal spectral radius is
\[
    \sqrt{1-\frac{(\alpha+\gamma)^2}{4(1+\alpha\gamma)}}.
\]
When one uses low-resolution ODE, in which $\alpha=0$, the optimal rate is $1-\gamma^2/4$ (note it is not surprising that the critically damped case, i.e., $\gamma=2$, will give the fastest convergence).

If $\gamma\neq 2$, the additional introduction of $\alpha$ can accelerate the convergence by reducing the spectral radius. For instance, if $\alpha=\gamma+2$, upon choosing the optimal $h=\frac{1}{1+\gamma}$, the optimal spectral radius is 0 (note in this case $A$ actually has Jordan canonical form of $\begin{bmatrix} 0 & 1 \\ 0 & 0 \end{bmatrix}$ and thus the discretization converges in 2 steps instead of 1, irrespective of the initial condition).

\paragraph{Demonstration 2 (multi-dim, $\gamma$, $\alpha$ and $h$ all to be chosen; acceleration quantified in terms of condition number).}

Consider quadratic $f$ with positive definite Hessian, whose eigenvalues are $1=\lambda_1 < \cdots < \lambda_n = \epsilon^{-1}$ for some $0 < \epsilon \ll 1$. Assume without loss of generality that $f=q_1^2/2+\epsilon^{-1} q_2^2/2$. Similar to Demonstration 1, the forward Euler discretization of the mean process is
\begin{equation} \begin{bmatrix} q_{1,k+1} \\ p_{1, k+1} \\ q_{2,k+1} \\ p_{2, k+1} \end{bmatrix} = 
\begin{bmatrix} A_1 & 0 \\ 0 & A_2 \end{bmatrix} \begin{bmatrix} q_{1,k} \\ p_{1, k} \\ q_{2,k} \\ p_{2, k} \end{bmatrix}, \quad
A_1=\begin{bmatrix} 1-\alpha h & h \\ -h & 1-\gamma h \end{bmatrix}, \quad
A_2=\begin{bmatrix} 1-\alpha \epsilon^{-1} h & h \\ -\epsilon^{-1} h & 1-\gamma h \end{bmatrix}
\label{eq_conditionNumberSec_iter}
\end{equation}
We will (i) find $h$ and $\gamma$ that lead to fastest convergence of the ULD discretization, i.e. the above iteration with $\alpha=0$, and then (ii) constructively show the existence of $h$, $\gamma$ and $\alpha$ that lead to faster convergence than the optimal one in (i) --- note these may not even be the optimal choices for HFHR, but they already lead to significant acceleration. More specifically,

\textbf{\underline{(i)}} In a ULD setup, $\alpha=0$. It can be computed that the eigenvalues of $A_1$ and $A_2$ are respectively
\[
\frac{1}{2} \left( 2-h\gamma\pm h\sqrt{-4+\gamma^2} \right)   \qquad\text{and}\qquad
\frac{1}{2} \left( 2-h\gamma\pm h\sqrt{-4\epsilon^{-1}+\gamma^2} \right)
\]
We now seek $\gamma>0,h>0$ to minimize the maximum of their norms for obtaining the optimal convergence rate. This is done in cases.

\noindent \underline{Case (i1)} When $\gamma \leq 2$, both $A_1$ and $A_2$ eigenvalues are complex conjugate pairs. To minimize the maximum of their norms, let's first see if their norms could be made equal.

$A_1$ eigenvalue's norm squared $\times 4$ is
\begin{equation}
    (2-h\gamma)^2-h^2(-4+\gamma^2) = 4(h-\gamma/2)^2+4-\gamma^2
    \label{ugv9317ughoiuhoig1314}
\end{equation}
$A_2$ eigenvalue's norm squared $\times 4$ is
\begin{equation}
    (2-h\gamma)^2-h^2(-4\epsilon^{-1}+\gamma^2) = 4\epsilon^{-1}(h-\epsilon\gamma/2)^2+4-\epsilon\gamma^2
    \label{qbogoyiuhb31ogyiubo1}
\end{equation}
It can be seen that for \eqref{ugv9317ughoiuhoig1314} is always strictly smaller than \eqref{qbogoyiuhb31ogyiubo1} for any $h>0$. Therefore, the max of the two is minimized when $h=\epsilon \gamma/2$, and the corresponding max value is $4-\epsilon \gamma^2$. $\gamma$ that minimizes this max value is $\gamma=2$. Corresponding rate of convergence is
\[
    \sqrt{1-\epsilon}.
\]

\noindent \underline{Case (i2)} When $\gamma \geq 2\epsilon^{-1/2}$, both $A_1$ and $A_2$ eigenvalues are real. Since $\epsilon\ll 1$, we can order them$\times 2$ as
\[
    2-h\gamma-h\sqrt{-4+\gamma^2} < 
    2-h\gamma-h\sqrt{-4\epsilon^{-1}+\gamma^2} <
    2-h\gamma+h\sqrt{-4\epsilon^{-1}+\gamma^2} <
    2-h\gamma+h\sqrt{-4+\gamma^2} < 2.
\]
To minimize the max of their norms, consider cases in which the smallest of four is negative, in which case at optimum one should have
\[
    -(2-h\gamma-h\sqrt{-4+\gamma^2}) = 2-h\gamma+h\sqrt{-4+\gamma^2}.
\]
This gives $h=2/\gamma$ (which does verify the assumption that the smallest of four is negative). Corresponding max of their norms is thus $\sqrt{1-4/\gamma^2}$. $\gamma$ that minimizes this max value is $\gamma=2\epsilon^{-1/2}$, which gives rate of convergence of
\[
    \sqrt{1-\epsilon}.
\]

\noindent \underline{Case (i3)} When $2 \leq \gamma \leq 2\epsilon^{-1/2}$, $A_1$ eigenvalues are real and $A_2$ eigenvalues are complex conjugates. Again, the max of their norms is minimized if the norms can be made all equal.

Note $A_1$ eigenvalues cannot be of the same sign, because otherwise $2-h\gamma-h\sqrt{-4+\gamma^2}=2-h\gamma+h\sqrt{-4+\gamma^2}$, which means either $h=0$ or $\gamma=2$, but if $\gamma=2$ then $2-h\gamma+h\sqrt{-4+\gamma^2}$ being equal to 2*norm of $A_2$ eigenvalue, which is $\sqrt{4\epsilon^{-1}(h-\epsilon\gamma/2)^2+4-\epsilon\gamma^2}$, leads to $h=0$ again.

Therefore, the equality of norms of $A_1$, $A_2$ eigenvalues means
\[
  -(2-h\gamma-h\sqrt{-4+\gamma^2})
  =2-h\gamma+h\sqrt{-4+\gamma^2}
  =\sqrt{4\epsilon^{-1}(h-\epsilon\gamma/2)^2+4-\epsilon\gamma^2}.
\]
The first equality gives $h\gamma=2$, which, together with the second equality, gives $h=\pm\sqrt{\frac{2\epsilon}{1+\epsilon}}$. Selecting the positive value of optimal $h$, we also obtain optimal $\gamma=\sqrt{2(1+\epsilon)}\epsilon^{-1/2}$, which is $\leq 2 \epsilon^{-1/2}$ and thus satisfying our assumption ($2 \leq \gamma \leq 2\epsilon^{-1/2}$). The corresponding rate of convergence is thus
\[
    \frac{1}{2} \left(2-h\gamma+h\sqrt{-4+\gamma^2}\right) = \sqrt{\frac{1-\epsilon}{1+\epsilon}}.
\]

\noindent \underline{Summary of (i)} Since $\sqrt{\frac{1-\epsilon}{1+\epsilon}} < \sqrt{1-\epsilon}$, the ULD Euler-Maruyama discretization converges the fastest when
\[
    h=\sqrt{\frac{2\epsilon}{1+\epsilon}}, \qquad
    \gamma=\sqrt{2(1+\epsilon)}\epsilon^{-1/2},
\]
and the corresponding discount factor of convergence (i.e. base of exponential convergence) is
\begin{equation}
    \sqrt{\frac{1-\epsilon}{1+\epsilon}},   \qquad \text{where $\epsilon=1/\kappa$ with $\kappa$ being Hessian's condition number.}
    \label{eq_ULDoptimalRate_conditionNum}
\end{equation}

\textbf{\underline{(ii)}} Now consider the HFHR setup. Let's first state a result: when
\begin{align}
    & \gamma=\frac{\sqrt{4 c^2 \epsilon ^4+8 c^2 \epsilon ^3+4 c^2 \epsilon ^2+\epsilon ^2-2 \epsilon +1}+\epsilon +3}{2 c \epsilon ^2+2 c \epsilon} > 0, \\
    & \alpha=\frac{-\sqrt{4 c^2 \epsilon ^4+8 c^2 \epsilon ^3+4 c^2 \epsilon ^2+\epsilon ^2-2 \epsilon +1}+3 \epsilon +1}{2 c \epsilon ^2+2 c \epsilon} > 0, \qquad
    h=c \epsilon
    \label{eq_HFHRnonoptimalParameters}
\end{align}
for any $c>0$ independent of $\epsilon$, the iteration \eqref{eq_conditionNumberSec_iter} converges with discount factor
\begin{equation}
    \frac{1}{\sqrt{2} (1+\epsilon)} \sqrt{(1-\epsilon) \left(1-\epsilon+\sqrt{4 c^2 \epsilon ^4+8 c^2 \epsilon ^3+\left(4 c^2+1\right) \epsilon ^2-2 \epsilon +1}\right)}.
    \label{eq_HFHRnonoptimalRate_conditionNum}
\end{equation}
While the exact expression is lengthy, it can proved that the HFHR non-optimal discount factor \eqref{eq_HFHRnonoptimalRate_conditionNum} is strictly smaller than the ULD optimal discount factor \eqref{eq_ULDoptimalRate_conditionNum} for not only small but also large $\epsilon$'s.

For some quantitative intuition, discount factors respectively have the following Taylor expansions in $\epsilon$:
\begin{align}
    & \text{HFHR non-optimal:}   &&
    1-2 \epsilon +\left(\frac{c^2}{2}+2\right) \epsilon ^2 +\mathcal{O}\left(\epsilon ^3\right) \qquad\qquad \\
    & \text{ULD optimal:}     && 1-\epsilon+\frac{\epsilon^2}{2} +\mathcal{O}\left(\epsilon ^3\right) \qquad\qquad
\end{align}

\begin{figure}[h]
    \vspace{-135pt}
    \centering
    \includegraphics[width=0.8\textwidth]{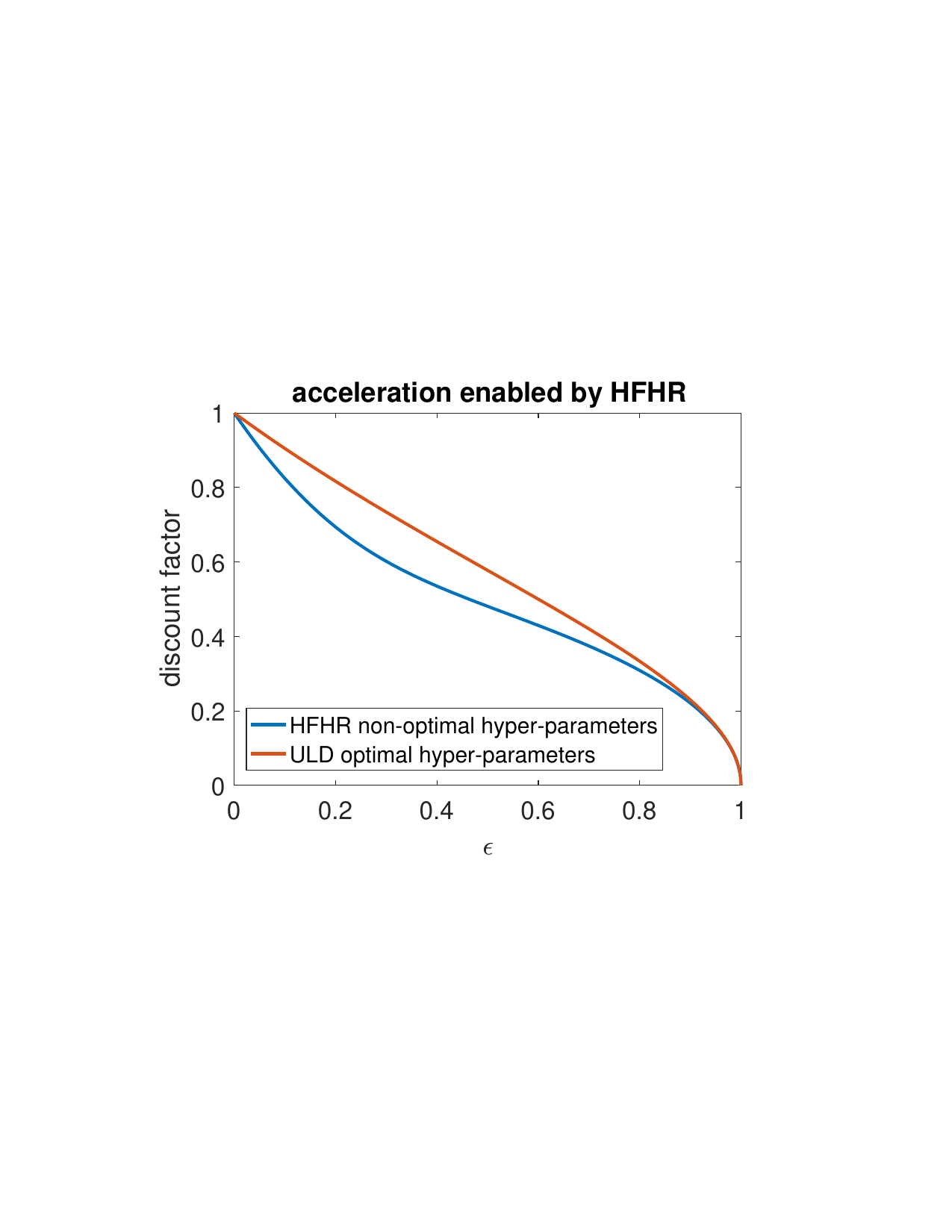}
    \vspace{-135pt}
    \caption{Acceleration of HFHR algorithm over ULD algorithm (despite of an additional constraint $\alpha$ may place on $h$) for multi-dimensional quadratic objectives. $1/\epsilon$ is the condition number.}
    \label{fig_discountFactors}
\end{figure}
The exact expressions of discount factors are also plotted  in Fig.\ref{fig_discountFactors} ($c=1$ was arbitrarily chosen) and one can see acceleration for any (not necessarily small) $\epsilon$.

\textbf{\underline{(ii details)}} How were values in \eqref{eq_HFHRnonoptimalParameters} chosen? Following the idea detailed in (i), we consider a case where $A_1$ eigenvalues are both real, $A_2$ eigenvalues are complex conjugates, and all their norms are equal. Note there are 3 more cases, namely real/real, complex/real, and complex/complex, but we do not optimize over all cases for simplicity --- the real/complex case is enough for outperforming the optimal ULD.

This case leads to at least the following equations
\begin{equation} \begin{cases}
    \text{tr} A_1 &= 0 \\
    \det A_1 + \det A_2 &= 0
\end{cases} 
\label{agijqhrogubvqorehgb97413hbt8gob}
\end{equation}
One can solve this system of equations to obtain $\alpha$ and $\gamma$ as functions of $h$. Following the idea of choosing $h$ small enough to resolve the stiffness of the ODE
\[\begin{cases}
    \dot{q}_2 &= p_2 - \alpha \epsilon^{-1} q_2 \\
    \dot{p}_2 &= -\epsilon^{-1} q_2 - \gamma p_2
\end{cases},\]
pick $h=c \epsilon$. Then \eqref{agijqhrogubvqorehgb97413hbt8gob} gives
\begin{align*}
    & \gamma=\frac{\sqrt{4 c^2 \epsilon ^4+8 c^2 \epsilon ^3+4 c^2 \epsilon ^2+\epsilon ^2-2 \epsilon +1}+\epsilon +3}{2 c \epsilon ^2+2 c \epsilon} \\
    & \alpha=\frac{-\sqrt{4 c^2 \epsilon ^4+8 c^2 \epsilon ^3+4 c^2 \epsilon ^2+\epsilon ^2-2 \epsilon +1}+3 \epsilon +1}{2 c \epsilon ^2+2 c \epsilon}
\end{align*}
or
\begin{align*}
    & \gamma = \frac{-\sqrt{4 c^2 \epsilon ^4+8 c^2 \epsilon ^3+4 c^2 \epsilon ^2+\epsilon ^2-2 \epsilon +1}+\epsilon +3}{2 c \epsilon ^2+2 c \epsilon } \\
    & \alpha = \frac{\sqrt{4 c^2 \epsilon ^4+8 c^2 \epsilon ^3+4 c^2 \epsilon ^2+\epsilon ^2-2 \epsilon +1}+3 \epsilon +1}{2 c \epsilon ^2+2 c \epsilon }
\end{align*}
The former is our choice \eqref{eq_HFHRnonoptimalParameters} because it can be checked that the latter leads to $\det A_1 > 0$ which violates the assumption of a pair of plus and minus real eigenvalues.

It is possible to find optimal $\alpha,\gamma,h$ for HFHR for the Gaussian cases. One has to minimize $\det A_2$ under the constraint $\det A_2 > 0$ in addition to \eqref{agijqhrogubvqorehgb97413hbt8gob}. And then do similar calculations for the other 3 cases, and then finally the best among the 4 cases. Doing so however does not give enough insights to determine optimal hyperparameters for sampling general distributions.

\section{Randomized Midpoint Discretization of HFHR}
\label{sec:RMA_HFHR}
\subsection{The algorithm}
HFHR is based on a continuous dynamics that adds HFHR corrections to the Underdamped Langevin Dynamics (ULD). It can be turned into a sampling algorithm via either a low-order time discretization (e.g., HFHR Algorithm \ref{alg:HFHR}) or a more accurate one. To complement the main text, this section demonstrates the latter, based on a powerful recent progress in discretizing ULD, known as Randomized Midpoint Algorithm (RMA) \citep{shen2019randomized}, and shows that the acceleration created by the HFHR correction terms persists.

More specifically, RMA is a high-order discretization scheme for ULD that achieved a better $\mathcal{O}(d^\frac{1}{3})$ dimension dependence of mixing time than first-order discretization of ULD, e.g., 1st-order KLMC \citep{dalalyan2018sampling}. Although RMA is originally designed specifically for ULD only, it is a general idea and already adapted to overdamped Langevin \citep{he2020ergodicity}. Here we show it can be easily adapted to HFHR as well, as illustrated by the following Algorithm \ref{alg:HFHR-RMA}. Red highlights algorithmic changes we made to account for the HFHR corrections of ULD.

\begin{algorithm}[h]
\caption{Randomized Midpoint Algorithm from \cite{shen2019randomized}, adapted for HFHR}\label{alg:HFHR-RMA}
\begin{algorithmic}[1]
\STATE \textbf{Input}: potential function $f$ and its gradient $\nabla f$, damping coefficients $\alpha$ and $\gamma$, step size $h$, initial condition $(\qbold_0, \pbold_0)$
\PROCEDURE{RMA-HFHR}{$f, \nabla f, \alpha, \gamma, h, \qbold_0, \pbold_0$}
    \STATE $k=0$ and initialize $\begin{bmatrix} \bs{q}_0 \\ \bs{p}_0 \end{bmatrix}$
    \WHILE{not converged}
        \STATE Generate an independent uniform random variable $\theta_k \sim U(0,1)$
        \STATE Generate Gaussian random vectors $\left( \bs{W}^1_{k+1}, \bs{W}^2_{k+1}, \bs{W}^3_{k+1} \right) \in \mathbb{R}^{3d}$ as in \citep[Appendix A]{shen2019randomized}
        \STATE 
        {\color{red} Generate Gaussian random vectors $\bs{B}^1_{k+1}, 
        \bs{B}^2_{k+1} \in \mathbb{R}^d$ as described by \eqref{eq:gfvudihgo8u134yb4ogfv1li}}
        \STATE $\bs{q}_{k+\frac{1}{2}} = \bs{q}_k + \frac{1}{\gamma}(1-e^{-\gamma\theta_k h})\bs{p}_k - \frac{1}{\gamma}\left(\theta_k h - \frac{1}{\gamma}(1-e^{-\gamma\theta_k h})\right) \nabla f(\bs{q}_k) + \bs{W}^1_{k+1} $ $\color{red} - \alpha \theta_k h \nabla f(\bs{q}_k) + \sqrt{2\alpha} \bs{B}^1_{k+1} $
        \STATE $\bs{q}_{k+1} = \bs{q}_k + \frac{1}{\gamma}(1-e^{-\gamma h}) \bs{p}_k - \frac{1}{\gamma}h(1-e^{-\gamma(h-\theta_k h)}) \nabla f(\bs{q}_{k+\frac{1}{2}}) + \bs{W}^2_{k+1} \color{red} - \alpha h \nabla f(\bs{q}_{k+\frac{1}{2}}) + \sqrt{2\alpha} (\bs{B}^1_{k+1} + \bs{B}^2_{k+1})$
        \STATE $\bs{p}_{k+1} = \bs{p}_{k} e^{-\gamma h} - h e^{-\gamma(h-\theta_k h)} \nabla f(\bs{q}_{k+\frac{1}{2}}) + 2\bs{W}^3_{k+1}$
        \STATE $k \gets k + 1$
    \ENDWHILE
\ENDPROCEDURE
\end{algorithmic}
\end{algorithm}
The red parts basically correspond to two Euler-Maruyama time-steppings of an auxiliary dynamics that contains only the HFHR correction terms
\begin{equation}
    d\bs{q} = -\alpha \nabla f(\bs{q}) dt + \sqrt{2\alpha} d\bs{B}_t,
    \label{eq:HFHRcorrectionOnly}
\end{equation}
first over a $\theta_k h$ timestep, and then over an $h$ timestep. These two steps originate from an operator splitting treatment of the full HFHR dynamics (eq.\ref{eq:HFHR}), which is split into ULD and 
\eqref{eq:HFHRcorrectionOnly}. Therefore, it is natural to see that
\[
    \bs{B}_{k+1}^1 = \int_{hk}^{h(k+\theta_k)} d\bs{B}_t, \qquad
    \bs{B}_{k+1}^2 = \int_{h(k+\theta_k)}^{h(k+1)} d\bs{B}_t,
\]
and therefore $B_{k+1}^1$ and $B_{k+1}^2$ are, when conditioned on $\theta_k$, centered Gaussian vectors independent from each other and the $\bs{W}$'s, each being $d$-dimensional with i.i.d. entries, and they can be generated via
\begin{equation}
    \bs{B}_{k+1}^1 = \sqrt{\theta_k h} \bs{\xi}_{k+1}^1, \qquad
    \bs{B}_{k+1}^2 = \sqrt{h-\theta_k h} \bs{\xi}_{k+1}^2,
    \label{eq:gfvudihgo8u134yb4ogfv1li}
\end{equation}
where $\bs{\xi}_{k+1}^1$ and $\bs{\xi}_{k+1}^2$ are i.i.d. standard d-dimensional Gaussian vectors.

\begin{remark}
In the original RMA \citep[Algorithm 1]{shen2019randomized}, the uniform random variable for the midpoint's proportional location was denoted by $\alpha$. However, since we have already used this letter for the HFHR correction coefficient, we use instead $\theta$ to denote this uniform random variable. \end{remark}

\begin{remark}
From the red text, it is easy to see that if $\alpha = 0$, Algorithm \ref{alg:HFHR-RMA} degenerates to RMA for ULD. Nevertheless, Algorithm \ref{alg:HFHR-RMA} is again just one RMA discretization of HFHR but not the only one.
\end{remark}

\subsection{Numerical results: HFHR again accelerates}
To numerically compare the RMA discretization of HFHR dynamics and ULD dynamics (note we don't compare 1st-order HFHR Algorithm \ref{alg:HFHR} with RMA-ULD as we'd like to compare apple with apple), we conduct an experiment very similar to that in Sec.\ref{subsec:verify_algorithm}, with the same nonlinear potential function. We run both RMA for ULD and RMA for HFHR with dimension $d=10$,  initial value $(100 \times \bs{1}_d, \bs{0}_d)$, $h=1$ (chosen to be near the stability limit of RMA-ULD), a family of $\gamma \in \{0.1, 0.2, 0.5, 1, 2, 5, 10, 20, 50, 100\}$ and $\alpha \in \{$0, 0.001, 0.002, 0.005, 0.01, 0.02, 0.05, 0.1, 0.2, 0.5, 0.55, 0.6, 0.65,0.7,0.75,0.8,0.85,0.9,0.95, 1, 2, 5, 10, 20, 50, 100$\}$. For each algorithm and each set of parameter values, we run 1,000 independent realizations to compute statistics and estimate the mean time of reaching $\varepsilon=0.1$ neighborhood of the target distribution. Then, for each $\alpha$ (including $\alpha=0$, which is the original RMA), we optimize over $\gamma$ choices to get the best results. To further reduce variance, we also repeat the experiment with 100 different random seeds.
 
Too large $\alpha$ values with which Algorithm \ref{alg:HFHR-RMA} fails to reach $\epsilon$-neighborhood are not plotted and the final results are shown in Figure \ref{fig:HFHR_RMA_iteration_complexity}. It clearly suggests that with appropriated chosen $\alpha$ ($\alpha=0.5$ in our case), RMA discretized HFHR dynamics requires fewer iterations than RMA discretized ULD, which suggests a better iteration complexity. 
\begin{figure}[h]
    \centering
    \includegraphics[width=0.8\textwidth]{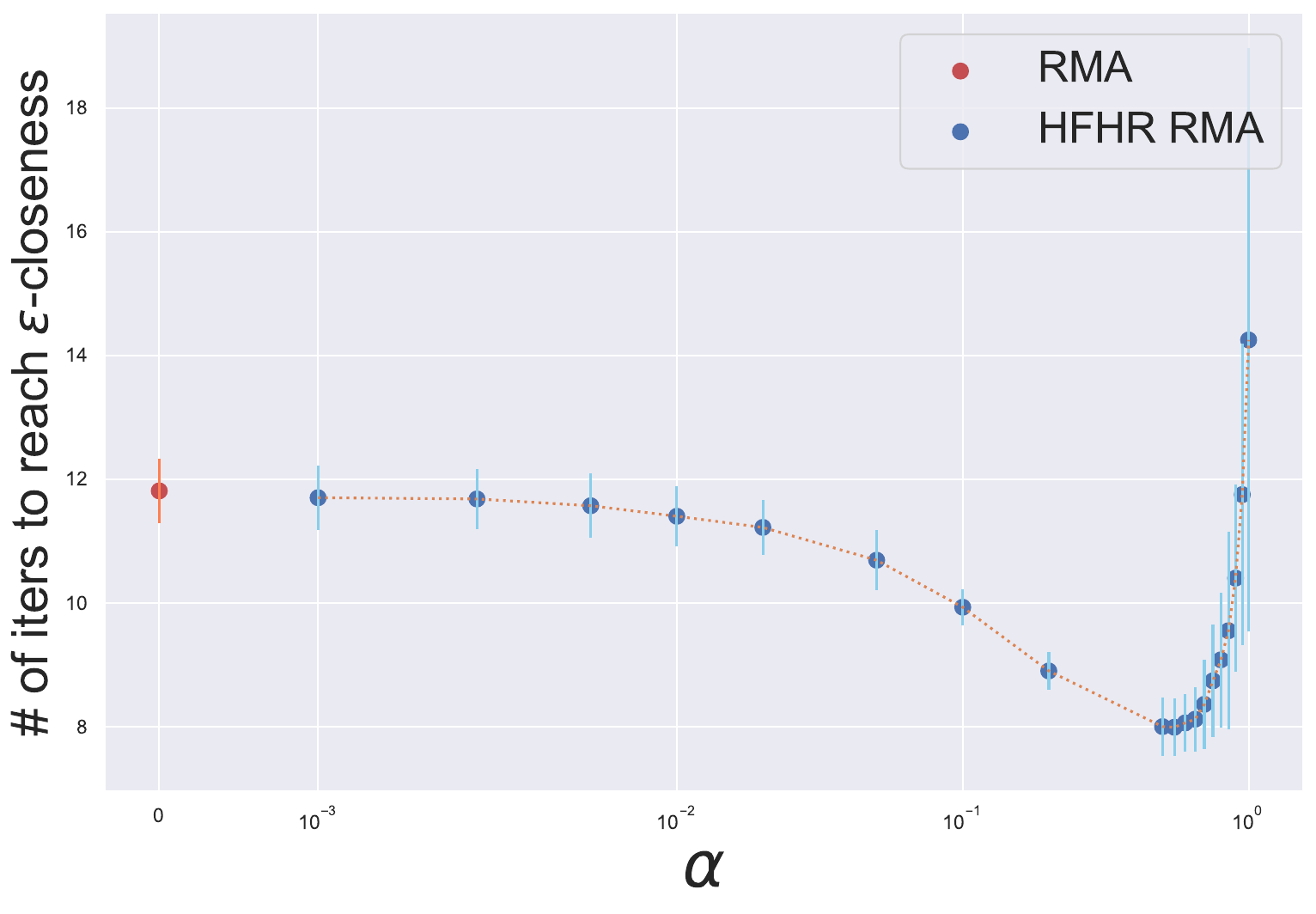}
    \caption{Improvement of RMA for HFHR (Algorithm \ref{alg:HFHR-RMA}) over the original RMA (for ULD) in iteration  complexity. (vertical bar = 1 standard deviation)}
    \label{fig:HFHR_RMA_iteration_complexity}
\end{figure}
 
\end{document}